\documentclass[journal]{IEEEtran}

\usepackage{xspace,amsmath,amssymb,amsthm,amsfonts,dsfont,pifont}
\usepackage{bm, url, textcomp, enumitem}
\usepackage{algorithmicx,algorithm,algpseudocode}
\usepackage{stmaryrd}
\usepackage{empheq,float}
\usepackage{graphicx,subfigure,balance}
\usepackage{multirow}
\usepackage{float,tcolorbox}
\usepackage{xcolor}
\usepackage{yhmath}
\usepackage{tabulary}
\usepackage{colortbl}
\usepackage{multicol,multirow}
\usepackage{lipsum}

\usepackage{minted}
\usepackage{listings}
\lstdefinestyle{py}{
  language=Python,
  basicstyle=\ttfamily\small,
  keywordstyle=\bfseries,
  commentstyle=\itshape,
  stringstyle=\ttfamily,
  showstringspaces=false,
  numbers=left,
  numberstyle=\tiny,
  stepnumber=1,
  numbersep=8pt,
  frame=single,
  breaklines=true,
  tabsize=4
}

\newtheorem{proposition}{Proposition}

\newtheorem{example}{Example}

\newtheorem{theorem}{Theorem}

\theoremstyle{definition}

\DeclareMathOperator*{\argmin}{arg\,min}
\graphicspath{{./figs/}}

\newcommand{\x}{\mathbf{x}}
\newcommand{\y}{\mathbf{y}}

\newcommand{\I}{\mathbf{I}}
\newcommand{\X}{\mathbf{X}}
\newcommand{\Y}{\mathbf{Y}}
\newcommand{\A}{\mathbf{A}}
\newcommand{\U}{\mathbf{U}}
\newcommand{\V}{\mathbf{V}}

\newcommand{\Q}{\mathbf{Q}}
\newcommand{\B}{\mathbf{B}}
\newcommand{\M}{\mathbf{M}}

\newcommand{\W}{\mathbf{W}}

\newcommand{\bfP}{\mathbf{P}}
\newcommand{\E}{\mathbf{E}}
\newcommand{\F}{\mathbf{F}}

\newcommand{\bR}{\mathbb{R}}

\newcommand{\bfS}{\mathbf{S}}
\newcommand{\bfPhi}{\mathbf{\Phi}}
\newcommand{\bfPsi}{\mathbf{\Psi}}
\newcommand{\bfSigma}{\mathbf{\Sigma}}
\newcommand{\bfTheta}{\mathbf{\Theta}}

\newcommand{\bfa}{\mathbf{a}}
\newcommand{\bfb}{\mathbf{b}}

\newcommand{\G}{\mathbf{G}}
\newcommand{\m}{\boldsymbol{m}}
\DeclareMathOperator{\diag}{diag}
\newcommand{\rank}{\mathsf{rank}}

\newcommand{\tr}{\mathsf{Tr}}
\newcommand{\fro}{\mathsf{F}}
\newcommand{\st}{\mathsf{St}}
\newcommand{\Rgrad}{\mathsf{grad}}

\usepackage{titletoc}

\usepackage{makecell}
\usepackage{microtype}
\usepackage{graphicx}
\usepackage{subfigure}
\usepackage{booktabs} 
\usepackage[colorlinks,
            linkcolor=blue,
            anchorcolor=blue,
            citecolor=teal,
            ]{hyperref}

\title{Low-Rank Adaptation Redux for Large Models
}

\author{Bingcong Li, \IEEEmembership{Member,~IEEE}, Yilang Zhang, Georgios B. Giannakis,~\IEEEmembership{Fellow,~IEEE}
\thanks{Research in this paper was supported in part by the NSF grants 2212318, 2220292, and 2312547; B. Li is now with the Dept. of CS, ETH Z{\"u}rich, Switzerland; G. B. Giannakis and Y. Zhang are with the Dept. of ECE, U. of Minnesota, USA; Y. Zhang is now with Morgan Stanley, USA. Emails: bingcong.li@inf.ethz.ch, yilang.zhang@morganstanley.com, georgios@umn.edu}
\thanks{\today}}

\vspace{-0.1cm}

\begin{document}

\markboth{IEEE TRANSACTIONS ON SIGNAL PROCESSING (submitted)}{}
\maketitle

\begin{abstract}
Low-rank adaptation (LoRA) has emerged as the de facto standard for parameter-efficient fine-tuning (PEFT) of foundation models, enabling the adaptation of billion-parameter networks with minimal computational and memory overhead. Despite its empirical success and rapid proliferation of variants, it remains elusive which architectural choices, optimization techniques, and deployment constraints should guide practical method selection.
This overview revisits LoRA through the lens of signal processing (SP), bridging modern adapter designs with classical low-rank modeling tools and inverse problems, as well as highlighting how SP principles can inform principled advances of fine-tuning approaches. 
Rather than providing a comprehensive enumeration and empirical comparisons of LoRA variants, emphasis is placed on the technical mechanisms underpinning these approaches to justify their effectiveness. 
These advances are categorized into three complementary axes: architectural design, efficient optimization, and pertinent  applications. The first axis builds on singular value decomposition (SVD)-based factorization, rank-augmentation constructions, and cross-layer tensorization, while the second axis deals with initialization, alternating solvers, gauge-invariant optimization, and parameterization-aware methods. 
Beyond fine-tuning, emerging applications of LoRA are accounted across the entire lifecycle of large models, ranging from pre- and post-training to serving/deployment. 
Finally, open research directions are outlined at the confluence of SP and deep learning to catalyze a bidirectional frontier: classical SP tools provide a principled vocabulary for designing principled PEFT methods, while the unique challenges facing modern deep learning, especially the overwhelming scale and prohibitive overhead, also offer new research lines benefiting the SP community in return.
\end{abstract}

\begin{IEEEkeywords}
	low rank, LLMs, fine-tuning, optimization
\end{IEEEkeywords}

\vspace{-0.2cm}
\section{Introduction}
Recent advances of large language models (LLMs) in artificial intelligence (AI) have unlocked unprecedented capabilities across a broad spectrum of fields. 
State-of-the-art models such as GPT~\cite{GPT1,GPT2,GPT3}, LLaMA~\cite{llama, llama2, llama3}, Gemma~\cite{gemma,team2024gemma,team2025gemma}, and QWen~\cite{qwen,qwen3,qwen3max} involve billions of parameters. There are even larger ones such as Qwen3-Max~\cite{qwen3max} that have surpassed the trillion-parameter milestone, enabling complicated tasks including code generation, reasoning, and autonomous agents. 
Pre-training to learn model parameters at this scale typically requires sustained large-cluster computation and vast Internet-scale corpora, which remain accessible only to a small number of organizations.

A more accessible way to leverage LLMs is to start from an open-weight pre-trained model (e.g., via Hugging Face) and fine-tune it on user- or domain-specific data \cite{peft}. This approach preserves the broad capabilities of LLMs learned during pre-training, while adapting the model to user data with altered distributions.
However, even on moderate-size datasets, full fine-tuning of LLMs with billions of parameters is still formidable, given the considerable resources required.
In practice, fine-tuning a large model by updating all model parameters entails exaFLOPs ($\times 10^{18}$) of \emph{computation} and hundreds of GPU hours, incurring non-negligible financial and energy costs. This makes full fine-tuning infeasible for individual users or small institutes. 
\emph{Memory} is an equally critical bottleneck. A widely used rule of thumb is about 16 GB of GPU RAM per billion parameters in half precision~\cite{modal2024_vram_finetuning}. 
Using this estimate, fine-tuning an 8B-parameter model demands $\sim$128 GB of memory, far exceeding the capacity of standard commodity hardware. 
Scaling to models with over 100B parameters pushes the requirement to the terabyte scale, necessitating distributed training across large GPU clusters.

\emph{Parameter-efficient fine-tuning} (PEFT)~\cite{houlsby2019,li2021prefix,hu2021lora,peft} offers a compute- and memory-frugal alternative to full fine-tuning. Instead of updating all pre-trained weights, PEFT freezes the backbone and optimizes only a small set of additional parameters. Among these methods, \emph{low-rank adaptation} (LoRA)~\cite{hu2021lora} has gained widespread prominence due to its simplicity and strong empirical performance. Concretely, LoRA augments the parameters over layer with an additive low-rank update $\X\Y^\top$, where $\X$ and $\Y$ are small trainable (so-termed ``adapter'') matrices. This can reduce the number of trainable parameters by over $100\times$ relative to full fine-tuning, substantially lowering both compute and memory costs. Adapter matrices are lightweight, often accounting for $\sim 1\% - 2\%$ of the entire model, and can be stored, shared, and swapped across tasks and users with minimal overhead. This modularity has further accelerated adoption in open-source communities; large models such as LLaMA-7B can now be fine-tuned by individual users on a single consumer GPU of 24 GB memory~\cite{zhao2024galore}.
Consequently, LoRA has become the standard approach for fine-tuning large models~\cite{peft}, spawning dozens of variants that further impel efficiency or alleviate limitations. 

LoRA and its variants have been adopted across a wide range of foundation-model applications, from LLM-based coding and mathematical reasoning~\cite{hu2021lora} to diffusion-based image generation~\cite{gu2023mix}, as well as multi-modal systems that jointly handle language, images, video, and audio~\cite{abouelenin2025phi}. This ubiquity has in turn catalyzed LoRA’s expansion beyond standard downstream fine-tuning. In particular, recent advances have designed efficient LoRA variants across the \emph{full lifecycle} of foundation models, that includes pre-training, post-training, and deployment from the providers to serve the users~\cite{lialin2023relora,schulman2025lora,dettmers2024qlora,sheng2024slora}.

However, the rapid proliferation of the literature has made it difficult to discern what first-principles guidelines should inform the design and application of fast, efficient, and practical LoRA algorithms across diverse real-world settings, especially for researchers new to this field.
One natural bridge is to revisit these questions through the lens where low-rank methodologies have been systematically studied and applied for decades. Classical tools in signal processing (SP) such as principal component analysis (PCA)~\cite{pearson1901,Robust-PCA}, subspace tracking~\cite{low-rank-subspace}, and matrix/tensor decompositions~\cite{tensor-decomp} demonstrate the power of leveraging low-dimensional structure hidden within high-dimensional data. Not only offering computational savings and statistical robustness, these tools also underpin applications ranging from communications, imaging sciences to graph learning; see e.g.,~\cite{mardani2015subspace,tensor-decomp,sidiropoulos2017tensor}. Cross-fertilizing the SP community’s expertise in low-rank modeling to develop more principled LoRA methods offers an impactful direction.

This survey recaps recent LoRA advances through an SP lens.
Similar to SP, where progress is often driven by coupling structured parameterizations with algorithmic insights, we group LoRA variants in two broad categories depending on: \textit{architecture (model) designs}, and \textit{tailored optimization algorithms}. We will start with a brief recapitulation of low-rank modeling and relevant SP tools, which provide a principled vocabulary for interpreting LoRA variants and suggesting novel designs. Next, we review representative LoRA methods,  emphasizing methodological insights over numerical results, since results are often difficult to compare fairly across heterogeneous models, datasets, and evaluation protocols. Our goal is to highlight underlying ideas transferred across settings without being overly anchored to specific benchmarks. Finally, we will outline emerging applications of LoRA beyond standard fine-tuning, that span serving, pre- and post-training. Our survey will be concluded with open directions in cross-fertilizing SP and LoRA research.
Beyond the unique SP angle, our survey also differs from existing reviews in several aspects. While~\cite{han2024parameter,wang2025parameter} overview PEFT, we focus on LoRA to enable detailed treatment of its design space. Compared to~\cite{AI-SPedu}, which bridges AI and SP through LLM-facilitated education, we connect the two communities from a low-rank perspective. Unlike~\cite{mao2025survey} which catalogs LoRA variants, we emphasize efficient mechanisms, especially from optimization and geometry perspectives, so that our discussion leans more toward “why” rather than “what.”

The remainder of this survey is organized as follows. Section~\ref{Sec.recap-SP} recaps low-rank approaches in SP. Section~\ref{Sec.LoRA} provides an overview for LoRA, and the link between standard LoRA and low-rank matrix sensing. Subsequently, Section~\ref{Sec.architectures} discusses efficient architectural model parameterizations beyond the Burer-Monteiro factorization used in vanilla LoRA, followed by efficient optimization approaches in Section~\ref{Sec.opt}. The joint designs of architecture and optimization schemes will lead to high-performance fine-tuning approaches. Section~\ref{Sec.applications} deals with applications of LoRA, ranging from fine-tuning, pre-training, and serving, to broader use cases in diffusion and multi-modal models. The survey is concluded with future directions that not only bring classical SP tools to LoRA, but also bounce LoRA ideas back to the SP community. 

\textbf{Notation}. Bold lowercase, uppercase, and calligraphic letters (e.g., $\mathbf{a}$, $\A$, $\bm{\mathcal{A}}$) denote column vectors, matrices, and tensors, respectively; 
$(\cdot)^\top$ and $\| \cdot \|_\fro$ refer to transposition and Frobenius norm of a matrix; $\| \cdot \|$ is the $\ell_2$ (spectrum) norm of a vector (matrix); $\rank()$ denotes the rank of a matrix, while $\odot$ and $\otimes$ stand for Hadamard and Kronecker products.

\section{Low-rank in SP}
\label{Sec.recap-SP}

Low-rank learning has been a cornerstone of the SP research for decades, long predating the rise of modern deep learning~\cite{pearson1901,lanczos1950}. Given the ubiquity of low-rank structure in real-world problems, SP has documented success leveraging the low-rank feature across a broad spectrum of applications, underpinned by interpretable modeling assumptions, mature statistical theory, and efficient implementations with clear computational benefits. This success has also produced a rich algorithmic toolbox. For convex objective functions, nuclear-norm minimization and its variants provide principled formulations with rigorous recovery guarantees~\cite{recht2010guaranteed}. For nonconvex objectives, factor models based on Burer–Monteiro (BM) parameterizations~\cite{burer2003}, combined with alternating  schemes~\cite{jain2013low}, (projected) gradient descent~\cite{chen2015fast,hardt2014fast,ma2018implicit}, and Riemannian optimization on low-rank manifolds~\cite{mishra2014fixed,keshavan2010matrix}, enable scalability in large-scale settings. Due to space limitations, we do not attempt a comprehensive review of this extensive literature, and instead provide a brief account for context of the LoRA methods.

\subsection{Learning low-rank subspaces} 
Subspace methods explain high-dimensional observations using a small number of latent components. For example, principal component analysis (PCA)~\cite{pearson1901,candes2011robust,Robust-PCA} formalizes the idea that a data ensemble concentrates near a low-dimensional linear subspace, and is closely related to the Karhunen–Lo\`eve (KL) expansion viewpoint for stochastic processes. This perspective has also been extended via kernel methods to incorporate nonlinearity~\cite{scholkopf1997kernel}. Canonical correlation analysis (CCA)~\cite{hotelling1992relations} extracts maximally correlated low-dimensional representations from two or more data sets, whereas independent component analysis (ICA)~\cite{comon1994independent,hyvarinen2000independent} identifies statistically independent latent sources. 
Dictionary learning~\cite{olshausen1996emergence} generalizes fixed-basis subspace by adaptively learning an overcomplete set of atoms from data, with which each observation can be expressed as a sparse linear combination of these atoms. Going beyond a single global subspace, subspace clustering addresses the problem of grouping data that lie in a union of multiple low-dimensional subspaces. Multidimensional scaling (MDS)~\cite{young2013multidimensional} embeds data points into a low-dimensional Euclidean space while preserving pairwise distance or dissimilarity. In contrast, local linear embedding (LLE)~\cite{tenenbaum2000global} performs nonlinear dimensionality reduction by preserving local geometry, which represents each data point as a weighted combination of its neighbors, and optimizes the subspace to maintain these reconstruction weights. 
These subspace-based techniques have impacted a wide range of applications, including face recognition~\cite{turk1991eigenfaces}, data visualization~\cite{baingana2014embedding}, multi-view~\cite{multiview-CCA} and multi-modal learning~\cite{andrew2013deep}, and blind source separation~\cite{blind-source-separation}.

Recovering a ground-truth signal from its limited and noisy linear measurements is one of the central themes in SP. When the unknown signal is a low-rank matrix $\A \in \bR^{m \times n}$, these inverse problems are known as \emph{matrix sensing} under general linear measurements~\cite{candes2006near}, or low-rank \emph{matrix completion} when the measurements are partially observed entries~\cite{recht2010guaranteed}. When the two factor matrices are non-negative, the problem is termed non-negative matrix factorization (NFM)~\cite{lee1999learning}.
Such low-rank structures are intrinsic to many real-world applications, and have been extensively exploited across areas such as phase retrieval~\cite{candes2013phaselift}, quantum state tomography~\cite{gross2010quantum}, image inpainting~\cite{jin2015annihilating}, recommender systems~\cite{koren2009matrix}, sensor network localization~\cite{ding2008sensor}, and wireless communications~\cite{kalogerias2013matrix}. 
While low rank can be accounted via convex relaxations such as nuclear-norm~\cite{recht2010guaranteed}), a scalable approach is to rely on the BM factorization~\cite{burer2003}, which optimizes over two smaller factor matrices $\X \in \bR^{m\times r}$ and $\Y \in \bR^{n\times r}$ such that $\X\Y^\top \approx \A$.
The associated factorization task has been studied from multiple perspectives, including the loss function landscape~\cite{ge2016matrix,ge2017no}, efficient algorithms with convergence guarantees~\cite{recht2010guaranteed,chi2019nonconvex}, and statistical properties such as identifiability and finite-sample error rates~\cite{recht2010guaranteed, negahban2011estimation}. It will be argued later that LoRA is structurally isomorphic to BM factorization. Thus, the rich literature on matrix sensing and completion offers a powerful analytical basis for understanding the effectiveness of LoRA.

\subsection{Learning low-rank tensors}

Another SP perspective, where low-rank structure plays a central role, is tensor modeling. While tensor algebra generalizes many familiar matrix concepts, it also introduces important differences. For instance, determining tensor rank is NP-hard in general, and unlike the matrix case, a best low-rank approximation of a higher-rank tensor may not exist. These characteristics are closely tied to the fact that there are multiple definitions of tensor rank, together with richer identifiability of factor models and more intricate optimization approaches. For comprehensive background, the reader is referred  to relevant SP tutorials such as~\cite{sidiropoulos2017tensor,kolda2009tensor}; nevertheless, the present survey assumes no extensive prior knowledge of tensors.

Two of the most widely adopted low-rank tensor representations are canonical polyadic (CP), and Tucker decompositions. The former was introduced in~\cite{hitchcock1927expression,hitchcock1928multiple}, and later rediscovered under abbreviated names such as PARAFAC, CANDECOMP, or CP form~\cite{kolda2009tensor}. It expresses a tensor as a sum of rank-one tensors, extending the familiar rank-one expansion of matrices to high-order data arrays (slabs); the formal expression will be provided later when discussing tensorized adapters. Alternatively, Tucker decompositions~\cite{tucker1964extension,de2000multilinear} can be viewed as a high-order generalization of the matrix singular value decomposition (SVD), where a tensor is represented with mode-wise factor matrices together with a smaller core tensor.

Low-rank tensor models have been widely adopted in both SP and machine learning, with applications to coding in telecommunications~\cite{sidiropoulos2002blind}, community detection in social networks~\cite{cichocki2015tensor}, and topic modeling~\cite{anandkumar2014tensor}. Classical matrix recovery tasks such as sensing and completion have also been generalized to tensor settings, addressing a range of higher-order inverse problems; see, e.g.,~\cite{kanatsoulis2019tensor,zhang2016exact,gandy2011tensor}. Methodologically, tensor low-rank structures can be promoted via convex surrogates in a manner analogous to the matrix case~\cite{gandy2011tensor}, but scalable methods typically optimize over explicit factorized representations such as CP and Tucker decompositions. The resultant nonconvex optimization solvers leverage the loss function geometry to establish statistical guarantees~\cite{dong2023fast,ge2021understanding}. Besides low rank, SP modeling approaches also account for sparsity features; see for example~\cite{PCP,Robust-PCA,low-rank-sparse}. 

Given that LoRA also imposes low-rank structure in the parameter space of LLMs for efficient fine-tuning, it naturally shares a common foundation with these SP methods. 
From an SP viewpoint, however, a central distinction is that classical SP problems are often concrete and well-defined, whereas LLMs remain considerably more ambiguous given our still limited theoretical and empirical understanding. 
Prompted by the link as well as the gap, the present survey re-examines the PEFT variants to distill a self-contained account that bridges these two fields. 
Our goal is twofold: (i) bring SP insights to design principled and well-justified fine-tuning methods for LLMs; and (ii) cross-fertilize ideas developed in modern deep learning to inspire innovative approaches to low-rank SP problems.

\section{Low-rank in fine-tuning LLMs}
\label{Sec.LoRA}
This section reviews LoRA, compares it with other PEFT methods, and highlights open-source codebases that implement LoRA and its variants. Our goal is a streamlined guide for uninitiated readers. We also demonstrate that classical matrix sensing can be viewed as LoRA applied to a single linear layer, which offers a simple testbed to bridge the two communities.

\subsection{LoRA recap}
The majority of LLM parameterizations comprise   weight matrices of their linear layers. Taking GPT-3 as an example, the model involves multiple Transformer blocks, each containing 6 linear mappings with associated weight matrices: the query $\mathbf{Q}$, key $\mathbf{K}$, value $\mathbf{V}$, and output $\mathbf{O}$ pertaining to the (self-)attention module, and two feed-forward network (FFN) weights $\mathbf{F}_1$ and $\mathbf{F}_2$ (also known as the up- and down-projection)~\cite{GPT3}; see also Figure~\ref{fig.lora}. 
Full fine-tuning updates all six matrices, whose dimensions are huge, typically in the order of millions each. Given the sheer scale of modern LLMs, this approach incurs prohibitive memory and computational costs in resource-limited settings, which scale at least linearly with model size\footnote{Exact dependence also relies on hyper-parameters such as sequence length.}. 

Motivated by the empirical observations that adapting within a low-rank subspace can retain most of the performance of full fine-tuning~\cite{hu2021lora}, LoRA freezes the pre-trained linear layers, and postulates that the task-specific weight updates are approximately low-rank. For notational simplicity, we do not distinguish individual layer types, but instead represent a pre-trained weight matrix by $\W_l \in \bR^{m_l \times n_l}$, where $l$ indexes the linear layers across Transformer blocks. With $\Delta \W_l$ denoting the additive update to the given $\W_l$, fine-tuning under this constraint of low-rank $r$ can be expressed as
\begin{equation}\label{eq.problem-low-rank}
\begin{aligned}
	\min_{\{\Delta \W_l\}_{l=1}^L} & ~~~f(\{\W_l + \Delta \W_l \}_{l=1}^L; \mathcal{D}^\mathrm{ft}) \\ 
    \text{s.t.}~~~ & \rank( \Delta \W_l) \leq r, \quad l=1,\ldots,L
\end{aligned}
\end{equation}
where $f$ is the chosen loss function over the fine-tuning data $\mathcal{D}^\mathrm{ft}$ of downstream tasks. Loss $f$ and the low-rank constraint in~\eqref{eq.problem-low-rank} are nonconvex.
Following low-rank approaches in SP, LoRA achieves parameter efficiency by adopting the BM factorization that sets $\Delta \W_l = \X_l\Y_l^\top$, where $\X_l \in \bR^{m_l \times r}$ and $\Y_l \in \bR^{n_l \times r}$~\cite{burer2003}.  For brevity, the range of $l$ and the dataset $\mathcal{D}^\mathrm{ft}$ will be omitted in the following. 
This simple reparameterization represents an $m_ln_l$-entry matrix using only $(m_l+n_l)r$ trainable parameters. Under the low-rank  $r\ll \min\{m_l,n_l\}$, the parameter count is drastically reduced. An additional practical benefit is that the rank constraint in~\eqref{eq.problem-low-rank} is implicitly enforced by construction, and yields the unconstrained optimization problem
\begin{equation}\label{eq.prob-lora}
	\min_{\X_l,\Y_l} f(\{\W_l + \X_l \Y_l^\top \}_l)
\end{equation}
which can be directly optimized using standard deep learning algorithms; see also Figure~\ref{fig.lora} for a graphical illustration.
It is also worth noting that the original LoRA in~\cite{hu2021lora} adapts merely the query and value matrices in the attention module, whereas subsequent variants often treat the choice of target layers as a flexible, user-specified design option. 

\begin{figure}[t]
	\centering
	\includegraphics[width=0.48\textwidth]{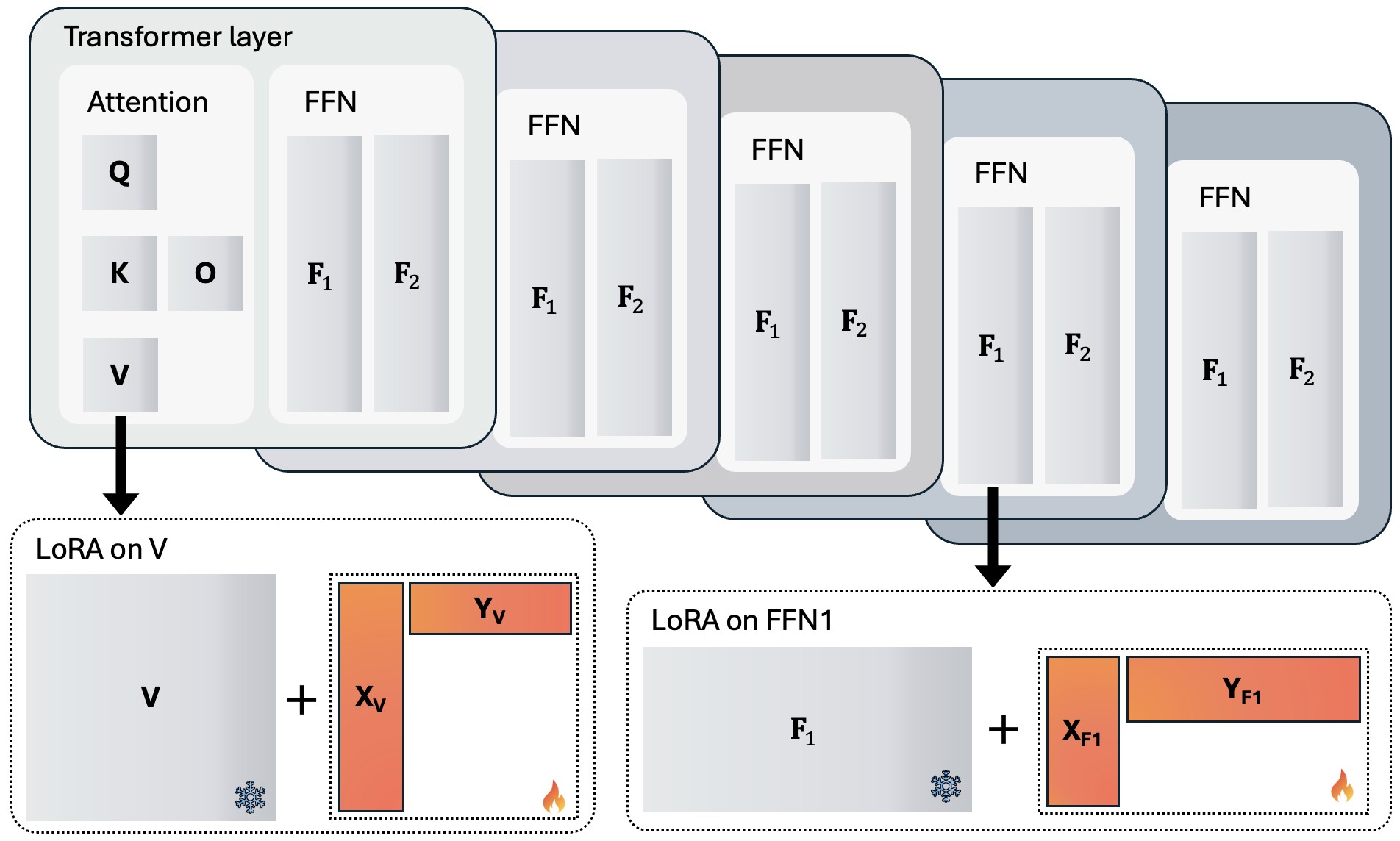}
	\caption{LoRA fine-tuning of a GPT-3 model. Grey and orange boxes are respectively frozen (snowflake icon) weights of linear layers, and trainable (fire icon) LoRA weights.}
	 \label{fig.lora}
\end{figure}

LoRA can markedly lower the required resources for fine-tuning, while only incurring a small performance drop, or even outperforming full fine-tuning when the latter overfits the dataset. 
For example,~\cite{hu2021lora} reports that fine-tuning RoBERTa-Large (335M) requires only about 0.8M LoRA parameters, and fine-tuning GPT-3 175B requires 37.7M LoRA parameters, yet achieving performance comparable to (or slightly better than) full fine-tuning on GLUE and on WikiSQL/MNLI, respectively. For GPT-3 175B, LoRA reduces the reported GPU memory requirement from 1.2 TB to 350 GB. 
More recent advances further push the limits of resource efficiency. For instance,~\cite{dettmers2024qlora} demonstrates that even a 65B-parameter model can be fine-tuned on a single 48GB GPU, substantially lowering the hardware barrier for individuals and small labs to adapt reasonably large-scale models. Beyond resource efficiency, LoRA has also enabled a wide range of applications across diverse domains, as elaborated later in~Section~\ref{Sec.applications}.

\begin{figure*}[t]


\centering
	\includegraphics[width=0.98\textwidth]{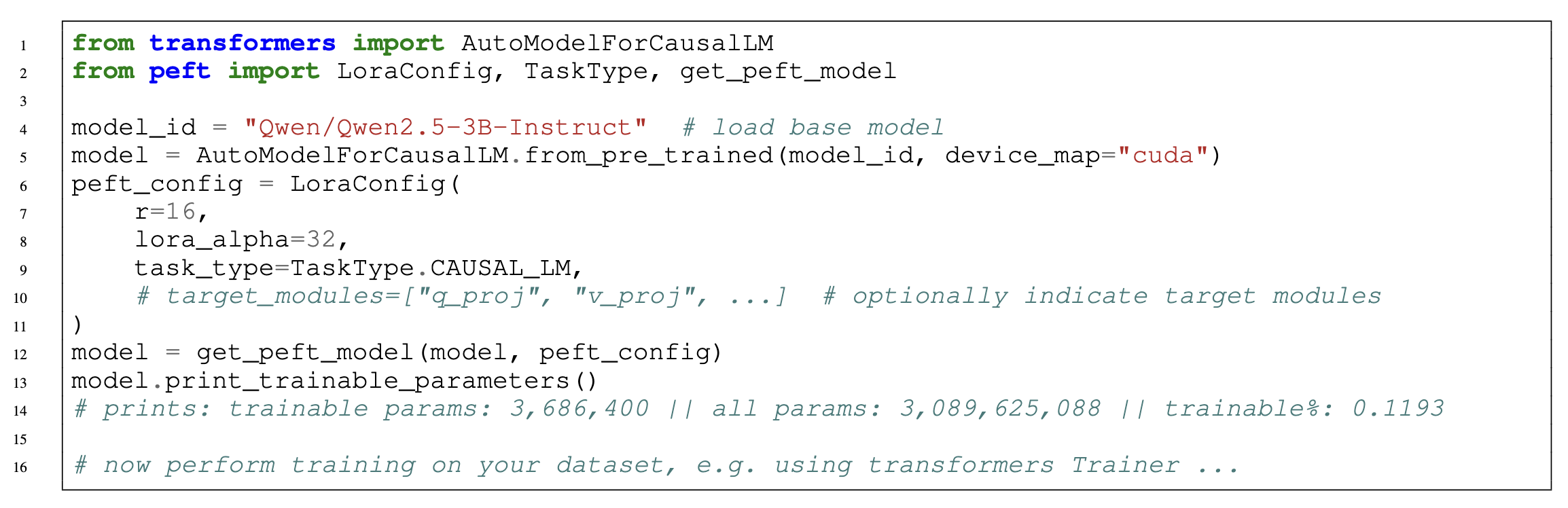}
\caption{Example code snippet of using LoRA through PEFT~\cite{peft} codebase.}
\label{fig.code}
\end{figure*}

\subsection{Comparison with other PEFT methods}

Beyond LoRA, several PEFT alternatives have emerged. The natural questions are: how does LoRA compare to these methods, and what accounts for its popularity in practice?

LLMs rely on an autoregressive process to generate responses conditioned on a given prompt. Building on this, prompt and prefix tuning adapt models by learning a small set of task-specific “soft prompts”~\cite{lester2021power,li2021prefix}. Prompt tuning prepends learnable prompt vectors to the input, while prefix tuning injects learnable prefix key/value vectors into attention calculation, steering the model’s behavior without modifying the pre-trained weights. However, because these methods are equivalent to leveraging additional tokens, they can introduce additional inference overhead, e.g., longer context and extra KV-cache. 
In contrast, LoRA introduces \emph{no extra inference} overhead when serving a single task, since the low-rank updates can be merged into the model weights prior to deployment. 
Moreover, LoRA does not rely on the autoregressive prompting interface and is therefore applicable to \emph{non-autoregressive} settings such as language-diffusion models~\cite{nie2025large}. 
Finally, prompt- and prefix-based methods typically do not directly adapt FFN layers, which are often conjectured to store factual knowledge, and are commonly targeted in knowledge editing~\cite{fang2024alphaedit}.

\begin{table*}[t]
\centering
\caption{Comparison of parameter-efficient fine-tuning approaches.}
\renewcommand{\arraystretch}{1.25}
\begin{tabular}{ccccc}
\toprule
\textbf{Approach} & \textbf{Memory} & \textbf{Performance on simple tasks} & \textbf{Performance on challenging tasks} & \textbf{Extendability} \\
 &  & \textbf{(e.g., GLUE)} & \textbf{(e.g., math, coding\dots)} &  \\
\midrule
Full parameter FT & High & Strong & Strong & Low \\
Adapter           & Moderate & Strong & Moderate & Moderate \\
BitFit            & Very low & Moderate & Weak or unknown & Moderate \\
Prefix tuning     & Low & Strong & Moderate & Moderate \\
{\textbf{LoRA}} 
                  & {\textbf{Low}} 
                  & {\textbf{Strong}} 
                  & {\textbf{Moderate -- Strong}} 
                  & {\textbf{High}} \\
\bottomrule
\end{tabular}
\label{tab:peft_comparison}
\end{table*}

Adapter-based fine-tuning~\cite{houlsby2019} and zeroth-order optimization methods (often referred to as MeZO)~\cite{malladi2023fine,zhang2024revisiting,zhang2023dpzero} are two other popular approaches to memory-efficient adaptation of LLMs. Adapter methods insert small trainable modules into an otherwise frozen backbone~\cite{houlsby2019}. At the extreme end of parameter efficiency, BitFit~\cite{zaken2022bitfit} freezes all weights and fine-tunes only the bias terms, which minimizes the number of trainable parameters but can also limit expressiveness. 

Alternatively, MeZO targets extreme memory efficiency by estimating updates without storing backpropagation activations, requiring negligible additional memory beyond the backbone model.
Compared with these approaches, LoRA offers a desirable accuracy–efficiency tradeoff, and flexibility with respect to available computing and memory. As resources increase, one can simply increase the LoRA rank $r$ to allocate more expressive capacity to the adapters. A full comparison can be found in Table \ref{tab:peft_comparison}.

\subsection{LoRA codebase}

Thanks to the open-source community, LoRA and its variants have been integrated into multiple software packages. Early libraries such as \texttt{AdapterHub} (now \texttt{adapters})\footnote{\url{https://github.com/adapter-hub/adapters}} and HuggingFace \texttt{PEFT}~\cite{peft} played a key role in making LoRA accessible, and are also being updated with recent LoRA variants. Today, LoRA is deeply integrated into the HuggingFace ecosystem, with strong support for both LLMs via \texttt{transformers} and diffusion models through \texttt{diffusers}. Tools such as \texttt{Unsloth}\footnote{\url{https://github.com/unslothai/unsloth}} and \texttt{bitsandbytes}\footnote{\url{https://github.com/bitsandbytes-foundation/bitsandbytes}} further provide GPU-friendly optimizers that make LoRA-style fine-tuning faster and more memory-efficient. Meanwhile, \texttt{vLLM}~\cite{kwon2023efficient} and \texttt{SGLang}\footnote{\url{https://github.com/sgl-project/sglang}} offer efficient inference and deployment support for LoRA-adapted models. 
Recently, Thinking Machines Lab’s \texttt{Tinker}\footnote{\url{https://thinkingmachines.ai/tinker/}} has introduced LoRA-based fine-tuning as a managed API for open-weight models, and has also showcased LoRA's effectiveness in reinforcement learning workflows for LLMs.

To illustrate how simple it is to use LoRA, we modify an example from the PEFT library~\cite{peft} in Figure~\ref{fig.code}. In particular, enabling LoRA requires only two additional commands (lines 7–13), highlighting how the open-source ecosystem has substantially lowered the barrier to entry for beginners.

\subsection{LoRA on a single linear layer}
\label{Sec.LoRA-single-layer}
Applying the BM factorization to eliminate the explicit rank constraint in~\eqref{eq.problem-low-rank} hints to a link of LoRA with  low-rank SP techniques. Next, we unveil this link by showing that the low-rank matrix sensing problem can be viewed as the simplest instance of LoRA fine-tuning. 

Recall that matrix sensing recovers an unknown signal matrix $\W_o \in \bR^{m \times n}$ from a collection of $N$ data (or measurements) $ {\cal D} := \{(\M_i, y_i)\}_{i=1}^N$, where $\M_i \in \bR^{m \times n}$ is a sensing matrix that yields observation $y_i = \tr(\M_i^\top \W_o)$. 
With $\y := [y_1, \ldots, y_N]^\top \in \bR^N$, and $\m(\W) := [\tr(\M_1^\top \W), \ldots, \tr(\M_N^\top \W)]^\top$,
signal sensing is obtained as 
\begin{equation}\label{eq:matrix-sensing}
    \hat{\W}_o = \argmin_{\W \in \bR^{m \times n}} \| \y - \m(\W) \|^2.
\end{equation}

Now revisit sensing through the lens of pre-training and fine-tuning. Suppose that the ground-truth matrix to learn during pre-training is $\W_o^{\text{pre}}$, and that the model has been fit to global optimality with $\W^{\text{pre}}=\W_o^{\text{pre}}$. In the subsequent fine-tuning stage, let $\W_o^{\text{ft}}=\W_o^{\text{pre}}+\Delta\W$ denote the perturbed new ground truth, where the perturbation $\Delta\A$ is constrained to be low-rank. Given fine-tuning data ${\cal D}^{\text{ft}}$, and viewing $\W^{\text{pre}}$ as a frozen linear layer, applying LoRA to this layer leads to
\begin{equation}\label{eq.sensing}
    \min_{\X\in\bR^{m\times r}, \Y\in\bR^{n\times r}} \frac{1}{2}\|\y^{\text{ft}} - \m(\W^{\text{pre}} + \X\Y^\top)\|^2
\end{equation}
where $\y^{\text{ft}} = \m(\W_o^{\text{ft}})$.
Upon defining $\y := \y^{\text{ft}} - \m (\W^{\text{pre}})$ $ = \m (\Delta \W_o)$, the linearity of $\m$ simplifies~\eqref{eq.sensing} to
\begin{equation}\label{eq.prob-sensing}
    \min_{\X\in\bR^{m\times r}, \Y\in\bR^{n\times r}} \frac{1}{2}\| \y - \m( \X\Y^\top) \|^2
\end{equation}
which is exactly a low-rank matrix sensing loss written in its BM-factorized form. Although this single-layer reduction is far simpler than fine-tuning a deep LLM, it already provides a clean analytical testbed along with several useful insights.

First, it offers a straightforward sanity check on the expressiveness of a LoRA variant: among the many proposed designs, it is reassuring if a method can solve the basic one-layer problem~\eqref{eq.prob-sensing} efficiently. That said, this should not be interpreted as a strict requirement, but rather an informative indicator of potential tradeoffs. Some variants are designed with different priorities in mind, such as faster training, or escalated hardware efficiency, thus may intentionally trade expressiveness for advantages in specific regimes.
Second, this problem also provides an analytically tractable setup for understanding how optimization algorithms cope with the nonconvex, globally nonsmooth loss function landscape and rich gauge invariance (see Section~\ref{Sec.gauge-invariance}) induced by LoRA's bilinear structure. Since tracking the full dynamics of LoRA fine-tuning in LLMs remains largely open, this simplified problem offers a useful proxy that retains these core optimization challenges while remaining amenable to analysis. Indeed, several methods originally developed for solving~\eqref{eq.prob-sensing} have proven effective in LoRA fine-tuning; cf. Sections~\ref{Sec.init} and~\ref{Sec.AltGD}.

\section{Model Architectures of LoRA}
\label{Sec.architectures}

\begin{figure}[t]
	\centering
	\includegraphics[width=0.48\textwidth]{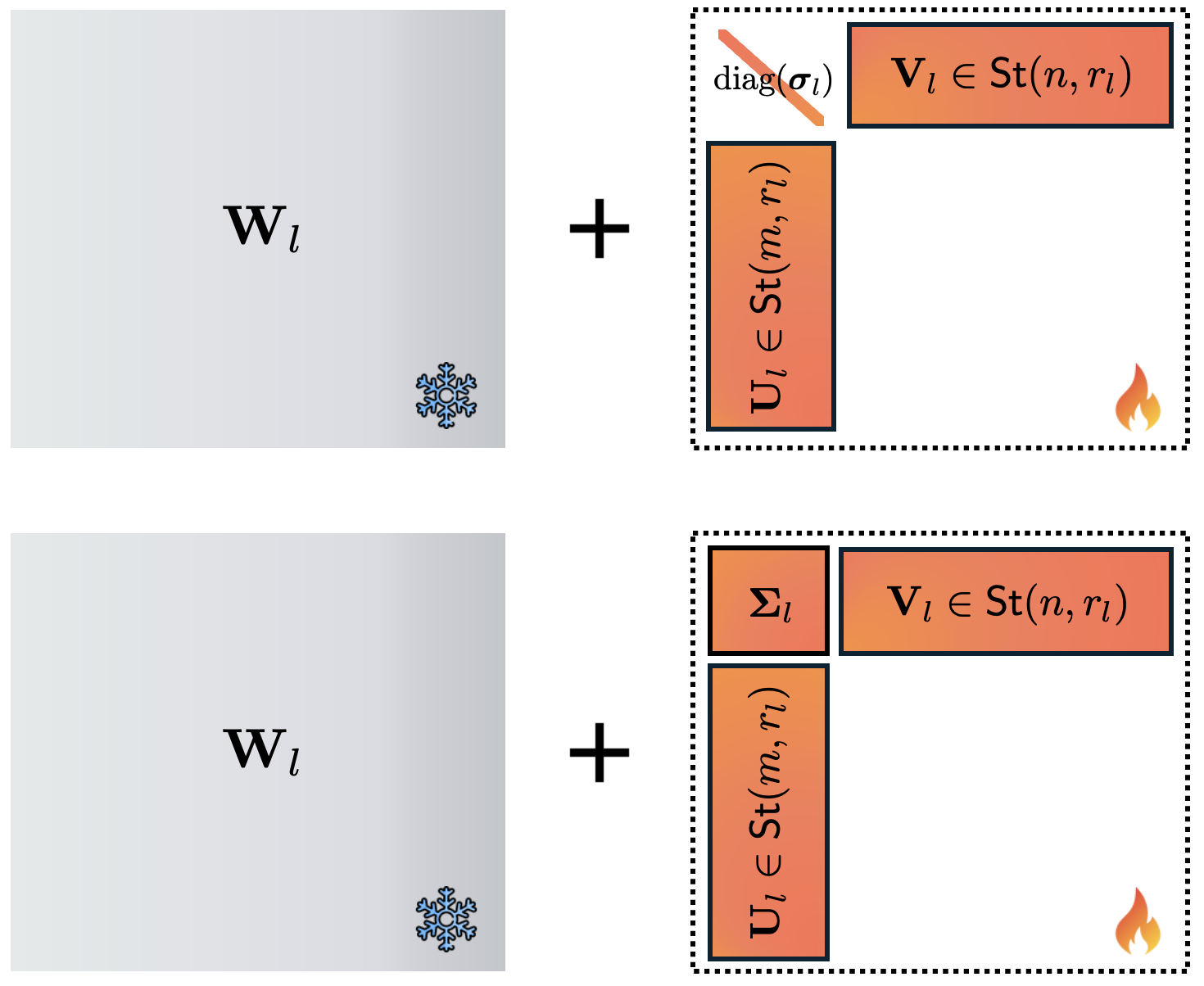}
	\caption{SVD factorization in AdaLoRA~\cite{zhang2023adaptive} (top) and PoLAR~\cite{kai2025} (bottom), where Stiefel manifold $\st(m,r):= \{ \U \in \bR^{m \times r} \mid \U^\top \U  = \I_r\}$.}
	 \label{fig.svd}
\end{figure}

Having outlined low-rank tools from SP, attention now turns to fine-tuning methods that are transferable from SP and those developed more recently. The focus is first fine-tuning itself, with broader applications deferred to later sections. Akin to conventional SP problems such as~\eqref{eq.prob-sensing}, efficiency of a fine-tuning method often relies on the interaction between i) the adapter model architecture, and ii) a paired optimization algorithm that leverages the architecture. For this reason, the ensuing section begins with architectural (model) designs beyond the BM factorization that is the cornerstone of LoRA. 

\subsection{SVD-based parameterization}
\label{Sec.SVD-parameterization}
Another parameter-efficient alternative for parameterizing $\Delta \W_l \in \bR^{m\times n}$ is the factorization provided by SVD
\begin{equation}\label{eq.svd}
    \Delta \W_l=\U_l\bfSigma_l\V_l^\top
\end{equation}
where $\U_l\in\bR^{m\times r}$ and $\V_l\in\bR^{n\times r}$ have orthonormal columns; that is, $\U_l^\top\U_l=\I_r$ and $\V_l^\top\V_l=\I_r$, or satisfy relaxed versions thereof. Relaxing the orthogonality constraints will turn out to simplify modeling and enable GPU-friendly optimization. The middle factor $\bfSigma_l\in\bR^{r\times r}$ can be chosen to be diagonal or left unconstrained depending on the target settings; these choices will become explicit shortly.

\begin{figure*}[t]
	\centering
	\begin{tabular}{ccc}
		\hspace{-0.2cm}
		\includegraphics[width=.31\textwidth]{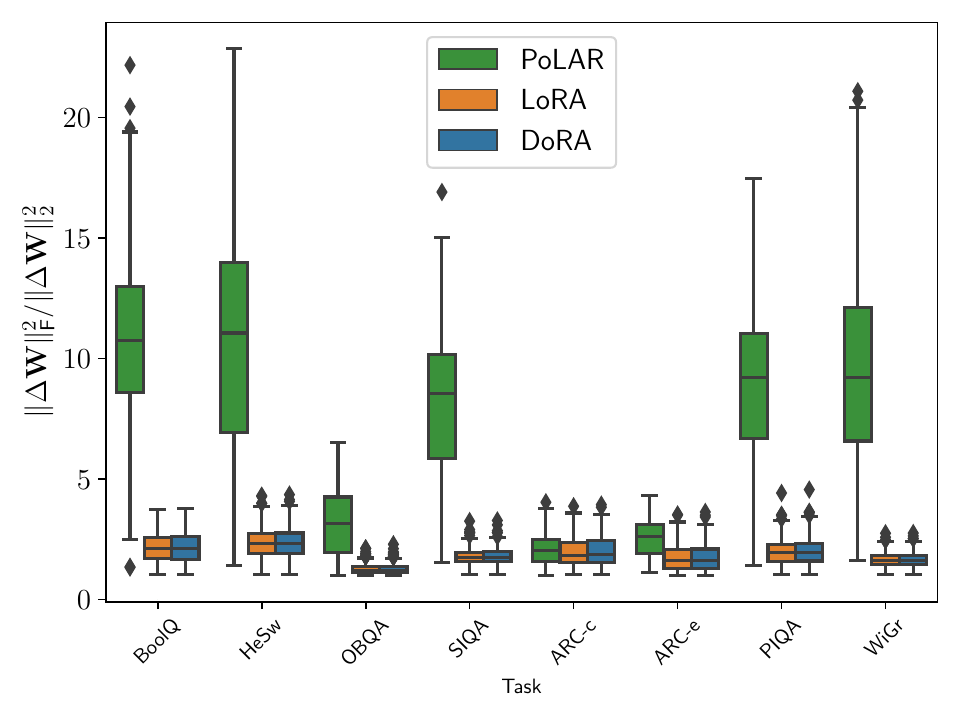}&
		\hspace{-0.2cm}
		\includegraphics[width=.32\textwidth]{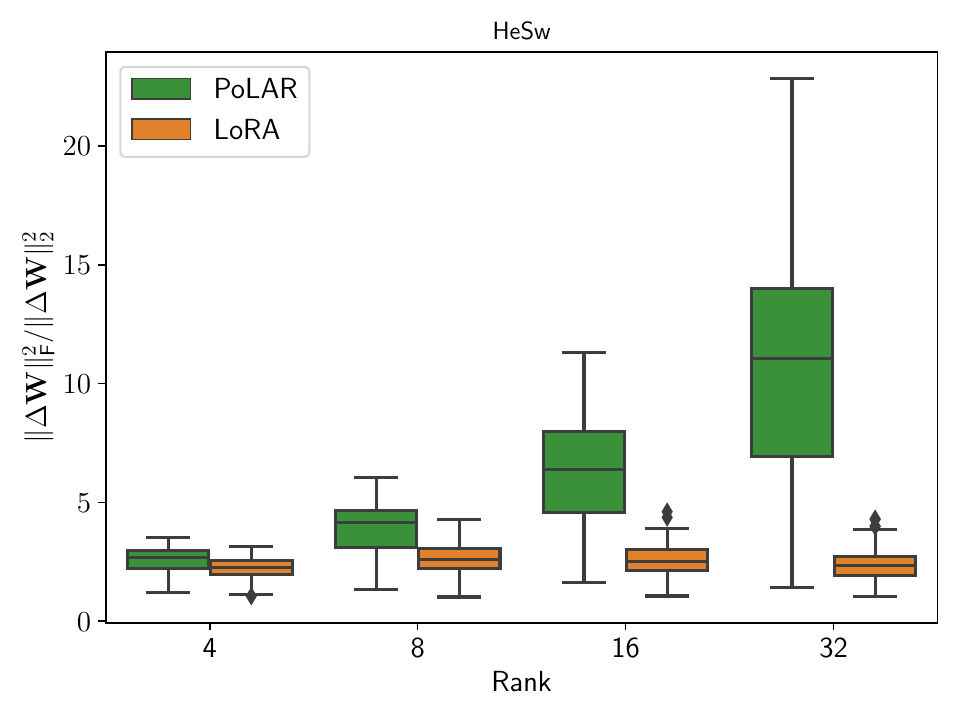}&
		\hspace{-0.2cm}
		\includegraphics[width=.32\textwidth]{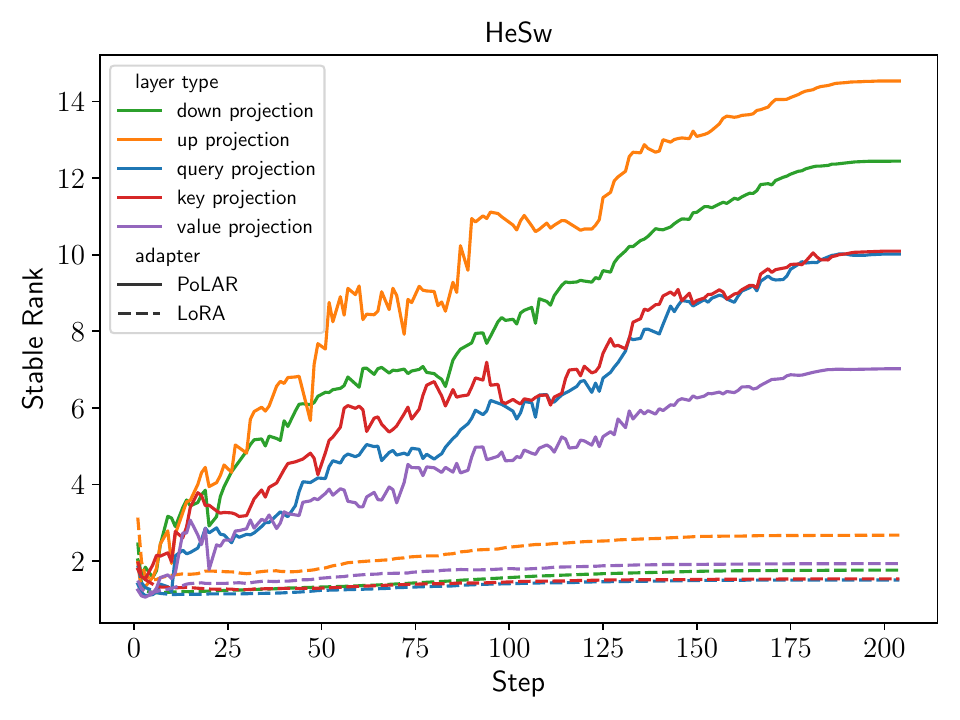}
		\\ 
	   (a) Stable rank across different datasets & (b) Stable versus algebraic rank & (c) Stable rank versus iteration
	 \end{tabular}
	\vspace{-0.2cm}
	\caption{(a) PoLAR parameterization facilitates better utilization of rank. (b)(c) Fine-tuning a LLaMA2-7B model on the HellaSwag dataset~\cite{zellers2019}.}
	\vspace{-0.3cm}
	 \label{fig.polar}
\end{figure*}

The motivation behind an SVD-type model, for the same expressiveness as BM factorization, is two-fold. First, SVD facilitates explicit growth or pruning of singular directions, and thus naturally supports task-adaptive, layer-wise rank allocation~\cite{zhang2023adaptive}.
Second, orthogonality in $\U_l$ and $\V_l$ promotes effective utilization of allocated parameters, especially when the number of optimization steps is limited~\cite{kai2025}.
A minor limitation of this parameterization is the slightly increased trainable parameters. Using a full $\bfSigma_l\in\bR^{r\times r}$ introduces $r^2$ additional parameters compared to standard LoRA (or only $r$ additional parameters if diagonal). In practice, this overhead is negligible as commonly used ranks are sufficiently small, e.g., $r=8$ or $32$. A graphical illustration of these parameterizations can be found in Figure~\ref{fig.svd}.

\textbf{Dynamic rank allocation.} 
Standard LoRA allocates the same rank budget $r$ to every layer of an LLM, yet not all layers contribute equally to downstream tasks. It thus motivates dynamic, data-dependent rank allocation across layers. For a diagonal $r_l\times r_l$ matrix $\bfSigma_l$, SVD reduces to
\[
\U_l \bfSigma_l \V_l^\top = \sum_{s=1}^{r_l} \sigma_l^s \mathbf{u}_l^s (\mathbf{v}_l^{s})^\top
\]
where $\sigma_l^s$ is the $s$-th diagonal entry of $\bfSigma_l$, and $\mathbf{u}_l^s$ and $\mathbf{v}_l^s$ are the $s$-th columns of $\U_l$ and $\V_l$, respectively. 
This admits a natural interpretation: $\sigma_l^s$ serves as an explicit importance score for the $s$-th rank-one summand $\mathbf{u}_l^s (\mathbf{v}_l^{s})^\top$. Consequently, this allows for pruning the less important components $\sigma_l^s \mathbf{u}_l^s (\mathbf{v}_l^{s})^\top$ when $\sigma_l^s$ falls below a certain threshold.
Moreover, the removal of individual subspaces becomes fully decoupled, making it possible to prune rank in a flexible manner. 
AdaLoRA~\cite{zhang2023adaptive} leverages this idea with two modifications. First, instead of enforcing exact orthogonality, it encourages approximately orthogonal $\U$ and $\V$ by adding soft penalty terms $\|\U^\top\U-\I \|_\fro^2$ and $\|\V^\top\V-\I \|_\fro^2$ to the loss. The resulting unconstrained formulation is often easier to optimize in standard deep-learning pipelines. 
Second, the importance score is enriched by incorporating its sensitivity and uncertainty of subspaces, thereby making the allocation mechanism more robust to the stochasticity of training.
Concretely, for a layer $l$ with effective rank $r_l^\star$, AdaLoRA starts from an over-parameterized rank $r>r_l^\star$, and progressively prunes less important subspaces every few fine-tuning steps, thereby producing a task-dependent rank allocation across layers.
As opposed to progressive pruning, IncreLoRA~\cite{zhang2023increlora} instead grows the singular space incrementally during fine-tuning.
Further, it is worth stressing that dynamic-rank methods are not limited to SVD-type.
SoRA~\cite{ding2023sparse} removes the orthogonality constraints on $\U$ and $\V$. DyLoRA~\cite{valipour2023dylora} proposes an analogous layer-wise rank allocation strategy under the BM form by having $\X_l\Y_l^\top=\sum_{s=1}^{r_l}\mathbf{x}_l^s(\mathbf{y}_l^{s})^\top$. However, the absence of orthogonality allows different rank-one components to interact through correlated directions. This entangles the contributions of each component, making the notion of ``importance'' ambiguous and the associated pruning criterion delicate to design. 

\textbf{Improving parameter efficiency.} 
For the SVD-type parameterization~\eqref{eq.svd}, PoLAR~\cite{kai2025} and SteLLA~\cite{li2025stella} impose respectively approximate and strict orthogonality on $\U_l$ and $\V_l$, both with an \emph{unconstrained} $\bfSigma_l$.
This unconstrained $\bfSigma_l$ can be viewed as a generalized form of weight normalization~\cite{salimans2016weight,wei2025benefits} induced by the BM factorization. Specifically, applying the polar decomposition to vanilla LoRA renders $\X_l=\U_l\bfSigma_l^1$, where $\U_l^\top\U_l=\I_r$ encodes the “direction,” and $\bfSigma_l^1\in\bR^{r\times r}$ is a positive semi-definite (PSD) matrix representing the “magnitude,” which is analogous to weight normalization. Similarly, $\Y_l=\V_l\bfSigma_l^2$ with $\V_l^\top\V_l=\I_r$ and PSD $\bfSigma_l^2$. This SVD-type model with unconstrained $\bfSigma_l$ can be then obtained by
rewriting the BM factorization as $\X_l\Y_l^\top=\U_l(\bfSigma_l^1\bfSigma_l^{2\top})\V_l^\top$, and merging the two magnitude matrices into a single one $\bfSigma_l:=(\bfSigma_l^1\bfSigma_l^{2\top})$. This clarifies why $\bfSigma_l$ is unconstrained: product of two PSD matrices need not be PSD. 

To handle the orthogonality constraints $\U_l^\top\U_l=\I_r$ and $\V_l^\top\V_l=\I_r$, SteLLA uses retraction-based Riemannian optimization (discussed in Section V). 
PoLAR instead adopts the Landing algorithm~\cite{ablin2022landing}, which adds a penalty term $\|\U_l^\top\U_l-\I_r\|_\fro^2$ (and similar for $\V_l$) to avoid the SVD operation required by retraction, thus improving the compute efficiency. 
When combined with these optimization methods, the SVD-type parameterization enables effective utilization of the allocated subspace.
Figure~\ref{fig.polar}(a) reports the \emph{stable rank}\footnote{Defined as $\|\Delta \W_l \|^2/\|\Delta \W_l\|_\fro^2$, the stable rank is a smooth surrogate of the algebraic rank that downweighs small singular values.} of ${\Delta \W_l}$ across layers of a LLaMA2-7B model fine-tuned on commonsense reasoning tasks~\cite{liu2024dora}. 
Despite using $r=32$, the stable rank is close to $1$, which demonstrates that LoRA often exploits a rank-one subspace, and its expressiveness is not fully explored.
In contrast, PoLAR achieves substantially higher stable ranks. Figure~\ref{fig.polar}(b) plots the stable rank of PoLAR and LoRA against the algebraic rank $r$, where the stable rank of the former increases with $r$. Figure~\ref{fig.polar}(c) further illustrates the fine-tuning dynamics. It is seen that PoLAR consistently maintains a larger stable rank than LoRA. These results indicate that for the same expressiveness, PoLAR more effectively utilizes the available rank budget than LoRA in practice.

\subsection{Rank-augmented parameterization} 
\label{Sec.high-rank}
\begin{figure}[t]
	\centering
	\includegraphics[width=0.48\textwidth]{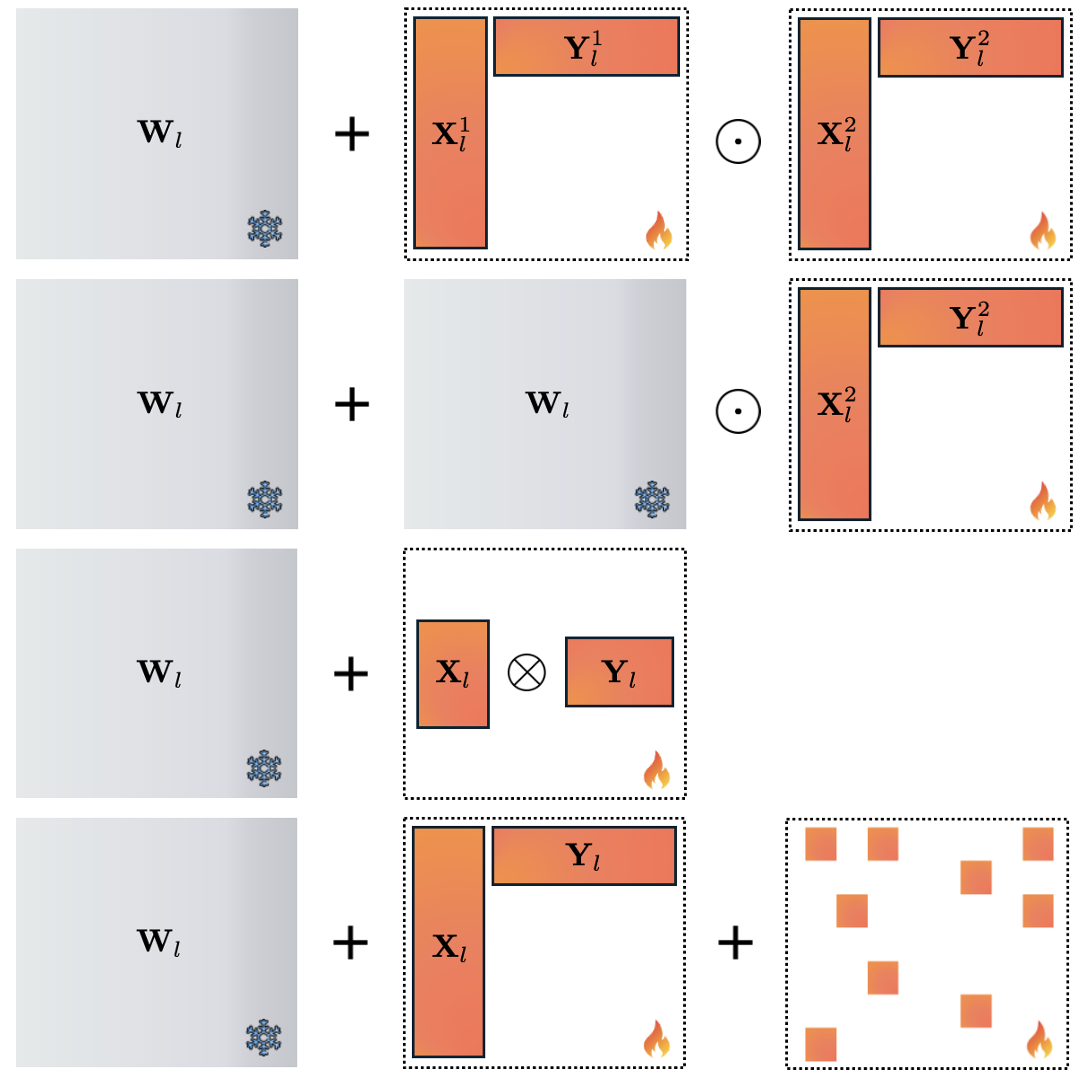}
	\caption{Rank-augmented parameterization. From up to down are FedPara~\cite{hyeon2021fedpara}, HiRA~\cite{huang2025hira}, KronA~\cite{edalati2025krona}, and RoSA~\cite{nikdan2024rosa}. }
	 \label{fig.rank-aug}
\end{figure}

While BM and SVD factorizations are efficient, their expressiveness is limited to a rank-$r$ matrix. This naturally raises the question of whether one can represent higher-rank $\Delta \W_l$ without increasing the number of trainable parameters.

\textbf{Hadamard based parameterization.} A means of increasing the rank is to leverage the  Hadamard product whose rank is submultiplicative~\cite{styan1973hadamard}. 
\begin{theorem}[\cite{million2007hadamard}]
\label{thm.Hadamard-rank}
    The Hadamard product of $\A,\B\in\bR^{m\times n}$, is 
    \[
    \A \odot \B :=
    \begin{bmatrix}
        a_{11}b_{11} & \cdots & a_{1n}b_{1n}\\
        \vdots & \ddots & \vdots\\
        a_{m1}b_{m1} & \cdots & a_{mn}b_{mn}
    \end{bmatrix}
    \]
    and its rank satisfies
    $     \; \rank(\A \odot \B) \leq \rank(\A) \rank(\B).
    $
\end{theorem}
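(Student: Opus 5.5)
The plan is to write both factors as sums of rank-one outer products and then use bilinearity of the Hadamard product. Let $p:=\rank(\A)$ and $q:=\rank(\B)$. By the rank factorization (for instance a compact SVD, in the spirit of \eqref{eq.svd}), we write
\[
\A=\sum_{i=1}^{p}\bfa_i\mathbf{c}_i^\top,\qquad \B=\sum_{j=1}^{q}\bfb_j\mathbf{d}_j^\top ,
\]
where $\bfa_i,\bfb_j\in\bR^m$ and $\mathbf{c}_i,\mathbf{d}_j\in\bR^n$. If $p=0$ or $q=0$, one factor is the zero matrix, so $\A\odot\B=\mathbf{0}$ and the bound holds trivially. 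We may therefore assume $p,q\ge 1$.

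The key elementary identity is that the Hadamard product of two rank-one matrices is again rank-one (or zero):
\[
(\bfa\mathbf{c}^\top)\odot(\bfb\mathbf{d}^\top)=(\bfa\odot\bfb)(\mathbf{c}\odot\mathbf{d})^\top .
\]
To check it, compare the $(k,\ell)$ entries: both sides equal $a_kc_\ell b_kd_\ell$.

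Since $\odot$ is bilinear, i.e.\ it distributes over matrix sums entrywise, we can expand the product termwise:
\[
\A\odot\B=\sum_{i=1}^{p}\sum_{j=1}^{q}(\bfa_i\odot\bfb_j)(\mathbf{c}_i\odot\mathbf{d}_j)^\top .
\]
This expresses $\A\odot\B$ as a sum of $pq$ matrices, each of rank at most one. Subadditivity of rank, $\rank(\M_1+\M_2)\le\rank(\M_1)+\rank(\M_2)$, then gives $\rank(\A\odot\B)\le pq=\rank(\A)\rank(\B)$.

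An equivalent way to see this is through the column space. Every column of $\A\odot\B$ lies in $\mathrm{span}\{\bfa_i\odot\bfb_j\}$, a space of dimension at most $pq$.

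None of these steps is a real obstacle. The only point needing care is the rank-one Hadamard identity, and it follows directly from the entrywise comparison above.
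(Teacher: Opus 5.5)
Your proof is correct and complete. The paper gives no argument for Theorem~\ref{thm.Hadamard-rank}; it only cites \cite{million2007hadamard}, so your proposal is a self-contained proof rather than a variant of one in the paper.

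Your double sum is the same identity the paper uses a few lines later for the memory-efficient FedPara implementation. Set $\X^1:=[\bfa_1,\ldots,\bfa_p]$, $\Y^1:=[\mathbf{c}_1,\ldots,\mathbf{c}_p]$, $\X^2:=[\bfb_1,\ldots,\bfb_q]$, and $\Y^2:=[\mathbf{d}_1,\ldots,\mathbf{d}_q]$. The row-wise Khatri--Rao product extends verbatim to factors with different numbers of columns, and the column of $\X^1\star\X^2$ indexed by $(i,j)$ is $\bfa_i\odot\bfb_j$. Hence $\A\odot\B=(\X^1\star\X^2)(\Y^1\star\Y^2)^\top$ has inner dimension $pq$, and the bound follows from the rank of a product. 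This form also shows that equality holds if and only if both Khatri--Rao factors have full column rank $pq$.

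A genuinely different standard route goes through Theorem~\ref{thm.Kron-rank}. The matrix $\A\odot\B$ is the submatrix of $\A\otimes\B$ that keeps the rows indexed by pairs $(k,k)$ and the columns indexed by pairs $(\ell,\ell)$. Therefore $\rank(\A\odot\B)\le\rank(\A\otimes\B)=\rank(\A)\rank(\B)$.

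That argument is a one-liner once the Kronecker identity is available. It also shows why Kronecker rank is multiplicative while Hadamard rank is only submultiplicative: extracting a submatrix can only lose rank. Your argument needs nothing beyond subadditivity of rank. For a few more lines, it gives explicit generators of the column space and an exact criterion for when the bound is attained.
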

In particular, the Hadamard product of two rank-$r$ matrices can have rank as large as $r^2$. Motivated by this observation, FedPara~\cite{hyeon2021fedpara} (or LoHA~\cite{yeh2024navigating}) has additive weight update
\begin{equation}\label{eq.fedpara}
    \Delta \W_l = (\X_l^1\Y_l^{1\top}) \odot (\X_l^2\Y_l^{2\top}) 
\end{equation}
where $\X_l^1, \X_l^2 \in \bR^{m \times r}$ and $\Y_l^1, \Y_l^2 \in \bR^{n \times r}$ are learnable LoRA adapters. 
The next proposition characterizes the expressiveness of Hadamard-based parameterization. 

\begin{proposition}[Expressiveness of FedPara
~\cite{hyeon2021fedpara}] 
\label{prop.Hadamard-express}
The parameterization $\Delta \W = (\X^1\Y^{1\top}) \odot (\X^2\Y^{2\top}) $ can express (i)  
\underline{all} matrices with rank up to $r$; and, (ii) a \underline{subset} of matrices having rank $r < \rank(\Delta \W) \le r^2$. 
\end{proposition}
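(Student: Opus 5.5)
Part (i) follows from a direct construction that uses the all-ones matrix as the identity element of the Hadamard product. Let $\A$ be any matrix with $\rank(\A)\le r$, and write its BM factorization $\A=\X\Y^\top$ with $\X\in\bR^{m\times r}$ and $\Y\in\bR^{n\times r}$. Such a factorization always exists, for instance from a thin SVD padded with zero columns if $\rank(\A)<r$. Set $\X^1=\X$ and $\Y^1=\Y$. For the second factor, let $\X^2=[\mathbf{1}_m,\mathbf{0},\ldots,\mathbf{0}]$ and $\Y^2=[\mathbf{1}_n,\mathbf{0},\ldots,\mathbf{0}]$. Then $\X^2\Y^{2\top}=\mathbf{1}_m\mathbf{1}_n^\top$, the all-ones matrix, and so $(\X^1\Y^{1\top})\odot(\X^2\Y^{2\top})=\A$. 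This requires only $r\ge 1$. Hence every matrix of rank at most $r$ is expressible.

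Part (ii) has two components. The first is the upper bound: by Theorem~\ref{thm.Hadamard-rank}, any $\Delta\W$ of this form satisfies $\rank(\Delta\W)\le \rank(\X^1\Y^{1\top})\,\rank(\X^2\Y^{2\top})\le r^2$. So nothing above rank $r^2$ is reachable. The second is showing that ranks strictly between $r$ and $r^2$ do occur. I would use the Khatri--Rao (row-wise Kronecker) identity
\[
(\X^1\Y^{1\top})\odot(\X^2\Y^{2\top}) = (\X^1\bullet\X^2)(\Y^1\bullet\Y^2)^\top,
\]
where the $i$-th row of $\X^1\bullet\X^2$ is $\mathbf{x}^{1}_{i}\otimes\mathbf{x}^{2}_{i}\in\bR^{1\times r^2}$. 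Choosing the rows of $\X^1,\X^2$ (and of $\Y^1,\Y^2$) generically, for example with i.i.d.\ Gaussian entries, the $r^2$-column face-splitting matrices have full column rank $\min\{m,r^2\}$ (resp.\ $\min\{n,r^2\}$) almost surely. The reason is that the determinant of a suitable $r^2\times r^2$ minor is a nonzero polynomial, as witnessed by selecting rows whose Kronecker products are the standard basis vectors $\mathbf{e}_a\otimes\mathbf{e}_b$. Thus rank $\min\{m,n,r^2\}>r$ is attained, and zeroing columns of the factors gives intermediate ranks.

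The main obstacle is the word \emph{subset}, that is, showing that not every matrix of rank in $(r,r^2]$ is expressible. I would handle this by dimension counting. The map $(\X^1,\Y^1,\X^2,\Y^2)\mapsto\Delta\W$ is polynomial in $2(m+n)r$ parameters, so its image is contained in a set of dimension at most $2(m+n)r$. In contrast, the set of matrices of rank exactly $k$ is a manifold of dimension $k(m+n-k)$. For $k=r^2$ and $m,n$ large relative to $r$, we have $r^2(m+n-r^2)>2(m+n)r$ whenever $r\ge3$ and $m+n$ is large enough. The image is therefore a measure-zero subset of the rank-$r^2$ matrices, and in particular a proper subset. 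The small-$r$ cases, such as $r=2$, may need an explicit counterexample instead; one candidate is a specific rank-$3$ or rank-$4$ matrix not of face-splitting form, since by Sard-type reasoning the generic count still suggests properness when $m,n$ are large. Since the statement is qualitative, the dimension argument for generic sizes should suffice.
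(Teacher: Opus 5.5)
Your proof follows the paper's route. Part (i) is the same all-ones construction, and part (ii) combines Theorem~\ref{thm.Hadamard-rank} with a dimension count. You also supply something the paper skips: the Khatri--Rao witness showing that ranks above $r$ are actually attained. Two small repairs are needed there. First, when $r^2>\min\{m,n\}$, generic full rank of each face-splitting factor separately does not force the product to have rank $\min\{m,n,r^2\}$, so choose the $\mathbf{e}_a\otimes\mathbf{e}_b$ witness rows jointly for both factors. Second, zeroing columns only reaches ranks $\min\{m,n,r_1r_2\}$ with $r_1,r_2\le r$. That is harmless, because (ii) needs just one attained rank above $r$.

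The genuine hole is $r=2$, and your proposed patch does not close it. Your inequality $r^2(m+n-r^2)>2r(m+n)$ is equivalent to $(r-2)(m+n)>r^3$, which never holds at $r=2$. So the raw count gives no information there, and Sard-type reasoning cannot extract more from it. The paper's sharper bound $2r(m+n-r)$, which quotients out the two $\mathsf{GL}(r)$ gauges, has the same blind spot: $r^2(m+n-r^2)>2r(m+n-r)$ reduces to $(r-2)(m+n)>r(r^2-2)$.

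What is missing is a symmetry specific to the Hadamard product. For invertible diagonal $\mathbf{D}_1\in\bR^{m\times m}$ and $\mathbf{D}_2\in\bR^{n\times n}$, one has $(\mathbf{D}_1\A\mathbf{D}_2)\odot(\mathbf{D}_1^{-1}\B\mathbf{D}_2^{-1})=\A\odot\B$. Hence $(\mathbf{D}_1\X^1\Q_1,\ \mathbf{D}_2\Y^1\Q_1^{-\top},\ \mathbf{D}_1^{-1}\X^2\Q_2,\ \mathbf{D}_2^{-1}\Y^2\Q_2^{-\top})$ gives the same $\Delta\W$ for all $\Q_1,\Q_2\in\mathsf{GL}(r)$. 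For generic factors with $m\ge r+1$, the stabilizer of this action is one-dimensional. Indeed, every row of $\X^1$ must be a left eigenvector of $\Q_1$, which forces $\Q_1=c\I$, and then $\mathbf{D}_1$, $\mathbf{D}_2$, $\Q_2$ are scalar too. Generic orbits therefore have dimension $2r^2+m+n-1$. Since orbit directions lie in the kernel of the differential, the image has dimension at most $(2r-1)(m+n)-2r^2+1$, and
\[
r^2(m+n-r^2)-\big[(2r-1)(m+n)-2r^2+1\big]=(r-1)^2\big[(m+n)-(r+1)^2\big].
\]
This is positive for every $r\ge 2$ once $m+n>(r+1)^2$. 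For example, with $r=2$, $m=4$, $n=6$, the image meets the $24$-dimensional set of rank-$4$ matrices in a set of dimension at most $23$. Some size condition of this kind is unavoidable in any counting proof, and it is exactly what the paper's ``may exceed'' is quietly hedging.
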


A naive implementation of~\eqref{eq.fedpara} would first learn the two dense $m\times n$ matrices $\X_l^1(\Y_l^1)^\top$ and $\X_l^2(\Y_l^2)^\top$ (in both forward and backward passes), and then take their Hadamard product. 
However, this incurs an $\mathcal{O}(mn)$ memory footprint for the intermediates. A memory-efficient implementation was pointed out by~\cite{singhal2025abba}, which avoids forming any $m\times n$ intermediate via a row-wise Khatri–Rao product~\cite{khatri1968solutions}. Specifically, for $\A,\B\in\bR^{m\times n}$ written in rows
\[
    \mathbf \A=
    \begin{bmatrix}
        \bfa_1^\top\\ 
        \bfa_2^\top \\ 
        \cdots \\
        \bfa_m^\top
    \end{bmatrix}
    \quad
    \mathbf B=
    \begin{bmatrix}
        \bfb_1^\top \\
        \bfb_2^\top \\
        \cdots \\
        \bfb_m^\top
    \end{bmatrix},
\]
the row-wise Khatri–Rao product $\star$ is defined as
\[
\mathbf A \star \mathbf B
    :=
    \begin{bmatrix}
        \bfa_1^\top \otimes \bfb_1^\top \\
        \bfa_2^\top \otimes \bfb_2^\top \\
        \cdots \\
        \bfa_m^\top \otimes \bfb_m^\top 
    \end{bmatrix} \in \bR^{m \times n^2}
\]
where $\otimes$ denotes the Kronecker product
\[
    \A \otimes \B =
    \begin{bmatrix}
    a_{11}\B & a_{12}\B & \cdots & a_{1n}\B\\
    a_{21}\B & a_{22}\B & \cdots & a_{2n}\B\\
    \vdots   & \vdots   & \ddots & \vdots\\
    a_{m1}\B & a_{m2}\B & \cdots & a_{mn}\B
    \end{bmatrix}.
\]
With this notation,the update~\eqref{eq.fedpara} can be rewritten as
\begin{align*}
    \Delta \W_l
    &= (\X_l^1\Y_l^{1\top})\odot(\X_l^2\Y_l^{2\top}) \\
    &= \underbrace{(\X_l^1 \star \X_l^2)}_{m\times r^2}
        \underbrace{(\Y_l^1 \star \Y_l^2)^\top}_{r^2\times n}. 
\end{align*}
As a consequence, the forward/backward calculation can be implemented using two “thin” factors of sizes $m\times r^2$ and $n\times r^2$, thus avoiding the sizable $m\times n$ matrix. 

While FedPara is more expressive, it also poses additional optimization challenges as the adapter becomes “deeper,” involving four learnable matrices as opposed to two in standard LoRA. In general, deeper parameterizations are harder to optimize, and can exhibit slower training dynamics~\cite{shamir2019exponential}.

HiRA\footnote{A similar construction can be found in~\cite{wen2023batched} yet for different purposes; cf. Section~\ref{Sec.serving}.}~\cite{huang2025hira} aims to endow $\Delta \W_l$ with a higher rank by tying one branch of the Hadamard product to the frozen pre-trained weight, which may itself have high rank:
\begin{equation}\label{eq.hira}
    \Delta \W_l = \W_l \odot (\X_l\Y_l^\top).
\end{equation}
The expressiveness of this parameterization is presented in the next proposition. 
\begin{proposition}[Expressiveness of HiRA]
\label{prop.HiRA-express}
Given $\W$, the parameterization $\Delta \W = \W \odot (\X\Y^\top) $ can express 
\begin{enumerate}
    \item a \underline{subset} of matrices of rank up to $r$, and 
    \item a \underline{subset} of matrices whose rank is $r < \rank(\Delta \W) \le \min\{ \rank(\W) \cdot r, m, n \}$. 
\end{enumerate}
\end{proposition}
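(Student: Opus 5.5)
The plan is to first pin down the rank ceiling and then show that each of the two claimed ``subset'' statements is attained but not exhausted. For the upper bound, I would invoke Theorem~\ref{thm.Hadamard-rank} with $\A=\W$ and $\B=\X\Y^\top$. Since $\rank(\X\Y^\top)\le r$, this gives $\rank(\Delta\W)\le \rank(\W)\cdot r$. Because $\Delta\W\in\bR^{m\times n}$, we also have $\rank(\Delta\W)\le\min\{m,n\}$. So every expressible matrix has rank at most $\min\{\rank(\W) r,m,n\}$, and both items concern only ranks in the ranges $[0,r]$ and $(r,\min\{\rank(\W)r,m,n\}]$.

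For item 1, I would show two things: some matrices of rank $\le r$ are expressible, and not all of them are. The first is trivial: take $\X=\mathbf{0}$ to get $\Delta\W=\mathbf{0}$, or more generally use the rank-one choice $\X\Y^\top=\mathbf{e}_i\mathbf{e}_j^\top$ with $w_{ij}\neq 0$. For the second, the key observation is the support constraint: $(\Delta\W)_{ij}=w_{ij}(\X\Y^\top)_{ij}$, so every expressible matrix vanishes wherever $\W$ vanishes. Whenever $\W$ has a zero entry $w_{ij}=0$, the rank-one matrix $\mathbf{e}_i\mathbf{e}_j^\top$ cannot be expressed, so the set is a strict subset. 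If $\W$ has no zero entries, I would instead count degrees of freedom in a generic regime. Rank-$\le r$ matrices form a variety of dimension $r(m+n-r)$, and the map $(\X,\Y)\mapsto \W\odot(\X\Y^\top)$ is a bijection on matrices given by entrywise scaling by $w_{ij}\neq0$. The question therefore becomes whether the entrywise rescaling $\Delta\W\oslash\W$ (entrywise division) of a rank-$\le r$ matrix again has rank $\le r$. For generic $\W$ this fails. A $2\times2$ minor check with $r=1$ suffices: take $\Delta\W=\mathbf{1}\mathbf{1}^\top$, whose preimage $1/w_{ij}$ has rank one only if $\W$ itself is rank one up to entrywise inversion. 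I would state the subset claim as holding ``in general'', with the caveat that it can degenerate for special $\W$, e.g.\ an all-ones $\W$, where HiRA reduces exactly to LoRA.

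For item 2, the construction would take $\W$ of rank $k$ and exhibit a particular $\X\Y^\top$ reaching rank $\min\{kr,m,n\}$. A clean choice is to let $\X\Y^\top$ be block-structured with rows partitioned into $r$ groups, so that the Hadamard product equals $\W$ restricted to disjoint row blocks times rank-one patterns. This is the standard Khatri--Rao argument: $\W\odot(\X\Y^\top)=(\A\star\X)(\B\star\Y)^\top$ with $\W=\A\B^\top$, where $\A\in\bR^{m\times k}$ and $\B\in\bR^{n\times k}$. This writes $\Delta\W$ as a product of an $m\times kr$ matrix and a $kr\times n$ matrix, and for generic $\X,\Y$ both row-wise Khatri--Rao factors have full column rank $\min\{kr,m\}$ (resp.\ $n$). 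This is a polynomial nonvanishing condition, so it suffices to exhibit one instance. Strictness then follows from the same support/rescaling argument as in item 1.

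The main obstacle is item 2's attainability claim: showing that the generic full-rank property of $\A\star\X$ really holds for a fixed, arbitrary $\A$. This can fail when $\A$ has zero rows, and in that case I would restrict the claim to $\W$ without degenerate rows and columns. The first thing I would try is to show that some minor of $\A\star\X$ is a nonzero polynomial in $\X$.
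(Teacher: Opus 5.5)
Your rank ceiling matches the paper's. Your non-surjectivity argument generalizes the paper's proof. The paper reads ``subset'' existentially and just exhibits $\W=\I$, for which every $\Delta\W$ is diagonal; that is one instance of your support-pattern observation. Your remark that an entrywise-nonzero rank-one $\W$ (e.g.\ all-ones) turns HiRA into LoRA is correct. It is exactly why item~1 can only be claimed for some (or generic) $\W$, not for every $\W$.

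The step that fails is the attainability construction in item~2. It is not true that $\A\star\X$ generically has rank $\min\{kr,m\}$ as soon as $\A$ has no zero rows. The target $\rank(\Delta\W)=\min\{kr,m,n\}$ is also false, even for $\W$ with no zero entries and $kr\le\min\{m,n\}$. Take $m=n=4$, $r=2$ and
\[
\W=\begin{bmatrix}1&1&1&1\\1&1&1&1\\1&1&1&1\\1&2&3&4\end{bmatrix},
\]
so $k=\rank(\W)=2$ and $\min\{kr,m,n\}=4$. The first three rows of $\W\odot(\X\Y^\top)$ equal those of $\X\Y^\top$, so they span at most a $2$-dimensional space, and $\rank(\Delta\W)\le 3$ for all $\X,\Y$. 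In your Khatri--Rao language, the first three rows of $\A$ coincide in any factorization $\W=\A\B^\top$. Hence $\mathbf{a}_i\otimes\mathbf{x}_i$, $i\le 3$, lie in the $2$-dimensional space $\mathbf{a}_1\otimes\bR^2$, and $\rank(\A\star\X)\le 3<kr$ for every $\X$. Concentrated row directions, not just zero rows, break the claim, so restricting to $\W$ without degenerate rows or columns does not help. Separately, when $kr>\max\{m,n\}$, the ranks you state for the two factors would not force the product to have rank $\min\{m,n\}$.

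The fix is to drop this step, because the proposition never claims the ceiling is attained. Item~2 needs only the ceiling plus non-surjectivity. If you want the range $(r,r_{\max}]$ to be nonempty for some $\W$, one example suffices, as in the paper: $\W=\I_m$ with $\X\Y^\top=\mathbf{1}\mathbf{1}^\top$ gives $\Delta\W=\I_m$, of rank $m>r$.

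Moreover, strictness in item~2 holds for every $\W$, with no genericity needed. The expressible set is the image of the $r(m+n-r)$-dimensional set of rank-$\le r$ matrices under the linear map $\mathbf{N}\mapsto\W\odot\mathbf{N}$. For $r<s\le\min\{m,n\}$, the rank-$s$ matrices form a manifold of dimension $s(m+n-s)=r(m+n-r)+(s-r)(m+n-s-r)>r(m+n-r)$. This is the same count the paper uses for FedPara.
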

A caveat however, is that  expressiveness depends critically on $\W_l$. For $m=n$ and $\W_l=\I_m$, for example, HiRA can represent solely diagonal matrices. 

\textbf{Kronecker based parameterization.} While the rank of a Hadamard product is submultiplicative, the rank of a Kronecker product is \textit{multiplicative}, as asserted next. 
\begin{theorem}[\cite{van2000ubiquitous}]
\label{thm.Kron-rank}
    For matrices $\A \in \bR^{d_1 \times d_2}$ and $\B \in \bR^{d_3 \times d_4}$, the $d_1d_3 \times d_2 d_4$ Kronecker product matrix has rank satisfying
    $\rank(\A \otimes \B) = \rank(\A)\rank(\B)$.
    \end{theorem}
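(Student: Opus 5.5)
The plan is to reduce the claim to the singular value decomposition and the mixed-product property of the Kronecker product, $(\A_1\otimes\B_1)(\A_2\otimes\B_2)=(\A_1\A_2)\otimes(\B_1\B_2)$, valid whenever the inner dimensions match. The mixed-product property follows by a direct block computation: the $(i,j)$ block of the left-hand side is $\sum_k (\A_1)_{ik}(\A_2)_{kj}\,\B_1\B_2$. As corollaries we get $(\A\otimes\B)^\top=\A^\top\otimes\B^\top$, and that the Kronecker product of two matrices with orthonormal columns again has orthonormal columns.

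\textbf{Step 1.} Write compact SVDs $\A=\U_A\bfSigma_A\V_A^\top$ and $\B=\U_B\bfSigma_B\V_B^\top$. Here $\rank(\A)=r_A$ and $\rank(\B)=r_B$. The factors $\U_A,\V_A$ have $r_A$ orthonormal columns, $\U_B,\V_B$ have $r_B$ orthonormal columns, and $\bfSigma_A,\bfSigma_B$ are diagonal with strictly positive diagonals.

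\textbf{Step 2.} Apply the mixed-product property twice:
\[
\A\otimes\B=(\U_A\otimes\U_B)(\bfSigma_A\otimes\bfSigma_B)(\V_A\otimes\V_B)^\top .
\]
By the corollary above,
\[
(\U_A\otimes\U_B)^\top(\U_A\otimes\U_B)=\I_{r_A}\otimes\I_{r_B}=\I_{r_Ar_B},
\]
and likewise for the $\V$ factor. The middle factor $\bfSigma_A\otimes\bfSigma_B$ is an $r_Ar_B\times r_Ar_B$ diagonal matrix whose entries are the products $\sigma_i(\A)\sigma_j(\B)$, all strictly positive, so it is invertible.

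\textbf{Step 3.} Conclude the rank. A product $\mathbf{P}\mathbf{D}\mathbf{Q}^\top$ with $\mathbf{P},\mathbf{Q}$ having orthonormal columns and $\mathbf{D}$ invertible has rank equal to the size of $\mathbf{D}$. Indeed, left-multiplying by $\mathbf{P}^\top$ and right-multiplying by $\mathbf{Q}$ recovers $\mathbf{D}$, which gives rank at least $r_Ar_B$; the factorization itself gives rank at most $r_Ar_B$. Hence $\rank(\A\otimes\B)=\rank(\A)\rank(\B)$, and as a byproduct this is an SVD of $\A\otimes\B$.

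The only step that needs any care is the mixed-product identity with rectangular, compatible dimensions, including tracking the block indexing. Everything else is bookkeeping. If one prefers to avoid the SVD, an alternative route is to take rank factorizations $\A=\mathbf{C}_A\mathbf{R}_A$ and $\B=\mathbf{C}_B\mathbf{R}_B$ with full column and full row rank respectively. Then show that $\mathbf{C}_A\otimes\mathbf{C}_B$ keeps full column rank, since its Gram matrix is the Kronecker product of invertible Gram matrices and is therefore invertible, with the analogous argument for the row factors.
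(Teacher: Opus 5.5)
Your proof is correct. The paper gives no proof of its own and simply cites Van Loan, where the rank identity is obtained the same way you do it: the mixed-product rule turns the compact SVDs into $(\U_A\otimes\U_B)(\bfSigma_A\otimes\bfSigma_B)(\V_A\otimes\V_B)^\top$, an SVD of $\A\otimes\B$ with exactly $\rank(\A)\rank(\B)$ positive singular values (the case $\A=\mathbf{0}$ or $\B=\mathbf{0}$ is trivial, and your rank-factorization alternative works equally well).
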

This property is exploited in KronA (also known as LoKr)~\cite{edalati2025krona,yeh2024navigating}. The adapter matrix is parameterized via
\[
    \Delta \W_l = \X_l \otimes \Y_l
\]
where $\X_l \in \bR^{d_1 \times d_2}$ and $\Y_l \in \bR^{d_3 \times d_4}$. KronA requires the pre-trained weight $\W_l \in \bR^{m \times n}$ satisfying the so-called shape constraints $m = d_1 d_4$ and $n = d_2 d_3$. This mild requirement is typically compatible with common pre-trained model weight sizes. The rank of $\Delta \W_l$ can be as large as $\min\{d_1, d_2\} \times \min\{d_3, d_4\}$. 
But as asserted next, not every matrix of desirable size can be represented as a single Kronecker product. 
\begin{proposition}
\label{prop.Kron-express}
For matrices $\X,\Y$ of rank at most $r$,  parameterization $\Delta \W = \X \otimes \Y$ can express a \underline{subset} of matrices of rank up to $r^2$. 
\end{proposition}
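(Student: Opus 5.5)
The proposition has two parts. First, every matrix of the form $\X \otimes \Y$ with $\rank(\X),\rank(\Y)\le r$ has rank at most $r^2$. Second, the set of such Kronecker products is a \emph{strict} subset of all $m\times n$ matrices of rank at most $r^2$.

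\textbf{Upper bound.} This follows directly from Theorem~\ref{thm.Kron-rank}:
\[
\rank(\X\otimes\Y)=\rank(\X)\rank(\Y)\le r\cdot r=r^2 .
\]
The bound is attained whenever $\min\{d_1,d_2\}\ge r$ and $\min\{d_3,d_4\}\ge r$. For instance, take $\X$ and $\Y$ to be truncated identities of rank $r$; then $\X\otimes\Y$ has rank exactly $r^2$. So the range of representable ranks genuinely extends up to $r^2$.

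\textbf{Strictness, via rank structure.} The quickest witness is a matrix whose rank is prime, say $2$ or $3$, and strictly between $\max\{\rank(\X)\}$ and $r^2$. Any Kronecker product has rank of the form $\rank(\X)\rank(\Y)$ with both factors at most $r$. The catch is that a prime rank $p\le r$ is still achievable as $p\cdot 1$. So rank-counting alone does not exclude anything, and a structural invariant is needed instead.

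\textbf{Strictness, via the Van Loan–Pitsianis rearrangement.} This is the key step. Let $\mathcal R$ be the fixed linear rearrangement that maps a $d_1d_3\times d_2d_4$ block matrix, with blocks of size $d_3\times d_4$, to the $d_1d_2\times d_3d_4$ matrix whose rows are the vectorized blocks. It satisfies
\[
\mathcal R(\X\otimes\Y)=\mathrm{vec}(\X)\,\mathrm{vec}(\Y)^\top ,
\]
so every representable matrix is sent to a rank-one matrix. It then suffices to exhibit a matrix $\M$ of rank at most $r^2$ (even rank $1$ or $2$) with $\rank(\mathcal R(\M))\ge 2$. A natural choice is $\M = \mathbf E_{11}\otimes \mathbf E_{11} + \mathbf E_{22}\otimes \mathbf E_{22}$ (for $d_i\ge 2$), built from standard basis matrices. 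It has rank $2\le r^2$ for $r\ge 2$, and its rearrangement has rank $2$, so it is not a single Kronecker product.

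For $r=1$ a slightly different witness is required. I would use a rank-one matrix $\mathbf a\mathbf b^\top$ whose blocks are not all proportional, so that its rearrangement has rank at least $2$.

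\textbf{Dimension count (optional).} As an alternative or complement, the image of $(\X,\Y)\mapsto\X\otimes\Y$ has dimension at most $d_1d_2+d_3d_4-1$. The variety of rank-$\le r^2$ matrices has dimension $r^2(m+n-r^2)$, which is typically larger. This shows the representable set is not only strict but measure-zero within that variety.

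The main obstacle is choosing the explicit witness cleanly for arbitrary block sizes while respecting the shape convention $m=d_1d_4$, $n=d_2d_3$ stated in the text, and ensuring the rearrangement map is indexed consistently with it.
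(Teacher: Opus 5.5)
Your proof is correct and is essentially the paper's argument: the bound comes from Theorem~\ref{thm.Kron-rank}, and strictness comes from an explicit rank-$2$ matrix whose diagonal blocks are not multiples of a common matrix (the paper takes $d_1=d_2=d_3=d_4=2$ with diagonal blocks $\mathbf{E}_{11}$ and $\mathbf{E}_{12}$), and your identity $\mathcal R(\X\otimes\Y)=\mathrm{vec}(\X)\,\mathrm{vec}(\Y)^\top$ is a systematic form of the paper's observation that every block of $\X\otimes\Y$ is a scalar multiple of $\Y$. Your version also covers general block sizes and the case $r=1$, which a rank-$2$ witness cannot handle; note only that your aside that rank counting ``does not exclude anything'' is wrong, since a prime $p$ with $r<p\le\min\{r^2,m,n\}$ can never equal $\rank(\X)\rank(\Y)$ with both factors at most $r$.
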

A known remedy is to rely on an ensemble (weighted sum) of Kronecker factors. This idea is leveraged in~\cite{sadeghi2025moka} with
\[
    \Delta \W_l = \sum_{k=1}^K \alpha_l^k (\X_l^k \otimes \Y_l^{k})
\]
to improve KronA's expressiveness. 
The coefficients $\alpha_l^k$ are learned from data alongside the factors $\X_l^k,\Y_l^k$ during fine-tuning, though they may equivalently be absorbed into \(\X_l^k\).

\textbf{Low-rank plus sparsity parameterization.}
Leveraging low-rank and sparsity has well-documented impact across SP areas, including robust PCA~\cite{Robust-PCA, candes2011robust}. 
The upshot of such properties is expressiveness:  superposition of low-rank plus sparse matrix components can represent a broad family of matrices, including full-rank ones.
A simple example for this rank augmentation is a properly scaled identity matrix as the sparse component; adding this to a low-rank component yields a full-rank matrix.
Building on this idea, RoSA~\cite{nikdan2024rosa} augments LoRA with a sparse matrix $\bfS_l \in \bR^{m \times n}$ via
\[
    \Delta \W_l =  \X_l\Y_l^\top + \bfS_l.
\]
While robust PCA optimizes the sparse term $\bfS_l$ via an $\ell_1$ regularizer of the vectorized matrix
\[
    \min_{\X_l \in \bR^{m \times r}, \Y_l \in \bR^{n \times r}, \bfS_l \in \bR^{m \times n}}  f(\X_l\Y_l^\top + \bfS_l) + \lambda \| \text{vec}(\bfS_l) \|_1
\]
this formulation does not account for parameter efficiency. During training, learning the sparsity pattern end-to-end still requires viewing $\bfS_l$ as a dense $m\times n$ matrix, which is prohibitively expensive to store at the scale of modern LLMs. 
Many practical approaches sidestep this issue by fixing the sparsity pattern a priori and optimizing only the nonzero values~\cite{nikdan2024rosa,bhardwaj2024rapid}. This is a justifiable design choice: a random sparse matrix can be full-rank, provided its density is above certain threshold, as summarized in the following result.
\begin{theorem}[\cite{litvak2022singularity}]
\label{thm.Bernoulli-rank}
An $n\times n$ matrix $\A$ with i.i.d. Bernoulli-$(p)$ entries has full rank with high probability $p \gtrsim \frac{\log n}{n}$.
\end{theorem}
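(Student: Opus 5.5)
The plan is to show that, with probability tending to one, there is no nonzero vector $\x$ with $\A\x=\mathbf{0}$; the case of $\A^\top$ is symmetric. First, the threshold itself is forced by a simple event. If $p\le c\log n/n$ with small $c$, then with non-negligible probability some row or column of $\A$ is identically zero. Conversely, for $p\ge C\log n/n$ a union bound gives $\Pr(\exists\text{ zero column})\le n(1-p)^n\le n e^{-pn}\to 0$. Absence of zero rows and columns is therefore the first ``obstruction-free'' event to secure. The remaining work is to show that, above this threshold, no more elaborate obstruction occurs.

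Following the Rudelson--Vershynin strategy, I would split the unit sphere into \emph{compressible} vectors (those within a small distance of $\delta n$-sparse vectors) and \emph{incompressible} ones, and bound $\inf\|\A\x\|$ on each class separately. Centering is a nuisance, since $\A=(\A-p\mathbf{1}\mathbf{1}^\top)+p\mathbf{1}\mathbf{1}^\top$. Instead of centering, I would work directly with the uncentered rows and absorb the rank-one mean in the anticoncentration estimates.

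\emph{Compressible vectors.} For sparse $\x$ the usual net argument fails, because a Bernoulli matrix with $p\sim\log n/n$ has many near-zero rows. I would therefore use the combinatorics of the bipartite support graph of $\A$. With high probability, every set $J$ of columns with $|J|\le c/p$ has a large unique-neighbourhood: many rows hit $J$ in exactly one entry. This can be checked by a union bound over $J$ using Chernoff estimates. For $\x$ essentially supported on $J$, such a row $i$ gives $(\A\x)_i=x_j$ for a single index $j$. Choosing $j$ among the largest coordinates then shows that $\|\A\x\|$ is bounded away from zero. For moderately sparse vectors, with support between $c/p$ and $\delta n$, I would combine this with a net of size $\binom{n}{k}(C/\epsilon)^k$. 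The union bound closes because, for fixed $\x$ of such support, the coordinates $(\A\x)_i$ are independent and each has an anticoncentration probability bounded away from one.

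\emph{Incompressible vectors.} Here I would use the averaging (``invertibility via distance'') lemma. It reduces the bound on $\inf\|\A\x\|$ to showing that $\mathrm{dist}(\mathbf{a}_j,H_j)$ is not small for most $j$, where $\mathbf{a}_j$ is the $j$-th column and $H_j$ is the span of the other columns. Writing $\mathbf{z}$ for a unit normal to $H_j$, which is independent of $\mathbf{a}_j$, gives $\mathrm{dist}(\mathbf{a}_j,H_j)\ge|\langle\mathbf{z},\mathbf{a}_j\rangle|$. It then suffices to bound $\sup_t\Pr(|\langle\mathbf{z},\mathbf{a}_j\rangle-t|\le\epsilon)$. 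Applying the compressible step to the transposed submatrix shows that $\mathbf{z}$ is incompressible with high probability. A Littlewood--Offord / Esseen-type small-ball bound for sparse Bernoulli sums, on the order of $(pn)^{-1/2}$ for spread-out $\mathbf{z}$, is then enough to give $\Pr(\text{singular})\to0$. No sharp singular-value estimate is required.

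The main obstacle is this last anticoncentration step in the genuinely sparse regime $pn\sim\log n$. Incompressibility alone does not prevent $\mathbf{z}$ from being ``steep'' or arithmetically structured, and the variance $p\|\mathbf{z}\|^2$ is tiny. I would first try to show that normals of random sparse subspaces have many comparable coordinates, so that a restricted sum over those coordinates yields the $(pn)^{-1/2}$ bound. Failing that, I would fall back on the finer level-set decomposition of Litvak and Tikhomirov, which handles steep vectors by a graded sequence of nets; this is where the sharp $\log n/n$ constant is won.
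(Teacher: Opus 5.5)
\paragraph{Review.} The paper does not prove this statement; it only cites Litvak--Tikhomirov. Their theorem gives the precise asymptotic $\Pr(\A\ \text{singular})=(1+o(1))\,2n(1-p)^n$ for $C\log n/n\le p\le c$. The upper restriction $p\le c$ is missing from the stated theorem but is necessary (at $p=1$ the matrix has rank one), and your anticoncentration claims use it silently. Your outline follows their architecture but has two genuine gaps.

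\emph{First gap: the compressible step fails as written.} The unique-neighbour property gives many rows meeting $J$ exactly once, but not rows meeting the largest coordinates. Take any column $j$ with neighbour rows $i_1,\dots,i_d$ ($d\approx pn$), and pick one further column $k_s$ in each row $i_s$. This gives $J=\{j,k_1,\dots,k_d\}$ of size about $pn$, far below $c/p$, in which column $j$ has no unique neighbour. So ``choose $j$ among the largest coordinates'' says nothing for vectors peaked at $j$. For vectors only close to being $J$-supported, the error $\A(\x-\x_J)$ is controlled only through $\|\A\|\approx pn$ (coming from the mean $p\mathbf{1}\mathbf{1}^\top$), and you never quantify it. Support size alone also does not give anticoncentration bounded away from one at a scale usable in a net argument, since a vector with large support can be dominated by one coordinate. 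Repairing this means tracking the whole profile of $\x$, which is exactly the steep/level-set analysis you postpone.

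\emph{Second gap: the main obstacle is deferred, not solved.} The step you call the main obstacle is handed to Litvak--Tikhomirov, i.e.\ to the result being proved.

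\paragraph{How to close the argument.} For the statement as posed, with an unspecified constant in $\gtrsim$, that obstacle disappears. Your own remark that no singular-value estimate is needed is the key. Exact singularity only involves \emph{atoms}, and atoms of sparse Bernoulli sums ignore steepness and arithmetic structure. By Erd\H{o}s--Littlewood--Offord/Kolmogorov--Rogozin, $\sup_t\Pr(\sum_j b_jv_j=t)\le C_0/\sqrt{pm}$ whenever $v$ has $m$ nonzero entries and the $b_j$ are i.i.d.\ Bernoulli-$(p)$. So work with exact null vectors, assuming $p\le 1/2$.

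(i) \emph{No sparse null vectors.} Let $\x\neq\mathbf{0}$ be a null vector of minimal support $J$, $|J|=k$. Minimality forces $\ker\A_J=\mathrm{span}(\x_J)$, where $\A_J$ collects the columns indexed by $J$. Hence $\x_J$, which has full support on $J$, is determined by some $k-1$ rows. The remaining $n-k+1$ rows are independent of these and must all be orthogonal to $\x_J$. This gives the bound $\binom{n}{k}\binom{n}{k-1}q_k^{\,n-k+1}$, where $q_k$ is the largest atom at $0$ over full-support vectors.
\begin{itemize}
\item For $k\le 1/p$: $q_k\le 1-kp(1-p)^{k-1}\le e^{-kp/4}$.
\item For $k\ge 1/p$: $q_k\le 1-c_1$.
\end{itemize}
Summing over $k\le\delta n$ gives $O(1/n)$ once $pn\ge C\log n$ with $C$ large. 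The same bound holds for the $(n-1)\times n$ matrices $(\A^{(j)})^\top$.

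(ii) \emph{Averaging.} If $\A$ is singular with no such sparse null vector, then $\mathbf{a}_j\in H_j$ for more than $\delta n$ indices $j$. Let $\mathbf{z}_j\neq\mathbf{0}$ be a normal to $H_j$ built from the other columns. By (i), its support exceeds $\delta n$ outside an event of probability $O(1/n)$. Markov's inequality then gives
$\Pr(\A\ \text{singular})\le O(1/n)+\frac{1}{\delta n}\sum_j\Pr(\langle\mathbf{z}_j,\mathbf{a}_j\rangle=0)\le O(1/n)+\frac{1}{\delta}\big(O(1/n)+C_0/\sqrt{\delta pn}\big)\to0$.

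This route needs no nets, no centering and no operator-norm bounds. The Litvak--Tikhomirov graded machinery is only required for the exact asymptotics $(1+o(1))\,2n(1-p)^n$ or for quantitative bounds on the smallest singular value, and the stated theorem asks for neither.
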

Consequently, this allows for representing full-rank matrices. 
\begin{proposition}
\label{prop.low-rank+sparse}
Parameterization $\Delta \W = \X\Y^\top + \bfS$ can express
\begin{enumerate}
    \item \underline{all} matrices of rank up to $r$, and 
    \item a \underline{subset} of matrices with rank $r < \rank (\W) \le \min\{ m ,n \}$. 
\end{enumerate}
\end{proposition}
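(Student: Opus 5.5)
Part (i) follows by switching off the sparse component. Setting $\bfS=\mathbf{0}$ turns the parameterization into the plain BM factorization $\X\Y^\top$. Take any target $\Delta\W\in\bR^{m\times n}$ with $\rank(\Delta\W)=k\le r$. Write its compact SVD $\Delta\W=\U\bfSigma\V^\top$ with $\U\in\bR^{m\times k}$ and $\V\in\bR^{n\times k}$. Set $\X=[\U\bfSigma,\ \mathbf{0}_{m\times(r-k)}]$ and $\Y=[\V,\ \mathbf{0}_{n\times(r-k)}]$; then $\X\Y^\top=\Delta\W$ exactly. Hence every matrix of rank at most $r$ is expressed, regardless of which sparsity pattern is fixed for $\bfS$.

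Part (ii) needs two things: existence of representable matrices with every rank between $r+1$ and $\min\{m,n\}$, and a reason why not all such matrices are reachable. Throughout, I read the rank in the statement as $\rank(\Delta\W)$.

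\emph{Existence.} Take $\X=\mathbf{0}$ and let $\bfS$ be diagonal with $k$ nonzero entries, where $r<k\le\min\{m,n\}$. Then $\rank(\Delta\W)=k$, so every intermediate rank is attained. In the practical regime where the sparsity pattern is fixed a priori, Theorem~\ref{thm.Bernoulli-rank} supplies the full-rank end. A random support of density $p\gtrsim \log n/n$, with generic nonzero values, yields a full-rank $\bfS$ with high probability. Adding a generic rank-$r$ term $\X\Y^\top$ keeps the matrix full rank generically, because full rank is an open condition. Intermediate ranks are then obtained by zeroing out part of the support or by a rank-one adjustment.

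\emph{Strict inclusion.} The cleanest argument is parameter counting. When the support of $\bfS$ has $s$ entries, the map $(\X,\Y,\bfS)\mapsto \X\Y^\top+\bfS$ is polynomial from a space of dimension $(m+n)r+s$. Its image is therefore contained in a semialgebraic set of dimension at most $(m+n)r+s$. In the parameter-efficient regime $s+(m+n)r<mn$, this set has measure zero. The matrices of any fixed rank $k>r$ form a manifold of dimension $k(m+n-k)$, which exceeds $(m+n)r+s$ whenever $s$ is small enough, for instance at full rank $k=\min\{m,n\}$, where the dimension is $mn$. So not all of them can be expressed.

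If $\bfS$ is allowed an arbitrary dense support, the construction above cannot produce any counterexample at all, since $\X\Y^\top+\bfS$ with dense $\bfS$ can then be any matrix. The ``subset'' claim is therefore only meaningful, and the proof only goes through, under the paper's framing of a fixed sparse pattern. I expect this to be the main obstacle: the statement must be made precise by fixing the support size $s$ and then applying the dimension-counting argument. Failing that, one can give an explicit counterexample for a fixed support. Take a target matrix whose entries off the support of $\bfS$ form a submatrix of rank exceeding $r$; no choice of $\X\Y^\top$ can match those entries, since any submatrix of $\X\Y^\top$ has rank at most $r$.
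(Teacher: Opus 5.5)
Your part (i) is the paper's argument: set $\bfS=\mathbf{0}$ and fall back on the BM factorization, which you make explicit via a padded compact SVD. Part (ii), however, takes a genuinely different and more complete route. The paper's proof of (ii) is only the subadditivity bound $\rank(\Delta\W)\le\min\{\rank(\X\Y^\top)+\rank(\bfS),m,n\}\le\min\{m,n\}$. It invokes Theorem~\ref{thm.Bernoulli-rank} to suggest that a randomly supported $\bfS$ can reach the top of this range. It never shows that the inclusion is \emph{proper}, which is the content of the underlined ``subset''. You supply both halves. For attainability of every rank in $(r,\min\{m,n\}]$, you take $\X=\mathbf{0}$ with a diagonal or progressively zeroed full-rank $\bfS$. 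The zeroing works because each entry change is a rank-one perturbation, so the rank moves by at most one per step. For strictness, you use the same dimension count the paper itself uses for Proposition~\ref{prop.Hadamard-express}: $(m+n)r+s$ parameters against the $mn$-dimensional open set of full-rank matrices. Your route also makes explicit what the paper leaves unstated. With an unrestricted $\bfS\in\bR^{m\times n}$ the map is surjective ($\X=\mathbf{0}$, $\bfS=\Delta\W$), so the claim needs a sparsity budget such as $s+(m+n)r<mn$. Likewise, the statement's $\rank(\W)$ should read $\rank(\Delta\W)$. The paper's version buys only brevity and the tie-in to fixed random supports. One caution: your fallback counterexample needs an $(r+1)\times(r+1)$ submatrix lying entirely off the support. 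Sufficiently sparse patterns guarantee this, but not every pattern does, so keep the dimension count as the primary argument.
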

Recent works have also pursued parameterizations relying purely on sparsity; see e.g., SHiRA~\cite{bhardwaj2024rapid} that adopts
\[
    \Delta \W = \W_0 \odot \bfS
\]
where $\bfS\in\bR^{m\times n}$ is a fixed sparse mask. This sparsity pattern is a design choice and can be sampled randomly or based on importance heuristics. To collect sparsity benefits on hardware, one can design specialized compute kernels\footnote{Here “kernels” refers to GPU compute kernels (i.e., codes), not e.g., Gaussian/RBF kernels.} that can effectively leverage the sparse engines available in recent GPUs. 

\subsection{Tensorized parameterization}
\label{Sec.tensorized}

\begin{figure}[t]
	\centering
	\begin{tabular}{cc}
		\hspace{-0.2cm}
		\includegraphics[width=.25\textwidth]{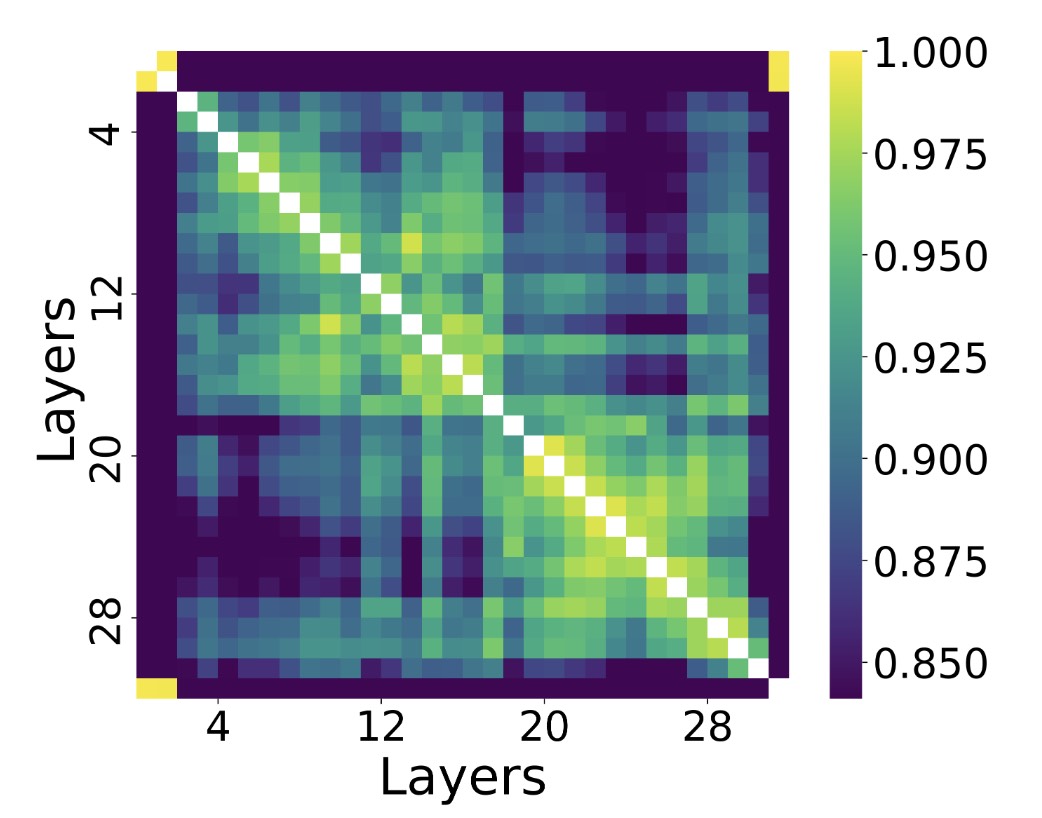}&
		\hspace{-0.2cm}
		\includegraphics[width=.24\textwidth]{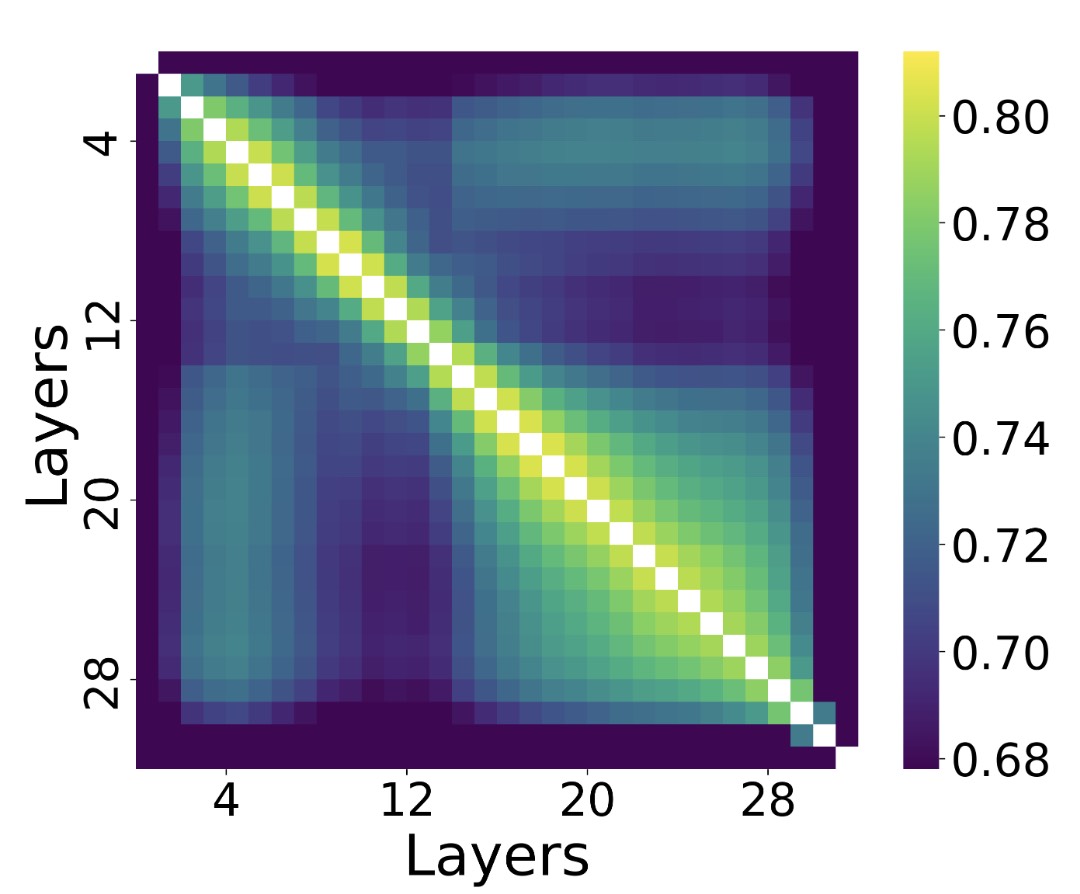}
		\\ 
	   (a) Representation similarity & (b) Weight similarity
	 \end{tabular}
	\vspace{-0.2cm}
	\caption{Visualization of correlated layers in Llama 3.1-8B-Instruct using linear centered kernel alignment (CKA) \cite{kornblith2019similarity}.
    (a) Representation similarity before the feedforward network. (b) Weight similarity of the MLP (up projection). This figure is taken from \cite{min2025docs}.}
	\vspace{-0.3cm}
	 \label{fig.layer-sim}
\end{figure}

While LoRA and the aforementioned variants improve efficiency via layer-wise low-rank updates, they typically treat adapters in different layers as unrelated matrices, and do not account for potential correlations across layers. 
However, recent studies suggest that layers in deep networks can be related, and often exhibit substantial redundancy or shared structure~\cite{min2025docs,lan2019albert,csordas2025language}; see also Figure \ref{fig.layer-sim} for an exmaple in Llama-3-Instruct.
Overlooking these inter-layer dependencies may leave parameter efficiency on the table. 

Parameters of tensor models offer a prudent approach to capturing such topological connections. Consider a collection of weight matrices with identical dimensions $\{\W_l \in \bR^{m\times n}\}_{l=1}^L$, e.g., the query projection matrices in attention blocks. During fine-tuning, their additive updates $\{\Delta \W_l \in \bR^{m\times n}\}_{l=1}^L$ can be stacked to form a three-way tensor $\Delta \bm{\mathcal{W}} \in \bR^{m \times n \times L}$. 
Notably, these weights can be reshaped into higher-order tensor models in various ways. This subsection will focus on third-order tensors for clarity, while the analysis can be readily extended to higher-order tensors. 
If $\Delta \bm{\mathcal{W}}$ exhibits low tensor rank under an appropriate definition\footnote{Computing tensor rank is generally NP-hard.}, it is natural to adopt tensor factorization schemes to parameterize the updates compactly, thereby capturing the relation across layers while reducing the number of trainable parameters.
The total parameter count can drop further from the LoRA-style scaling of ${\cal O}((m+n)rL)$ to ${\cal O}((m+n+L)r)$, where the dependence on the $L$ layers becomes additive rather than multiplicative. An overview of the tensor-based parameterization is depicted in Figure~\ref{fig.tensor}.

\begin{figure}[t]
	\centering
	\includegraphics[width=0.48\textwidth]{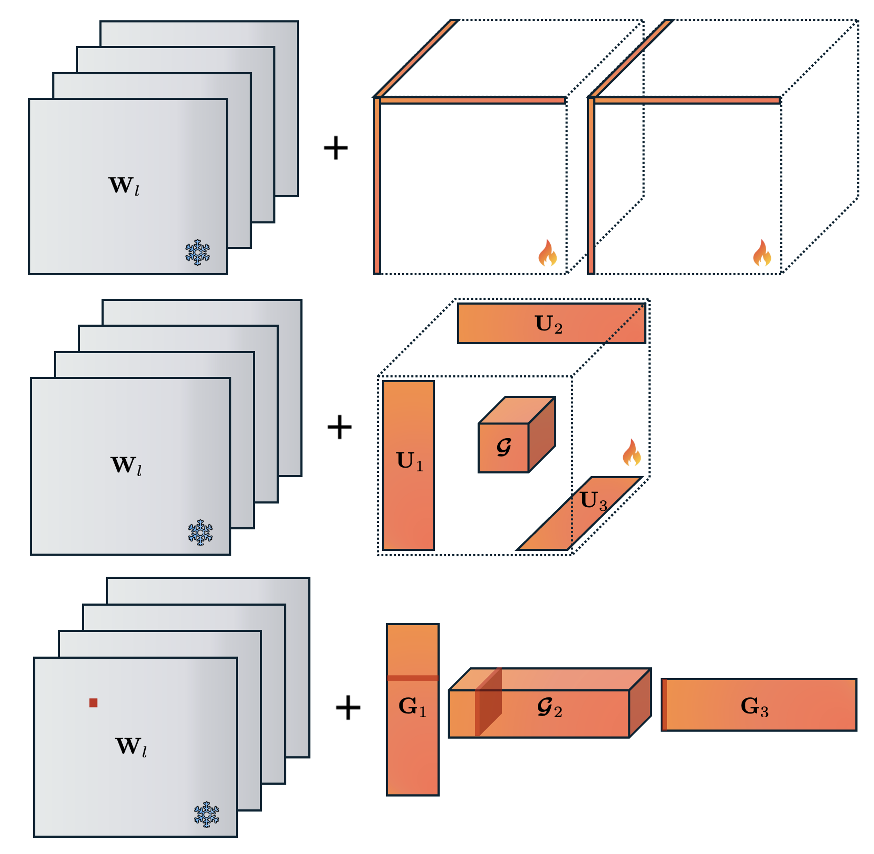}
	\caption{LoRA with~\cite{hounie2024lorta}, Tucker~\cite{bershatsky2024lotr}, and TT~\cite{yang2024loretta} parameterizations.}
	 \label{fig.tensor}
\end{figure}

\textbf{Canonical-Polyadic (CP)-based parameterization.} 
CP decomposition is a popular tensor factorization model that generalizes the notion of matrix rank to higher-order tensors. It represents an $N$-th order tensor as the superposition of rank-one summands, given by  the outer product (denoted by $\circ$) of $N$ vectors $\mathbf{\mathcal{A}} := \mathbf{s}_1 \circ \ldots \circ \mathbf{s}_N$, with $(i_1, \ldots, i_N)$-th entry $\mathbf{\mathcal{A}}[i_1,\ldots,i_N] = \prod_{n=1}^N s_{n, i_n}$. 

For a three-way additive fine-tuning update $\Delta \bm{\mathcal{W}} \in \bR^{m \times n \times L}$, CP based methods parameterize it as a rank-$r$ tensor
\[
    \Delta \bm{\mathcal{W}}= \llbracket \bfS_1,\bfS_2,\bfS_3 \rrbracket
    := \sum_{i=1}^r  \mathbf{s}_{1,i} \circ  \mathbf{s}_{2,i} \circ  \mathbf{s}_{3,i}
\]
where $\bfS_1\in\bR^{m\times r}$, $\bfS_2\in\bR^{n\times r}$, and $\bfS_3\in\bR^{L\times r}$ are factor matrices. Here, $\mathbf{s}_{k,i}$ denotes the $i$-th column of $\bfS_k,\,k=1,2,3$.
This parameterization involves $ r(L + m + n)$ parameters, offering remarkable savings compared to $Lmn$ parameters present in the full update $\Delta \bm{\mathcal{W}}$. 

In practice, the effectiveness of these approaches also depends on how weights are tensorized before applying CP factorization. For example, CaRA~\cite{veeramacheneni2025cp} for Vision Transformers (ViTs) groups the multi-head query, key, and value (QKV) projection matrices across layers to form a fourth-order tensor, while aggregating the output projection and the two FFN layers into another tensor, with CP applied to each tensor separately. Likewise, in LLM settings, LoRTA~\cite{hounie2024lorta} structures the updates of QKV and output projection matrices as a fifth-order tensor parameterized using the CP decomposition. 

Beyond parameter efficiency, a major upshot of CP in SP is its identifiability, which asserts under mild conditions that the CP decomposition of a specific tensor is unique up to permutation and scaling of components; see, e.g.,~\cite{jiang2004kruskal,sidiropoulos2017tensor}. Intriguing questions for fine-tuning is whether identifiability could yield more interpretable updates, or, be leveraged to improve the optimization landscape of tensorized adapters. 

\textbf{Tucker decomposition-based parameterization.} 
Another popular SP tool for tensor factorization is the Tucker decomposition~\cite{tucker1964extension,de2000multilinear}, which can be viewed as a higher-order counterpart of the matrix SVD: 
\[ 
    \bm{\mathcal{A}}  = \bm{\mathcal{G}} \times_1 \U_1 \times_2 \ldots \times_N \U_N 
\]
where $\bm{\mathcal{G}}$ is a smaller core tensor, and $\{ \U_n \}_{n=1}^N$ are mode-wise factor matrices (generalized “singular vectors”).
Unlike the matrix SVD where the core (singular value matrix) is strictly diagonal, the Tucker core $\bm{\mathcal{G}}$ is typically dense. Conceptually, this can be viewed as a generalization of the PoLAR magnitude–direction decomposition, where the factor matrices capture subspace directions per mode, while the core tensor depicts the multi-way “magnitudes” and couplings across modes. 
SP algorithms often impose orthogonality constraints on the factor matrices, whereas modern deep learning implementations typically relax this requirement to simplify optimization. Our discussion hereafter follows the latter.

In fine-tuning, the stacked additive weights $\Delta \bm{\mathcal{W}} \in \bR^{m \times n \times L}$ can be parameterized via Tucker decomposition as
\begin{align}
    \Delta \bm{\mathcal{W}} & = \bm{\mathcal{G}} \times_1 \U_1 \times_2 \U_2 \times_3 \U_3 \\ 
    & := \sum_{i=1}^{r_1} \sum_{j=1}^{r_2} \sum_{k=1}^{r_3} g_{ijk}  \mathbf{u}_{1, i} \circ \mathbf{u}_{2, j} \circ \mathbf{u}_{3, k} \nonumber
\end{align}
where $\U_1\in\bR^{m\times r_1}$, $\U_2\in\bR^{n\times r_2}$, and $\U_3\in\bR^{L\times r_3}$ are factor matrices; $\bm{\mathcal{G}}\in\bR^{r_1\times r_2\times r_3}$ is the core tensor; and $\times_k$ denotes the mode-$k$ tensor–matrix product.
Tucker represents an $mnL$-dimensional tensor using $r_1r_2r_3 + Lr_1 + mr_2 + nr_3$ parameters. This full three-mode factorization is also known as Tucker-3. In practice, tensorization can markedly affect parameter efficiency, as the core size $r_1r_2r_3$ grows multiplicatively with the chosen multilinear ranks. 
Fact-TK~\cite{jie2023fact} concatenates weight matrices across ViT layers (with FFN linear layers sliced to compatible sizes) to form a three-way tensor, and applies Tucker-3 factorization to parameterize the update. 
Alternatively, LoTR~\cite{bershatsky2024lotr} stacks weights from LLM layers to form a third-order tensor, but adopts a Tucker-2 variant, which leaves one mode uncompressed by setting $\U_3 = \I$ and $r_3=n$; that is, 
\begin{align}
    \Delta \bm{\mathcal{W}} = \bm{\mathcal{G}} \times_1 \U_1 \times_2 \U_2 
   ~~\Leftrightarrow ~~ \Delta \W_l = \U_1 \mathbf{G}_l \U_2^\top.  \label{eq.lotr}
\end{align}
This formulation naturally admits a layer-wise interpretation: all layers share the same $\U_1$ and $\U_2$, while each layer learns its own core slice $\mathbf{G}_l$. 
Note that if we merge $\U_l := \U_1 \mathbf{G}_l$ in~\eqref{eq.lotr}, then each layer update becomes $\Delta \W_l = \U_l \U_2^\top$. This recovers a weight-sharing variant of LoRA~\cite{renduchintala2024tied}, where the right projection matrix is tied for (shared by) all layers. 

\begin{table*}
  \caption{A SUMMARY of efficient architecture variants of LoRA. }
  \centering
  \resizebox{2\columnwidth}{!}{
  \renewcommand{\arraystretch}{1.25}
  \begin{tabular}{c c c c c c }
    \toprule
    Method & reference & Architecture & Notes & SP relation & \# parameters 
    \\
    \midrule
    LoRA &~\cite{hu2021lora} & $\Delta \W_l = \X_l\Y_l^\top$ & N/A & BM factorization & $(m+n)r L$ 
    \\
    \midrule
    AdaLoRA &~\cite{zhang2023adaptive} & $\Delta \W_l = \U_l\diag(\bm \sigma_l)\V_l^\top$ & $\U_l^\top \U_l \approx \I $, $\V_l^\top \V_l \approx \I $  & SVD &  $\leq (m+n + 1)r L$   \\
    PoLAR/SteLLA &~\cite{kai2025,li2025stella} & $\U_l \bfSigma_l\V_l^\top$ & $\U_l^\top \U_l \approx \I $, $\V_l^\top \V_l \approx \I $, $\bfSigma_l \in \bR^{r \times r}$ & SVD & $ (m+n + r)r L$ 
    \\
    \midrule
    RoSA & $\cite{nikdan2024rosa} $ & $\Delta \W_l = \X_l\Y_l^\top + \mathbf{S}_l$ &  $\mathbf{S}_l \in \bR^{m\times n}$ yet sparse & Robust PCA & $[(m+n)r + s] L $ 
    \\
    FedPara & \cite{hyeon2021fedpara} & $ \Delta \W_l = (\X_l^1\Y_l^{1\top}) \odot (\X_l^2\Y_l^{2\top}) $ & N/A & Hadamard product & $2(m+n)rL$   
    \\
    HiRA  &~\cite{huang2025hira} & $ \Delta \W_l = \W_l \odot (\X_l\Y_l^{\top}) $ & N/A & Hadamard product & $(m+n)rL$  
    \\ 
    KronA &~\cite{huang2025hira} & $ \Delta \W_l = \X_l \otimes \Y_l $, $\X_l \in \bR^{d_1 \times d_2}$, $\Y_l \in \bR^{d_3 \times d_4}$ & $m = d_1 d_2$, $n = d_3 d_4$ & Kronecker product & $(d_1d_2 + d_3 d_4)L$   
    \\
    
    \midrule
    CaRA/LoRTA &~\cite{veeramacheneni2025cp,hounie2024lorta} & $ \Delta \bm{\mathcal{W}} = \sum_{i=1}^r  \mathbf{s}_{1,i} \circ  \mathbf{s}_{2,i} \circ  \mathbf{s}_{3,i}$ & N/A & CP & $mr + nr + Lr$ 
    \\
    LoTR &~\cite{bershatsky2024lotr} & $\Delta \bm{\mathcal{W}}  = \bm{\mathcal{G}} \times_1 \U_1 \times_2 \U_2$ & weight sharing across layers & Tucker-2 & $mr + nr + r^2 L$ 
    \\
	FacT-TK &~\cite{jie2023fact} & $\Delta \bm{\mathcal{W}} = \bm{\mathcal{G}} \times_1 \U_1 \times_2 \U_2 \times_3 \U_3$ & N/A & Tucker-3 & $mr + nr + Lr + r^3$  \\
    FacT-TT/MetaTT &~\cite{jie2023fact,lopez2025metatt} & $ \Delta \bm{\mathcal{W}}[i,j,k] = \mathbf{G}_1[i,:] \bm{\mathcal{G}}_2[:, j, :]  \mathbf{G}_3[:,k] $  & $\mathbf{G}_1\in\bR^{m\times r}$, $\bm{\mathcal{G}}_2\in\bR^{r\times n\times r}$, and $\mathbf{G}_3\in\bR^{r\times L}$ & TT & $mr + nr^2 + Lr$ \\
    \midrule
    DoRA &~\cite{liu2024dora} & $\W_l + \Delta \W_l = \mathbf{m}_l  \frac{\W_l + \X_l \Y_l^\top}{\| \W_l + \X_l \Y_l^\top\|_c}$ & N/A & Rotation & $(m+n)rL + nL$  \\
    Vera  &~\cite{kopiczko2023vera} & $\Delta \W_l = \diag(\mathbf{a}_l) \A  \diag(\mathbf{b}_l) \B^\top $ & $\mathbf{A} \in \bR^{m \times r}$ and $\mathbf{B}\in \bR^{m \times r}$ shared &  Parameter sharing & $(m + r)L$ 
    \\
    NoLA  &~\cite{koohpayegani2023nola} &  $\Delta \W_l = \big(\sum_{k=1}^K \alpha_l^k \A^k \big) \big(\sum_{k=1}^K \beta_l^k \B^k \big)$ &  Dictionaries: $\{\A^k \in \bR^{m \times r}\}_{k=1}^K$ and $\{\B^k \in \bR^{n \times r}\}_{k=1}^K$  & Parameter sharing & $K(m+n)r + 2KL$ 
    \\
    \bottomrule
  \end{tabular}
  }
  \label{tab.para}
\end{table*}

\textbf{Tensor train (TT)-based parameterization.}
TT decomposition represents a high-order tensor as a sequence or ``train,'' of low-order cores. This enables a parsimonious parameterization whose complexity grows linearly (rather than exponentially as in Tucker) with the tensor order. 
Formally, given an $N$-th order tensor $\bm{\mathcal{A}} \in\bR^{I_1\times \cdots \times I_N}$, TT expresses each entry as a product of matrices
\[
    \bm{\mathcal{A}}[i_1,\dots,i_N] = \bm{\mathcal{G}}_1[:,i_1,:] \bm{\mathcal{G}}_2[:,i_2,:] \cdots \bm{\mathcal{G}}_N[:,i_N,:]
\]
where each core $\bm{\mathcal{G}}_k \in \bR^{r_{k-1}\times I_k \times r_k}$, and the matrix slice $\bm{\mathcal{G}}_k[:, i_k, :]\in\bR^{r_{k-1}\times r_k}$. 
Integers $r_0,r_1,\dots,r_N$ are the TT ranks, with boundary conditions $r_0=r_N=1$. 
For fine-tuning, the stacked additive weights form a three-way tensor $\Delta \bm{\mathcal{W}} \in \bR^{m\times n\times L}$ and rely on a three-core TT factorization to obtain
\[
    \Delta \bm{\mathcal{W}}[i,j,k] = \mathbf{G}_1[i,:] \bm{\mathcal{G}}_2[:, j, :] \mathbf{G}_3[:,k] 
\]
where $\mathbf{G}_1\in\bR^{m\times r_1}$, $\bm{\mathcal{G}}_2\in\bR^{r_1\times n\times r_2}$, and $\mathbf{G}_3\in\bR^{r_2\times L}$ (recall that $r_0=r_3=1$). 
TT involves $r_1 m + r_1 r_2 n + r_2 L$ parameters, but its parameter count is sensitive to the permutation of modes; e.g., swapping the second and third modes changes intermediate rank requirements, and thus the total parameter count.

In practice, implementation strategies vary by how the weights are tensorized. Fact-TT~\cite{jie2023fact} concatenates the weights per layer of a ViT to form a three-way tensor, and applies TT decomposition for parameterization. In contrast, LoRETTA~\cite{yang2024loretta} reshapes the weight matrices of \textit{a single layer} to form higher-order tensors, and uses TT cores to parameterize the associated updates. MetaTT~\cite{lopez2025metatt} further extends this approach by incorporating both the layer dimension and the intra-layer dimensions into a single high-order TT parameterization.

\textbf{Comparison.}
Next, a brief comparison of these tensor-based methods is provided. In terms of parameter count, CP and TT are often more compact than Tucker. Specifically, for an $N$-way tensor, their trainable parameters typically scale roughly linearly with order $N$, whereas Tucker includes a core tensor whose size grows multiplicatively with $N$.
While efficient, TT is often less interpretable than CP or Tucker. Furthermore, because TT parameterization involves a long chain of matrix products, it can be viewed as a generalization of deep linear networks. Consequently, it may introduce additional optimization challenges as $N$ grows~\cite{arora2018}. For instance, the iteration complexity of gradient descent can grow exponentially with depth (and thus with tensor order) in certain regimes~\cite{shamir2019exponential}. 
For fine-tuning, all three parameterizations appear promising empirically, and in many reported settings they can match matrix-based adapters while using fewer trainable parameters. However, empirical evidence is still fragmented across tasks, models, and tensorization choices. Developing principled guidelines for, e.g., which particular decomposition is preferable under specific hardware or data regimes, remains an important 
uncharted territory. 

\subsection{Other SP-related parameterizations}

Beyond structured low-rank matrices and tensors, several variants also connect naturally to SP through the lenses of parameter sharing and rotation-based parameterizations. Parameter sharing has long played a central role in SP, as exemplified by dictionary learning and sparse coding \cite{papyan2017working}, whereas rotations and orthogonal transforms are classical tools for representing signals in more efficient and structured coordinate systems \cite{wang2012introduction}.

\textbf{Parameter sharing.} 
Sharing parameters across layers has been leveraged in deep learning to reduce the number of trainable parameters, thereby improving memory efficiency; see e.g.,~\cite{lan2019albert}. In the fine-tuning context, we have already discussed LoTR~\cite{bershatsky2024lotr} through the lens of Tucker-2 parameterization. Additional examples include
VeRA~\cite{kopiczko2023vera} that introduces a vector-based variant of LoRA using the per-layer update 
\[
    \Delta \W_l = \diag(\mathbf{a}_l) \A  \diag(\mathbf{b}_l) \B^\top
\]
where $\A\in\bR^{m\times r}$ and $\B\in\bR^{n\times r}$ are fixed Gaussian random matrices shared across all layers. The only trainable parameters are the per-layer scaling vectors $\mathbf{a}_l\in\bR^m$ and $\mathbf{b}_l\in\bR^r$.
In comparison, NoLA~\cite{koohpayegani2023nola} employs a fixed dictionary of random matrices $\{\A^k \in \bR^{m \times r}\}_{k=1}^K$ and $\{\B^k \in \bR^{n \times r}\}_{k=1}^K$ shared across layers, and learns the layer-specific mixing weights $\bm{\alpha}_l \in\bR^K $ and $\bm{\beta}_l\in\bR^K$, to arrive at the layer-wise update 
\begin{align}
    \Delta \W_l = \Big(\sum_{k=1}^K \alpha_l^k \X^k \Big) \Big(\sum_{k=1}^K \beta_l^k \Y^k \Big). 
\end{align}
While storing the dictionary can incur large memory overhead, this can be mitigated by relying on structured matrices, e.g., Fourier bases, which can be generated on-the-fly~\cite{gao2024}.

\textbf{Rotation-based fine-tuning.} 
Aside from additive adapters, an alternative is to model fine-tuning updates as multiplicative $\W^{\text{ft}}=\W^{\text{pre}}\bfP$ with some $\bfP\in\bR^{n\times n}$. 
Different from additive updates, this choice reflects a distinct tradeoff. On the upside, the induced weight increment $\W^{\text{ft}}-\W^{\text{pre}} = \W^{\text{pre}} (\mathbf{P} - \mathbf{I}_n )$ can have high rank. 
On the downside, $\W^{\text{ft}}$ is constrained to lie in the column space of $\W^{\text{pre}}$, which can rule out updates that additive LoRA can represent.
Further, parameter efficiency can be acquired by restricting $\bfP$ with additional structure, such as a block-diagonal form. If one further imposes (approximate) orthogonality, this leads to orthogonal fine-tuning methods including OFT~\cite{qiu2023controlling}, BOFT~\cite{liu2023parameter}, ETHER~\cite{bini2024ether}, and HRA~\cite{HRA}. These approaches are popular in text-to-image generation, and can be interpreted as rotating the neurons, which are just the columns of $\W^{\text{pre}}$.
They differ mainly in how they constrain and parameterize $\bfP$, offering different balance points between expressiveness and parameter efficiency. For example, OFT uses a block-diagonal parameterization of $\mathbf{P}$, whereas BOFT is motivated by the Cooley–Tukey fast Fourier transform to represent a dense $\mathbf{P}$ as a product of sparse matrices via butterfly structures.

Another way to express $\mathbf{P}$ implicitly is through a magnitude–direction decomposition of neurons (i.e., column of $ \W_l + \Delta \W_l$), as in DoRA~\cite{liu2024dora}. While DoRA was originally presented through the lens of (vectorized) weight normalization, a widely used technique in deep learning~\cite{salimans2016weight}, it can also be naturally viewed within the rotation framework. Concretely, DoRA parameterizes the update as
\[
    \W_l + \Delta \W_l = \mathbf{m}_l  \frac{\W_l + \X_l \Y_l^\top}{\| \W_l + \X_l \Y_l^\top\|_c}
\]
where $\|\cdot\|_c$ denotes the column-wise $\ell_2$ norms of a matrix, with multiplication and division understood columnwise. Under this parameterization, the directional component is captured implicitly by normalizing each column of $\W_l + \X_l \Y_l^\top$, while $\mathbf{m}_l$ controls the corresponding scaling of each neuron.

\subsection{A summary of efficient parameterizations}
Let us consider a case study to briefly compare the parameterization approaches discussed so far. Specifically, consider a network with linear layers and associated weights $\{\W_l \in \bR^{m\times n}\}_{l=1}^L$. This setup slightly idealizes the Transformer architectures, where attention and multilayer perceptron (MLP) projections typically differ in dimensions. Nonetheless, it suffices to highlight the main ideas, and extend these results to  heterogeneous shapes requires only minor algebraic adjustments. For an apples-to-apples comparison, all rank-related hyperparameters are standardized as $r$ in both matrix- and tensor-based parameterizations.
The latter simply stacks the layer weights $\{\W_l\in\bR^{m\times n}\}_{l=1}^L$ to form a third-order tensor, and does not consider alternative choices such as reshaping into higher-order tensors. 
Different parameterizations are summarized in Table~\ref{tab.para}. A head-to-head empirical comparison is deliberately omitted, as the performance of these methods is highly sensitive to the optimization strategy employed. Indeed, many parameterizations can be further improved when paired with an optimizer tailored to their underlying structure. In the next section, we review recent optimization advances that explicitly account for these adapter architectures.

\section{Efficient optimization of LoRA}
\label{Sec.opt}
Beyond adapter models, the optimization strategy is another crucial ingredient for efficient fine-tuning. An example is depicted in Figure~\ref{fig.polar}(c): without an optimizer that effectively learns the adapter architecture, LoRA often underutilizes the available rank budget and results in limited expressiveness. 

While optimization theory for LoRA in full LLMs remains far from complete, it is already clear that off-the-shelf smooth optimization methods do not always translate cleanly selected parameterizations. In fact, even the one-layer testbed in~\eqref{eq.prob-sensing} can exhibit an unfavorable landscape, which lacks the “nice” properties present in standard analyses of smooth optimization. Furthermore, LoRA introduces additional architectural structures, such as parametric symmetries (gauge invariances), that generic optimizers do not exploit.

A common strategy for improving LoRA optimization is to first understand and explicitly characterize these overlooked properties on simplified testbeds; cf. \eqref{eq.prob-sensing}. This serves as a principled sanity check: if an algorithm can hardly cope with a simplest setup, it is less likely to robustly carry over to large scale fine tuning. Let us first outline challenges and structural opportunities that one can leverage for efficient optimization.

\subsection{The simplest testbed problem} 
Recall from Section~\ref{Sec.LoRA-single-layer} that fine-tuning a linear layer is intimately related to classical matrix sensing. To streamline the discussion, let us focus on the population form of~\eqref{eq.prob-sensing} in the limit $N\to\infty$. Under mild assumptions, the objective reduces to the matrix factorization problem
\begin{align}\label{eq.prob-mf}
    \min_{\X \in \bR^{m \times r}, \Y \in \bR^{n \times r}} f(\X, \Y):= 	\frac{1}{2}\| \A - \X\Y^\top \|	_\fro^2.
\end{align}
Although the global optimum in~\eqref{eq.prob-mf} is characterized by the Eckart–Young–Mirsky theorem via a truncated SVD of $\A$, the associated optimization landscape is non-trivial. Moreover,~\eqref{eq.prob-mf} already captures several structural features shared with LoRA fine-tuning that are underexploited by standard optimization approaches.

\textbf{The loss landscape.}
The bilinear parameterization \(\X\Y^\top\) introduces nonconvexity, and can violate standard regularity conditions presumed in smooth optimization analyses. Generally speaking, attractive global properties, including smoothness or Polyak–{\L}ojasiewicz (PL) inequality, no longer hold for $f$. This can be readily illustrated by a simple example.
\begin{example}\label{example.simple}
    Consider the simplest instance of~\eqref{eq.prob-mf}, where $m=n=r = 1$, and $\A = 1$; that is,
    \[
        f(x, y) = \frac{1}{2}\| xy - 1 \|^2.
    \]
\end{example}
First, it is straightforward to see that $(x,y)=(0,0)$ is a saddle point, confirming nonconvexity. The presence of saddle points precludes the global PL condition (gradient dominance) $\|\nabla f\|^2 \ge c\,(f-f^\star)$ for global minimum $f^*$, and constant $c > 0$. To see that $f$ is not globally smooth, check the Hessian 
\begin{equation}\label{eq.example_hessian}
    \mathbf{H}_f(x,y)=
    \begin{bmatrix}
        y^2 & 2xy - 1\\
        2xy - 1 & x^2
    \end{bmatrix}. 
\end{equation}
Along the curve $xy=\tfrac{1}{2}$, the largest  eigenvalue $\lambda_{\max}(\mathbf{H}_f)=\max\{x^2,y^2\}$. Since $x$ or $y$ can be arbitrarily large, the curvature is unbounded, verifying that $f$ is not globally smooth. 

These unfavorable properties also persist in the LoRA fine-tuning of LLMs. In particular, the induced objectives are nonconvex and lack global smoothness, with curvature varying substantially across the parameter space. In higher dimensions, the loss landscape becomes even richer, exhibiting more intricate saddle structures and stronger curvature variability, which in turn poses remarkable optimization challenges. Despite these difficulties, \eqref{eq.prob-mf} and LoRA fine-tuning share certain favorable topological features. For instance, \eqref{eq.prob-mf} is known to admit no spurious local minima; i.e., all local minima are globally optimal. Likewise, LoRA has also been shown free of spurious local minima in the neural tangent kernel (NTK) regime~\cite{jang2024lora}, though little is known outside this regime. 

\textbf{Architectural properties.}
An appealing optimization algorithm for LoRA~\eqref{eq.prob-lora}, and its simplified variant~\eqref{eq.prob-mf}, should not only cope with nonconvexity (e.g., by escaping saddles), but also maintain stability in the absence of global smoothness. In the prototype problem~\eqref{eq.prob-mf}, we call an algorithm \emph{efficient} if it 
reaches a global optimum within polynomial time.
Classical theories, especially for local convergence, have been thoroughly developed; see the survey~\cite{chi2019nonconvex} and references therein. Beyond these well-studied aspects, there are also architecture-induced levers that can be explicitly exploited, as  discussed next.

Among the structural issues presented in the following subsections, the most salient one is the parametric symmetry, also known as \emph{gauge invariance}. Both~\eqref{eq.prob-lora} and~\eqref{eq.prob-mf} exhibit this property in their parameterizations. Using Example~\ref{example.simple} for illustration, points such as $(x_1,y_1)=(\tfrac{1}{2},1)$ and $(x_2,y_2)=(\tfrac{1}{8},4)$ are gauge-equivalent: they produce identical loss values, yet induce distinct local geometries, e.g., different Hessians~\eqref{eq.example_hessian}. This discrepancy in curvature can lead to noticeably different optimization behaviors.

For ease in exposition, we confine our discussion to an easy case, where the dimension $r$ of $\X$ and $\Y$ is chosen identical to  the target $\rank (\A) := r_A$. 
In this setups, a zero training loss is achievable\footnote{In real-world fine-tuning, $0$ training loss is often observed when applying LoRA to 7B models on e.g., the GLUE benchmark.}. For later use, the condition number of $\A $ is $\kappa:= \sigma_{\max} (\A) / \sigma_{\min} (\A) $. The layer index $l$ is dropped hereafter for brevity.

\subsection{Initialization of LoRA}
\label{Sec.init}

The vanilla LoRA initializes $\X_0 \sim {\cal N}(0, \Theta(1/m))$ and $\Y_0 = \mathbf{0}$ for $\forall l$. This ensures $\W_l + \X_0 \Y_0^\top = \W_l$, meaning that fine-tuning initiates exactly from the pre-trained weights. While it is also possible to initialize $\X_0 = \mathbf{0}$ and $\Y_0 \sim {\cal N}(0, \Theta(1/r))$\footnote{The default ``fan-in'' mode normalizes variance of neural network weights based on the number of input units (fan-in) to a layer.},~\cite{hayou2024impact} shows that the vanilla LoRA initialization enables larger learning rates, and leads to more stable feature learning in the regime of infinite network width. 

But how does initialization affect optimization behavior? In classical smooth nonconvex optimization, initialization is analyzed mainly through its distance to an optimum or a stationary point: starting closer generally reduces the required iterations for convergence~\cite{nesterov2013introductory}. However, because the objectives in~\eqref{eq.prob-lora} and~\eqref{eq.prob-mf} are nonconvex and not globally smooth, initialization plays a critical role in a more complicated way than the simple distance-to-solution. 
In particular, a class of ``subspace aligned'' initializations have proven useful for both~\eqref{eq.prob-mf} and~\eqref{eq.prob-lora}.

\begin{figure}[t]
	\centering
	\includegraphics[width=0.4\textwidth]{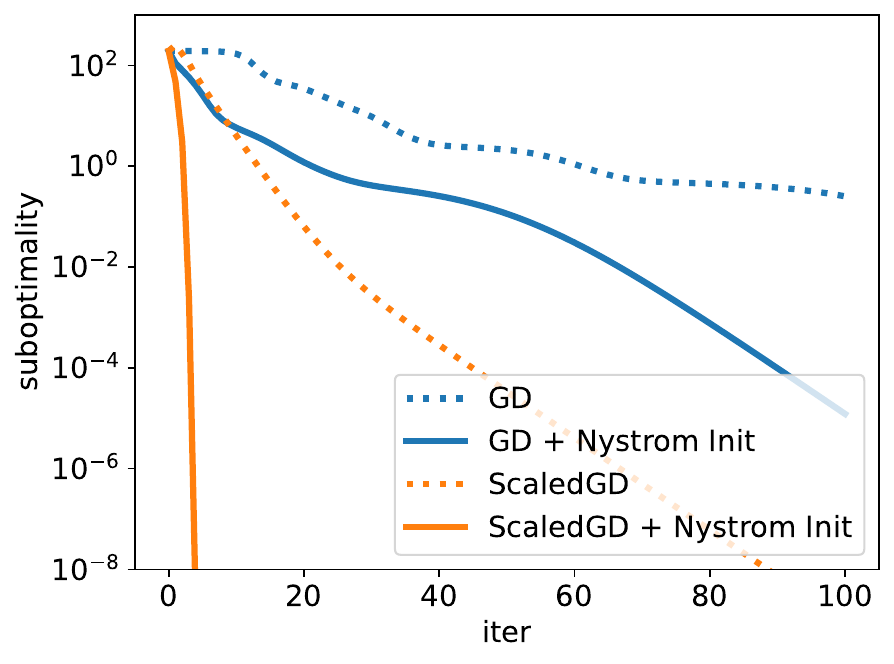}
	\caption{Convergence of~\eqref{eq.prob-mf} with random Gaussian and Nystr\"om initializations.}
	 \label{fig.init}
\end{figure}

For the one-layer adaptation in~\eqref{eq.prob-mf}, recent work~\cite{li2025nora,xu2024provable} proves that the initialization scheme can have up to an \emph{exponential} effect on the convergence rate with a given optimization algorithm. 
To be specific, a Nystr\"om sketch~\cite{frangella2023randomized,tropp2017fixed} is applied for initialization
\begin{align}\label{eq.nystrom_init}
    \X_0  = \A \bfPhi, ~~ \Y_0 = \mathbf{0}
\end{align}
where $\bfPhi \in \bR^{r \times r}$ is a Gaussian random matrix. 

We first highlight a representative case, where initialization can have a pronounced impact. Consider the optimization algorithm \emph{ScaledGD}~\cite{tong2021}, which is elaborated later in Section~\ref{Sec.gauge-invariance}. With step size $\eta > 0$, its iterates (indexed by $t$) are  
\begin{subequations}\label{eq.asym-iter}
	\begin{align}
		& \X_{t+1} = \X_t - \eta (\X_t \Y_t^\top - \A) \Y_t (\Y_t^\top \Y_t)^{-1}  \\
		& \Y_{t+1} = \Y_t - \eta (\X_t \Y_t^\top - \A)^\top \X_t (\X_t^\top \X_t)^{-1}.
	\end{align}
\end{subequations}
\begin{theorem}[Impact of initialization for ScaledGD~\cite{jia2023, li2025nora}, informal]\label{thm.scaledgd}
    With random initialization $\X_0 \sim {\cal N}(0, \sigma_x^2)$ and $\Y_0 \sim {\cal N}(0, \sigma_y^2)$ for some sufficiently small constants $\sigma_x$ and $\sigma_y$, ScaledGD with proper step sizes applied to~\eqref{eq.prob-mf} requires ${\cal O}(\log(1/\epsilon))$ iterations to reach global convergence $f(\X_t, \Y_t)\leq \epsilon$. However, ScaledGD under Nystrom initialization~\eqref{eq.nystrom_init} achieves the same error in ${\cal O}( \log\log(1/\epsilon))$ iterations.
\end{theorem}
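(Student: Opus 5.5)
The plan is to treat the two claims separately. The Nystr\"om part can be handled by a direct, essentially exact computation, because the iterates stay aligned with the column and row spaces of $\A$. The random-initialization part relies on the known small-initialization analysis of ScaledGD~\cite{jia2023}.

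\textbf{Nystr\"om initialization.} Since $\A\in\bR^{m\times n}$, I read the sketch in~\eqref{eq.nystrom_init} as $\bfPhi\in\bR^{n\times r}$ Gaussian. Because $\rank(\A)=r$, we have $\rank(\A\bfPhi)=r$ almost surely, and hence $\mathrm{col}(\X_0)=\mathrm{col}(\A)$. With $\Y_0=\mathbf{0}$, the $\X$-update is vacuous in the first step; I would interpret $(\Y_0^\top\Y_0)^{-1}$ as a pseudo-inverse, so $\X_1=\X_0$. The $\Y$-update gives
\[
\Y_1=\eta\A^\top\X_0(\X_0^\top\X_0)^{-1},
\]
and therefore
\[
\X_1\Y_1^\top=\eta\,\mathbf{P}_{\X_0}\A=\eta\A .
\]
Thus after one step the residual is $\E_1:=\X_1\Y_1^\top-\A=(\eta-1)\A$. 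Next I would prove by induction two invariants: $\mathrm{col}(\X_t)=\mathrm{col}(\A)$ and $\mathrm{col}(\Y_t)=\mathrm{row}(\A)$, with $\X_t,\Y_t$ of full column rank as long as $\|\E_t\|<\sigma_{\min}(\A)$. This holds because every update adds terms of the form $\E_t(\cdot)$ or $\E_t^\top(\cdot)$, and $\E_t$ lives in $\mathrm{col}(\A)\times\mathrm{row}(\A)$.

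Given the invariants, I would expand the product of the ScaledGD updates~\eqref{eq.asym-iter}. Under the invariants, $\mathbf{P}_{\X_t}\E_t=\E_t=\E_t\mathbf{P}_{\Y_t}$. In addition,
\[
\Y_t(\Y_t^\top\Y_t)^{-1}(\X_t^\top\X_t)^{-1}\X_t^\top=(\X_t\Y_t^\top)^{+}=(\A+\E_t)^{+}.
\]
These identities should yield the closed recursion
\[
\E_{t+1}=(1-2\eta)\E_t+\eta^2\,\E_t(\A+\E_t)^{+}\E_t .
\]
Taking $\eta=1/2$, I expect
\[
\|\E_{t+1}\|\le \frac{\|\E_t\|^2}{4\big(\sigma_{\min}(\A)-\|\E_t\|\big)},
\]
which is quadratic convergence once $\|\E_t\|\le\sigma_{\min}(\A)/2$. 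The first step with a suitable $\eta$ (for instance $\eta$ close to $1$) places $\|\E_1\|=|1-\eta|\,\|\A\|$ inside this basin; with $\eta=1$ it gives $\E_1=\mathbf{0}$ exactly. Iterating the quadratic contraction $\|\E_{t+1}\|/\sigma_{\min}\lesssim(\|\E_t\|/\sigma_{\min})^2$ gives $f\le\epsilon$ after ${\cal O}(\log\log(1/\epsilon))$ steps, with constants depending on $\kappa$ only through the first step.

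\textbf{Random small initialization.} Here I would follow~\cite{jia2023}. In an initial phase, the small random factors grow and their column spaces align with the top-$r$ singular subspaces of $\A$, up to a misaligned component of size proportional to $\sigma_x,\sigma_y$. Afterwards, the recursion above holds only approximately, since $\mathbf{P}_{\X_t}\E_t\neq\E_t$. The $(1-2\eta)\E_t$ term together with the residual misalignment (which decays only geometrically) gives linear convergence, i.e., ${\cal O}(\log(1/\epsilon))$ iterations. A lower-bound example showing this cannot in general be improved to doubly-logarithmic would also be cited from there.

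I expect the main obstacle to be this random-initialization regime: controlling the alignment phase and the misaligned component. The Nystr\"om case is comparatively clean once the subspace invariants are in place, and its only delicate point is the well-definedness of the first step when $\Y_0=\mathbf{0}$.
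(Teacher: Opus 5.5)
Your proposal is correct. For the Nystr\"om half it goes beyond the paper, which gives no derivation: it defers both claims to \cite{jia2023, li2025nora} and only offers the intuition that $\X_0=\A\bfPhi$ already spans the column space of $\A$, so the alignment phase is skipped. For random initialization you likewise just cite \cite{jia2023}. For Nystr\"om initialization you make that intuition precise. One step gives $\X_1\Y_1^\top=\eta\A$. The subspace invariants then propagate, with full rank following from $\sigma_r(\A+\E_t)\ge\sigma_{\min}(\A)-\|\E_t\|>0$. The error obeys the exact recursion $\E_{t+1}=(1-2\eta)\E_t+\eta^2\E_t(\A+\E_t)^{+}\E_t$, which I checked, including $\Y_t(\Y_t^\top\Y_t)^{-1}(\X_t^\top\X_t)^{-1}\X_t^\top=(\X_t\Y_t^\top)^{+}$. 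This route shows why $\eta=1/2$ is the ``proper'' step: it cancels the linear term. It also shows that $\eta=1$ is an alternating-least-squares step that recovers $\A$ exactly after one iteration. The first step is less delicate than you fear: since $\E_0\Y_0=\mathbf{0}$, any bounded preconditioner (pseudo-inverse or damped) gives $\X_1=\X_0$.

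Two refinements follow.

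First, the $\kappa$-dependent first step is unnecessary. Since $\E_1=(\eta-1)\A$ and the recursion maps multiples of $\A$ to multiples of $\A$ (via $\A\A^{+}\A=\A$), one has $\E_t=c_t\A$ for all $t\ge 1$, with $c_{t+1}=(1-2\eta)c_t+\eta^2c_t^2/(1+c_t)$. For the constant step $\eta=1/2$ this reads $c_1=-1/2$ and $c_{t+1}=c_t^2/(4(1+c_t))$. That is quadratic convergence with no dependence on $\kappa$ at all. By contrast, a constant $\eta$ close to but different from $1$ keeps the $(1-2\eta)$ term and gives only a linear rate, so your schedule must genuinely switch to $\eta=1/2$ after the first step.

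Second, your heuristic for the random case has an exact form. Because $\X_t\Y_t^\top\Y_t(\Y_t^\top\Y_t)^{-1}=\X_t$, ScaledGD reads $\X_{t+1}=(1-\eta)\X_t+\eta\A\Y_t(\Y_t^\top\Y_t)^{-1}$. Hence the component of $\X_t$ outside the column space of $\A$ contracts by exactly $1-\eta$ per step. The statement itself asserts only upper bounds, so no lower-bound example is needed. Do not promise to cite one from \cite{jia2023} unless you have actually located it.
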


Similarly,~\cite{xu2024provable} establishes benefits of Nystr\"om initialization for vanilla gradient descent (GD), which takes the form
\begin{align*}
    \X_{t+1} & = \X_t - \eta (\X_t \Y_t^\top - \A) \Y_t \\
    \Y_{t+1} & = \Y_t - \eta (\X_t \Y_t^\top - \A)^\top \X_t.
\end{align*}

\begin{theorem}[Impact of initialization for GD~\cite{ye2021, xu2024provable}, informal] \label{thm.gd}
    With random initialization $\X_0 \sim {\cal N}(0, \sigma_x^2)$ and $\Y_0 \sim {\cal N}(0, \sigma_y^2)$ for some sufficiently small constants $\sigma_x$ and $\sigma_y$, GD with proper step sizes applied to~\eqref{eq.prob-mf} requires ${\cal O}(\kappa^4 \log(1/\epsilon))$ iterations to reach global convergence $f(\X_t, \Y_t)\leq \epsilon$. However, GD under Nystrom initialization~\eqref{eq.nystrom_init} achieves the same error in ${\cal O}( \kappa^2\log(1/\epsilon))$ iterations.
\end{theorem}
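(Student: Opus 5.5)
The comparison rests on two convergence analyses of the same GD recursion on~\eqref{eq.prob-mf}, one for each initialization. Both cite \cite{ye2021} and \cite{xu2024provable} and are stated informally, so the plan is to organize the argument around a single potential and isolate exactly where the initialization enters. The key structural quantity is the imbalance $\mathbf{D}_t := \X_t^\top\X_t - \Y_t^\top\Y_t$. Gradient flow conserves it exactly, and GD with step size $\eta$ changes it only by an $\mathcal{O}(\eta^2)$ term. The main potential is the residual $\mathbf{E}_t := \A - \X_t\Y_t^\top$.

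\textbf{Step 1 (common one-step bound).} Expanding the GD iterates gives
\[
\mathbf{E}_{t+1} = \mathbf{E}_t - \eta\big(\mathbf{E}_t\Y_t\Y_t^\top + \X_t\X_t^\top\mathbf{E}_t\big) + \eta^2\X_t\X_t^\top\mathbf{E}_t\Y_t\Y_t^\top.
\]
Consequently, once $\mathbf{E}_t$ lies in the range where $\X_t\X_t^\top$ and $\Y_t\Y_t^\top$ act nondegenerately, we get $\|\mathbf{E}_{t+1}\|_\fro \le (1-\eta\,\mu_t)\|\mathbf{E}_t\|_\fro$. Here $\mu_t \gtrsim \sigma_r^2(\X_t)+\sigma_r^2(\Y_t)$, and stability requires $\eta \lesssim 1/(\|\X_t\|^2+\|\Y_t\|^2)$. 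The whole issue therefore reduces to lower-bounding $\sigma_r(\X_t),\sigma_r(\Y_t)$ and upper-bounding $\|\X_t\|,\|\Y_t\|$ along the trajectory. Because balancedness ties the two factors together, this amounts to showing $\sigma_r^2 \gtrsim \sigma_{\min}(\A)$ and $\|\cdot\|^2\lesssim \sigma_{\max}(\A)$ up to the factors that appear in the rates. The per-step contraction is then about $1-1/\kappa$ in the good regime.

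\textbf{Step 2 (Nystr\"om initialization).} With $\X_0=\A\bfPhi$ and $\Y_0=\mathbf{0}$, the column space of $\X_0$ equals that of $\A$ almost surely, since $\bfPhi$ is Gaussian and square with $r=r_A$. The first update then gives $\Y_1 = \eta\A^\top\A\bfPhi$, so both factors are aligned with the singular subspaces of $\A$ from the start. Induction shows that $\X_t,\Y_t$ remain in $\mathrm{col}(\A)$ and $\mathrm{row}(\A)$ respectively. This removes any ``alignment phase'', and only the nonconvexity inside the subspace remains. The initial imbalance is $\bfPhi^\top\A^\top\A\bfPhi$, of order $\sigma_{\max}^2$ up to the conditioning of $\bfPhi$. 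Running Step 1 with the resulting bounds gives a contraction factor $1-\Omega(1/\kappa^2)$ under the admissible step size, hence $\mathcal{O}(\kappa^2\log(1/\epsilon))$ iterations. The conditioning of $\bfPhi$ is handled with high probability by standard Gaussian extreme-singular-value bounds.

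\textbf{Step 3 (small random initialization).} Here I would follow the two-phase argument of \cite{ye2021}. In phase one, with small $\sigma_x,\sigma_y$, the dynamics are near-linear, $\X_{t+1}\approx\X_t+\eta\A\Y_t$, which is power iteration on $\begin{bmatrix}\mathbf{0}&\A\\ \A^\top&\mathbf{0}\end{bmatrix}$. This aligns the factors with the top-$r$ singular subspace, but the components along the weakest direction grow at rate only $1+\eta\sigma_{\min}$. In phase two, local linear convergence holds, with the constants degraded by residual misalignment and imbalance. Tracking these effects gives the extra $\kappa^2$ factor, and hence the $\mathcal{O}(\kappa^4\log(1/\epsilon))$ bound.

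\textbf{Main obstacle.} I expect the hardest part to be the trajectory-wide control in Step 3: keeping the off-subspace components small while the $\sigma_{\min}$ directions grow, since the GD residual couples the two. I would first try a joint induction on three quantities: the misalignment, $\sigma_r$ of the aligned parts, and $\|\mathbf{D}_t\|$. Each would get its own geometric recursion, and the aim is to show the error terms stay $\mathcal{O}(\eta^2)$ per step. By contrast, the Nystr\"om case is comparatively clean because the subspace invariance is exact.
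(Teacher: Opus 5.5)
Your subspace-invariance induction in Step~2 is correct. It matches the only argument the paper gives: the paper defers the proof to \cite{ye2021,xu2024provable} and remarks that $\X_0=\A\bfPhi$ bypasses the alignment phase. The rates, however, do not follow from what you set up.

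First, your Step~1 recursion is not the GD recursion. Substituting $\X_{t+1}=\X_t+\eta\mathbf{E}_t\Y_t$ and $\Y_{t+1}=\Y_t+\eta\mathbf{E}_t^\top\X_t$ gives
\[
\mathbf{E}_{t+1}=\mathbf{E}_t-\eta\big(\X_t\X_t^\top\mathbf{E}_t+\mathbf{E}_t\Y_t\Y_t^\top\big)-\eta^2\,\mathbf{E}_t\Y_t\X_t^\top\mathbf{E}_t .
\]
Your product form $(\I-\eta\X_t\X_t^\top)\mathbf{E}_t(\I-\eta\Y_t\Y_t^\top)$ is instead the residual recursion of AltGD~\eqref{eq.alt-gd}, with $\X_{t+1}$ in the left factor. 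For GD the $\eta^2$ term is quadratic in the residual and sign-indefinite. So $\|\mathbf{E}_{t+1}\|_{\fro}\le(1-\eta\mu_t)\|\mathbf{E}_t\|_{\fro}$ additionally requires a condition like $\eta\|\mathbf{E}_t\|\|\X_t\|\|\Y_t\|\lesssim\mu_t$, maintained along the whole trajectory.

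The more serious gap is that the $\kappa^2$ in Step~2 is asserted, not derived, and your Step~1 mechanism cannot produce it. Step~1 argues through balancedness and yields a contraction of roughly $1-1/\kappa$. The Nystr\"om start, with $\Y_0=\mathbf{0}$ and imbalance $\bfPhi^\top\A^\top\A\bfPhi$, is maximally unbalanced. You note this imbalance but never use it. Subspace invariance alone fixes no rate: with a small-scale $\bfPhi$ there is still a magnitude-growth phase followed by a near-balanced local phase, which is a different analysis.

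The missing idea is the unbalanced, large-scale initialization of \cite{xu2024provable}: take $\X_0=c\A\bfPhi$ with $c$ \emph{large}. Then:
\begin{itemize}
\item $\X_t$ stays a small perturbation of $\X_0$, and the $\Y$-iteration behaves like GD on a least-squares problem with design $\X_0$;
\item stability forces $\eta\lesssim 1/\|\X_0\|^2$, and the contraction is about $1-\eta\sigma_r^2(\X_0)$;
\item with the compact SVD $\A=\U\bfSigma\V^\top$, one has $\sigma_r^2(\X_0)/\|\X_0\|^2\ge\kappa^{-2}\,\sigma_{\min}^2(\V^\top\bfPhi)/\|\V^\top\bfPhi\|^2$, and this ratio is exactly where $\kappa^2$ comes from.
\end{itemize}
Closing the induction requires $\|\X_t-\X_0\|\le\eta\sum_{s<t}\|\mathbf{E}_s\|\|\Y_s\|$ to stay below a fraction of $\sigma_r(\X_0)$; equivalently, the accumulated $\mathcal{O}(\eta^2)$ drift of $\mathbf{D}_t$ must stay controlled. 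This holds only when $c$ is large relative to $\kappa$, and that scale condition is absent from your sketch.

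Two smaller points. First, $\bfPhi$ must be $n\times r$ for $\A\bfPhi$ to be defined, so the relevant square matrix is $\V^\top\bfPhi$. Its smallest singular value has only polynomial lower tails, so its condition number is $\mathcal{O}(r/\delta)$ only with probability $1-\delta$. This costs a $\mathrm{poly}(r/\delta)$ factor rather than holding with high probability, unless $\bfPhi$ has more than $r$ columns. Second, in Step~3 the phase-one growth at rate $1+\eta\sigma_{\min}$ is an additive, $\epsilon$-independent cost. The entire coefficient of $\log(1/\epsilon)$ must therefore come from the step-size restriction and the lower bound on $\sigma_r$ in the local phase, and your sketch never shows how that yields $\kappa^4$.
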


A numerical comparison is provided in Figure~\ref{fig.init}. The proofs are more involved, and we refer readers to the cited papers for details. The intuition behind acceleration is that the sketch initialization $\X_0=\A\bfPhi$ captures explicitly the column space (and implicitly the row space) of $\A$. Initializing the optimization with the optimal subspace, the algorithm bypasses the initial search phase for subspace alignment, thereby substantially accelerating convergence. 

Moving back to the LoRA fine-tuning, these theoretical insights suggest that an effective LoRA initialization should ideally capture the column and row spaces of the optimal update $\Delta \W^\star$. However, the optimal subspace associated with $\Delta \W^\star$ is unknown a priori. Motivated by the empirical observation that well-performing updates $\Delta \W$ often exhibit substantial subspace overlap with the pre-trained weights $\W^{\rm pre}$, a practical alternative is to use $\W^{\rm pre}$ as a proxy to extract directional information for initialization~\cite{meng2024pissa,buyukakyuz2024olora,li2025nora}. Concretely, PiSSA~\cite{meng2024pissa} initializes $\X_0,\Y_0$ using the top-$r$ singular directions of $\W_0$ via truncated SVD, while OLoRA~\cite{buyukakyuz2024olora} utilizes a top-$r$ QR decomposition for a similar goal. MiLoRA~\cite{wang2025milora} instead leverages the bottom-$r$ singular directions of $\W_0$, and NoRA~\cite{li2025nora} employs a Nystr\"om sketch to sample these directions. Finally, LoRA-GA~\cite{wang2024lora-ga} and LoRA-One~\cite{zhang2025lora} pursue subspace alignment through gradient-based initialization signals.

\subsection{Optimization via asymmetric roles of $\X$ and $\Y$}
\label{Sec.AltGD}
Another standard approach in SP for solving~\eqref{eq.prob-mf} (or~\eqref{eq.prob-sensing}) is alternating gradient descent (AltGD), sometimes discussed under the broader umbrella of alternating least squares (ALS). AltGD alternates between updating $\X_t$ and $\Y_t$ per iteration
\begin{subequations}\label{eq.alt-gd}
\begin{align}
    \X_{t+1} & = \X_t - \eta_x (\X_t \Y_t^\top - \A) \Y_t \\
    \Y_{t+1} & = \Y_t - \eta_y (\X_{t+1} \Y_t^\top - \A)^\top \X_{t+1}
\end{align}
\end{subequations}
where $\eta_x > 0$ and $\eta_y > 0$ are step sizes. 
Note that the gradients for $\X_t$ and $\Y_t$ are evaluated at different points. For~\eqref{eq.prob-mf}, it is known that such method converges linearly.
\begin{theorem}[Convergence of AltGD~\cite{ward2023}, informal]
    Consider AltGD in~\eqref{eq.alt-gd} initialized with  $\X_0  = \A \bfPhi$ and $\Y_0 = \bfPsi$, where $\bfPhi$ and $\bfPsi$ are Gaussian random matrices of proper sizes. With proper step sizes, AltGD converges to global minima $f(\X_t,\Y_t) \leq \epsilon$ in ${\cal O}(\kappa^2 \log(1/\epsilon))$ iterations. 
\end{theorem}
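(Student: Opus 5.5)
The plan is to treat AltGD~\eqref{eq.alt-gd} as an inexact block-coordinate method. Each $\Y$-step is an exact gradient step on the convex quadratic $g_t(\Y) := \frac{1}{2}\|\A - \X_{t+1}\Y^\top\|_\fro^2$. The $\X$-step, taken with a deliberately small $\eta_x$, only perturbs the conditioning of $\X$ slightly. Two facts drive the analysis:
\begin{enumerate}
\item \emph{Per-iteration contraction.} Whenever $\X_{t+1}$ has full column rank and spans $\mathrm{col}(\A)$, one $\Y$-step with $\eta_y \le 1/\sigma_{\max}^2(\X_{t+1})$ contracts the residual by a factor $1-\eta_y\sigma_{\min}^2(\X_{t+1})$.
\item \emph{Bounded conditioning.} The condition number of $\X_t$ stays within a constant factor of $\kappa$ for all $t$.
\end{enumerate}
Combined with the choice $\eta_y \asymp 1/\sigma_{\max}^2(\X_t)$, these give a rate of $1 - c/\kappa^2$, i.e., ${\cal O}(\kappa^2\log(1/\epsilon))$ iterations.

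\textbf{Step 1 (invariance).} I would first verify by induction that $\mathrm{col}(\X_t) \subseteq \mathrm{col}(\A)$. The initialization $\X_0 = \A\bfPhi$ lies in $\mathrm{col}(\A)$. In the update, both $\X_t\Y_t^\top\Y_t$ and $\A\Y_t$ lie in $\mathrm{col}(\A)$ whenever $\X_t$ does. Since $\X_t$ has $r = r_A$ columns, full column rank then means $\mathrm{col}(\X_t) = \mathrm{col}(\A)$. Consequently the residual $\mathbf{R}_t := \A - \X_{t+1}\Y_t^\top$ has columns in $\mathrm{col}(\X_{t+1})$. On that subspace, the $\Y$-update gives
\[
\mathbf{R}_t' = \mathbf{R}_t(\I - \eta_y \X_{t+1}^\top\X_{t+1})\ \text{(acting through the coefficients of } \mathbf{R}_t \text{ in the basis } \X_{t+1}\text{)},
\]
so $\|\A - \X_{t+1}\Y_{t+1}^\top\|_\fro \le (1-\eta_y\sigma_{\min}^2(\X_{t+1}))\,\|\mathbf{R}_t\|_\fro$. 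The $\X$-step is itself a gradient step with small $\eta_x$, so it does not increase the loss. Hence $f$ decreases geometrically as long as the singular values of $\X_t$ stay in a window $[\underline{\sigma}, \overline{\sigma}]$ with $\overline{\sigma}/\underline{\sigma} \lesssim \kappa$.

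\textbf{Step 2 (initial conditioning).} Next I would control $\X_0 = \A\bfPhi$. Writing $\A = \U\bfSigma\V^\top$, the matrix $\V^\top\bfPhi$ is an $r\times r$ Gaussian matrix. Standard random-matrix bounds then give, with constant probability (or high probability after mild oversampling), $\sigma_{\min}(\V^\top\bfPhi)\gtrsim 1/\sqrt{r}$ and $\sigma_{\max}\lesssim\sqrt{r}$. This yields $\sigma_{\max}(\X_0)\lesssim \sqrt{r}\,\sigma_{\max}(\A)$ and $\sigma_{\min}(\X_0)\gtrsim \sigma_{\min}(\A)/\sqrt{r}$, i.e., $\kappa(\X_0) = {\cal O}(\kappa)$ up to dimension-dependent constants. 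The random $\Y_0 = \bfPsi$ only needs a bounded norm, so that the initial loss is ${\cal O}(\|\A\|_\fro^2)$.

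\textbf{Step 3 (main obstacle: keeping $\X_t$ well conditioned).} The hard part is ruling out a collapse of $\sigma_{\min}(\X_t)$ or a blow-up of $\sigma_{\max}(\X_t)$ along the trajectory, since the loss has no global smoothness (Example~\ref{example.simple}). I would choose $\eta_x$ small relative to $\eta_y$, roughly $\eta_x \lesssim \eta_y\,\sigma_{\min}^2(\X_0)/\|\A\|^2$, so that
\[
\|\X_{t+1}-\X_t\| \le \eta_x\,\|\mathbf{R}_t\|\,\|\Y_t\|.
\]
I would then run a joint induction. Assume the residual has decayed geometrically up to time $t$, and that $\|\Y_s\|$ has stayed bounded for $s \le t$ (the latter follows because $\Y_s$ approximately solves a least-squares problem with a well-conditioned $\X_s$). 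Under these hypotheses the cumulative drift satisfies
\[
\sum_s \|\X_{s+1}-\X_s\| \lesssim \eta_x\,\frac{\|\A\|\,\|\mathbf{R}_0\|}{\eta_y\,\sigma_{\min}^2},
\]
which is a geometric sum and can be made at most $\tfrac{1}{2}\sigma_{\min}(\X_0)$. Weyl's inequality then keeps all singular values of $\X_t$ within a factor of $2$ of those of $\X_0$, which closes the induction.

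The delicate point is the circular dependence among the bound on $\|\Y_t\|$, the bound on the drift, and the contraction rate. I would resolve it by a single simultaneous induction over $t$ on three hypotheses: the singular-value window, the bound on $\|\Y_t\|$, and geometric residual decay. Plugging $\eta_y \asymp 1/\sigma_{\max}^2(\X_0)$ into Step 1 then gives a per-iteration factor $1 - c/\kappa(\X_0)^2 = 1 - c'/\kappa^2$, and hence the claimed ${\cal O}(\kappa^2\log(1/\epsilon))$ iteration count.
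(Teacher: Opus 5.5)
The paper gives no proof of this statement and defers to \cite{ward2023}, and your argument is essentially that one: sketch-induced invariance $\mathrm{col}(\X_t)=\mathrm{col}(\A)$, a uniform PL/smoothness contraction by $1-\eta_y\sigma_{\min}^2(\X_{t+1})$ for the $\Y$-step, and a drift bound keeping $\X_t$ near $\X_0$. Your small ratio $\eta_x/\eta_y$ plays exactly the role of their unbalanced initialization $\X_0\propto\eta^{-1/2}\A\bfPhi$, $\Y_0\propto\eta^{1/2}\bfPsi$ with a common step $\eta$, since the rescaling $(\X,\Y)\mapsto(\X/\alpha,\alpha\Y)$ with $\alpha\propto\eta^{-1/2}$ converts $\eta$ into $(\eta_x,\eta_y)=(\eta/\alpha^2,\eta\alpha^2)$.

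One bookkeeping repair is needed in Step~3: $\sup_s\|\Y_s\|$ is not of order $\|\A\|$. Full column rank of $\X_s$ and monotonicity of the loss give $\|\Y_s\|\le(\|\A\|+\|\A-\X_0\Y_0^\top\|_\fro)/\sigma_{\min}(\X_s)$ directly, which also makes the separate $\|\Y_t\|$ hypothesis in your induction unnecessary. Consequently $\eta_x$ must be of order $\eta_y\sigma_{\min}^4(\X_0)/\big((\|\A\|+\|\A-\X_0\Y_0^\top\|_\fro)\,\|\A-\X_0\Y_0^\top\|_\fro\big)$, not $\eta_y\sigma_{\min}^2(\X_0)/\|\A\|^2$. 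This change is harmless because the contraction factor never involves $\eta_x$.
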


Compared to GD with Nystr\"om initialization in Theorem~\ref{thm.gd}, AltGD achieves a comparable iteration complexity even when the initialization only captures the column space of $\A$ (rather than both its column and row spaces). This suggests a practical advantage by treating the two factors asymmetrically. This asymmetry has been extended to ScaledGD~\eqref{eq.asym-iter}. In particular, asymmetric ScaledGD variants~\cite{jia2023,liu2025efficient} show that updates~\eqref{eq.alt-gd} enable larger step size for faster convergence.

Because~\eqref{eq.alt-gd} requires two backpropagation passes per iteration, AltGD and related alternating schemes are not yet widely adopted in practical LoRA fine-tuning, while the underlying principle of exploiting the asymmetric roles of $\X$ and $\Y$ has been broadly recognized. For example, LoRA+~\cite{hayou2024lora+} advocates for assigning different learning rates to $\X$ and $\Y$ to reflect their asymmetric roles under standard LoRA initialization, which can improve stability and facilitate feature learning. More broadly, several works explicitly study this asymmetry. For instance,~\cite{zhu2024} shows that, under suitable assumptions, fixing a random $\X$ and optimizing only $\Y$ can lead to improved generalization guarantees.

\subsection{Gauge-invariant optimization}
\label{Sec.gauge-invariance}
Standard optimization methods often ignore the ``invariance'' inherent in LoRA parameterization. Consider a bilinear pair $(\X,\Y)$. The training objective depends on $(\X,\Y)$ solely through their outer product $\X\Y^\top$, which is invariant to invertible linear transformations; that is, for any invertible $\Q\in\bR^{r\times r}$, the reparameterization $\tilde {\X}=\X\Q$ and ${\tilde \Y}=\Y\Q^{-\top}$ leaves the product unchanged: $\tilde{\X} \tilde{\Y}^\top = \X \Y^\top$. Thus, $(\X,\Y)$ and $(\tilde{\X},\tilde{\Y})$ attain the same objective value. In group theory parlance, this property is known as \emph{gauge invariance} ~\cite{mishra2014fixed}. 

Crucially, while the objective value is invariant, the \emph{local geometry} of the loss landscape at gauge-equivalent points can differ significantly. To see this, revisit Example~\ref{example.simple} with $f(x,y)=\frac{1}{2}(xy-1)^2$, and compare two equivalent parameterizations $(x,y)=(2,2)$ and $(\tilde x,\tilde y)=(1,4)$. It can be seen that the Hessian at these two points are
\begin{align}
    \mathbf{H}_f(2,2)=
    \begin{bmatrix}
        4 & 3\\
        3 & 4
    \end{bmatrix};~~
    \mathbf{H}_f(1,4)=
    \begin{bmatrix}
        16 & 3\\
        3 & 1
    \end{bmatrix}. 
\end{align}
The largest eigenvalues of these Hessians are different ($7$ versus $ \sim 19$), indicating a discrepancy of local smoothness. Beyond loss landscapes, the optimization behaviors can also differ considerably along the gauge orbit. 

\begin{example}
After one GD update with stepsize $\eta$, initializing at $(x,y)=(2,2)$ yields $(x^+,y^+)=(2-6\eta,2-6\eta)$, whereas initialization $(\tilde x,\tilde y)=(1,4)$ gives $(\tilde x^+,\tilde y^+)=(1-12\eta,4-3\eta)$. Since $\tilde x^+\tilde y^+ \neq x^+y^+$, these gauge-equivalent points can result in different objective changes and optimization trajectories.
\end{example}

Given that gauge-equivalent points can exhibit markedly disparate local geometries and optimization behaviors, it is natural to ask whether it is possible to systematically leverage these differences for faster convergence.

\begin{figure}[t]
	\centering
	\includegraphics[width=0.5\textwidth]{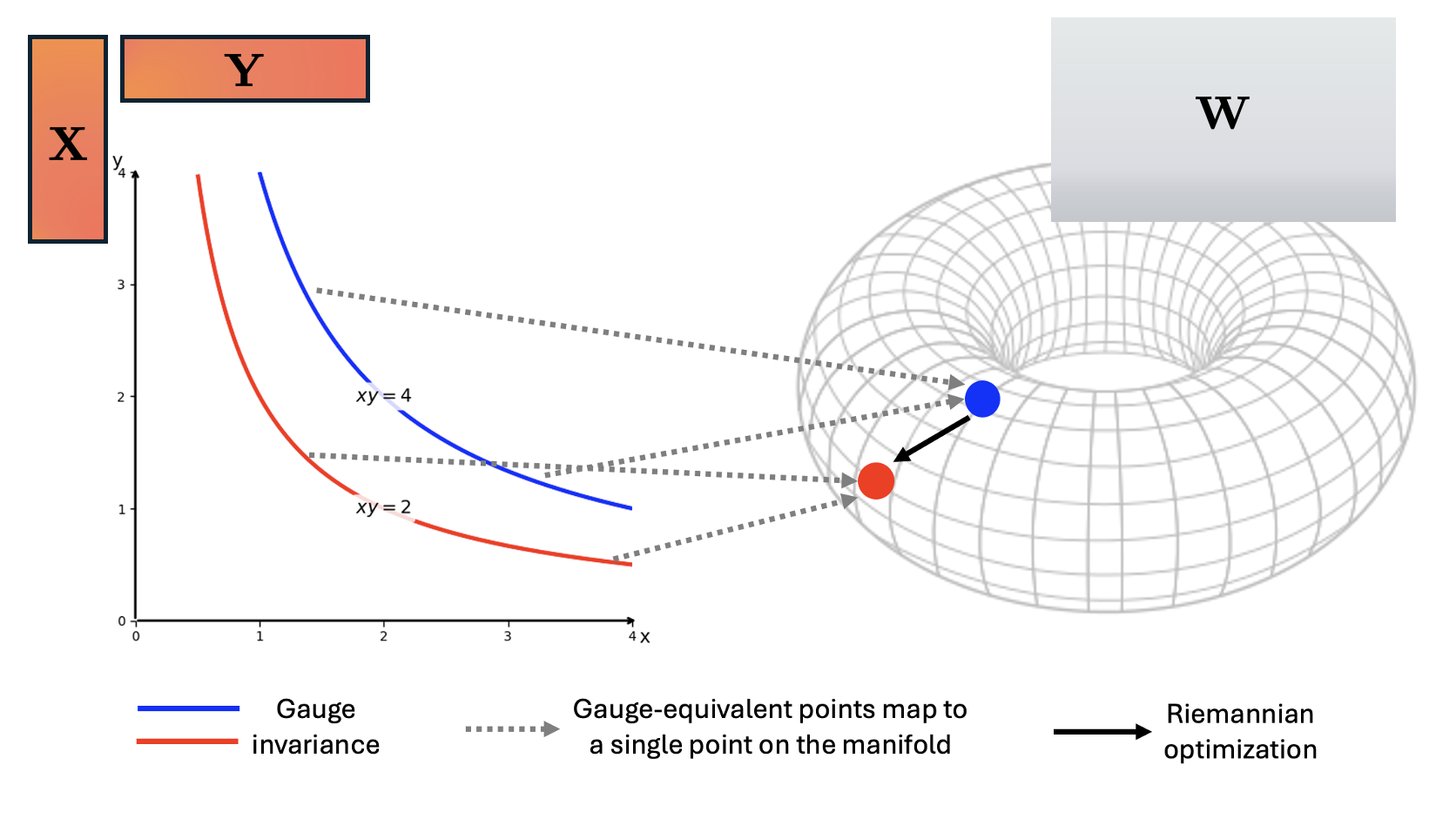}
	\caption{Minimizing $f(x,y)= \frac{1}{2} (xy-1)^2$ . The manifold shown on the right is for visualization purposes only. }
	 \label{fig.reflora}
\end{figure}

\textbf{A general framework.}
The most direct way to avoid gauge symmetry is to abandon the factorized variables $\X$ and $\Y$ and instead optimize over the product $\W = \X\Y^\top$. This is because all factor pairs related by the transformation $(\X\Q, \Y\Q^{-\top})$, with $\Q \in \mathsf{GL}(r)$, produce exactly the same $\W$. 
Note that $\mathsf{GL}(r)$ is the general linear group of degree $r$, i.e., the set of all $r \times r$ invertible matrices. 
In the scalar setting of Example~\ref{example.simple}, the entire hyperbola $\{(x,y)| xy=4\}$ can be viewed as one point in the $\W$ space, and $\{(x,y)| xy=2\}$ as another.
So from the perspective of $\W$, the whole gauge orbit collapses to a single point, and the ambiguity disappears.

However, explicitly forming and updating $\W$ is prohibitively expensive in the settings of interest. This can be overcome by exploiting more advanced optimization techniques under the mild assumption that $\W$ belongs to the fixed-rank manifold
$\mathcal{M}_r^{m\times n} := \{ \W \in \mathbb{R}^{m\times n} \mid \rank(\W)=r \}$
which admits the quotient representation $
    \mathcal{M}_r := \big(\bR_\ast^{m\times r} \times \bR^{n\times r}_\ast\big)\big/\mathsf{GL}(r)$.
Here, $/$ denotes the quotient under the natural group of $\mathsf{GL}(r)$ and $\mathbb{R}_*^{m\times r}$ and $\mathbb{R}_*^{n\times r}$ denote matrices with full column rank. 
The latter expression indicates that rather than optimizing over the full matrix $\W$, we can continue working with the factorized variables $(\X ,\Y) \in \bR_\ast^{m\times r} \times \bR^{n\times r}_\ast$, provided that the optimization procedure handles the quotient structure appropriately \cite{absil2008optimization,mishra2014fixed,boumal2023introduction}. This yields a memory-efficient update that is, in essence, equivalent to optimizing over $\W$ itself. A graphical illustration of this idea is shown in Figure~\ref{fig.reflora}.

A useful preliminary observation is that $\mathbb{R}_*^{m\times r}$ (and similarly $\mathbb{R}_*^{n\times r}$) is itself a manifold, namely the fixed-rank manifold $\mathcal{M}_r^{m\times r}$. Although optimization over $\X$ and $\Y$ naturally invites a Riemannian treatment, this manifold is, fortunately, simple enough that Riemannian gradient descent almost always reduces to vanilla gradient descent \cite{absil2008optimization,boumal2023introduction}.

With this background, handling gauge invariance requires only two additional ingredients beyond standard GD. The first is to define a \textit{gauge-invariant metric}, so that equivalent points $(\X\Q, \Y\Q^{-\top}), \forall \Q \in \mathsf{GL}(r)$ share the same geometric structure in their tangent spaces. This metric induces the associated Riemannian gradient $\Rgrad f$ that is used to perform update. The second, which can be optional, is to decompose $\Rgrad f$ into \textit{vertical} and \textit{horizontal} components. The vertical component describes motion within an equivalence class, while the horizontal component captures motion that genuinely changes the underlying point on the quotient. For instance, in Example~\ref{example.simple}, vertical motion corresponds to moving along the curve $xy=4$. Projecting away the vertical component can improve optimization efficiency, as it avoids updates along redundant move along the gauge orbits.

Next, we make these ideas concrete by presenting several methods that fall under this unified framework.

\textbf{ScaledGD.} 
Scaled gradient descent (ScaledGD) is among the most widely used approaches for matrix sensing and completion~\cite{tong2021,zhang2021preconditioned,xu2023}. While it admits several interpretations, we focus exclusively on its formulation as Riemannian optimization equipped with a specific gauge-invariant Riemannian metric. To be specific, for two tangent vectors $(\bm{\xi}_\X,\bm{\xi}_\Y)$ and $(\bm{\varphi}_\X,\bm{\varphi}_\Y)$ at $(\X,\Y)\in\bR_*^{m\times r}\times\bR_*^{n\times r}$, the metric is defined as
\begin{align*}
    g_{(\X,\Y)}& \big((\bm{\xi}_\X,\bm{\xi}_\Y),(\bm{\varphi}_\X,\bm{\varphi}_\Y)\big) \\
    & := \langle \bm{\xi}_\X (\Y^\top \Y), \bm{\varphi}_\X \rangle_\fro + \langle \bm{\xi}_\Y (\X^\top \X), \bm{\varphi}_\Y \rangle_\fro. 
\end{align*}
We refer readers to Appendix~\ref{Apdx.ScaledGD} for a detailed derivation of the gauge-invariance of this metric.
Under this metric, the Riemannian gradients at $(\X,\Y)$, also derived in detail in Appendix~\ref{Apdx.ScaledGD}, are
\begin{align*}
    \Rgrad_{\X} f &= \nabla_{\X} f(\X,\Y)(\Y^\top \Y)^{-1}, \\
    \Rgrad_{\Y} f &= \nabla_{\Y} f(\X,\Y)(\X^\top \X)^{-1}.
\end{align*}
ScaledGD then updates $(\X,\Y)$ by taking a step along the negative Riemannian gradient, i.e.,
\begin{align*}
    \X_{t+1} &= \X_t - \eta \Rgrad_{\X} f(\X_t,\Y_t), \\
    \Y_{t+1} &= \Y_t - \eta \Rgrad_{\Y} f(\X_t,\Y_t).
\end{align*}
Next, we illustrate how ScaledGD copes with gauge invariance through Example~\ref{example.simple}.
\begin{example}\label{example.scaledgd}
   Consider applying ScaledGD to the loss in Example~\ref{example.simple}. For any $(x,y)$ satisfying $xy=4$, the one-step ScaledGD update $(x^+,y^+)$ renders
    \begin{align*}
        x^+y^+ = 4 - 6\eta + \frac{9}{4}\eta^2 .
    \end{align*}
    This confirms that, under ScaledGD, all gauge-equivalent points are updated coherently to a new equivalence class after one iteration, independent of the chosen factorization.
\end{example}

It is worth noting that ScaledGD does not decompose the Riemannian gradient into vertical and horizontal components, and therefore its update does not fully eliminate potential drift along gauge directions. While explicitly projecting onto the horizontal space eliminates this drift, it typically requires solving a Sylvester equation~\cite{mishra2014fixed}, which is computationally expensive. Nevertheless, ScaledGD already proves highly effective for the one-layer LoRA problem~\eqref{eq.prob-mf}. In particular, the convergence rate in Theorem~\ref{thm.scaledgd} does depend on the condition number $\kappa$, which accounts for its popularity in ill-conditioned matrix factorization and sensing tasks.

In the context of LoRA fine-tuning, ScaledGD-style gauge-aware optimization has been adopted in~\cite{zhang2024riemannian}, where it is further shown to promote stable feature learning. LoRA-Pro~\cite{wang2025lorapro} advances this approach by explicitly solving a Sylvester equation to project the Riemannian gradient onto the horizontal space, which can speed up convergence at the cost of extra per-iteration computations.

\textbf{RefLoRA.} Another method leveraging above viewpoint to tackle gauge invariance is refactored low-rank adaptation (RefLoRA)~\cite{zhang2025reflora}. In contrast to ScaledGD, it relies on a distinct gauge-invariant Riemannian metric to directly handle the gauge drift. For a point $(\X, \Y) \in\bR_\ast^{m\times r}\times\bR_\ast^{n\times r}$, define their Gram matrices as $\bfP_\X :=\X^\top \X$ and $\bfP_\Y:=\Y^\top \Y$. RefLoRA constructs a positive definite matrix
\[
    \bfS := \bfP_\X^{-1/2}\Big(\bfP_\X^{1/2} \bfP_\Y \bfP_\X^{1/2}\Big)^{1/2}\bfP_\X^{-1/2}
\]
which is also known as the (matrix) geometric mean of $\bfP_\X^{-1}$ and $\bfP_\Y$. Using $\bfS$, the gauge-invariant metric is defined as
\begin{equation}\label{eq.reflora-metric}
    \begin{aligned}
    g_{(\X,\Y)} & \big((\bm{\xi}_\X,\bm{\xi}_\Y), (\bm{\varphi}_\X,\bm{\varphi}_\Y)\big) \\
    & := \langle \bm{\xi}_\X \bfS, \bm{\varphi}_\X \rangle_\fro + \langle \bm{\xi}_\Y \bfS^{-1}, \bm{\varphi}_\Y \rangle_\fro
\end{aligned}
\end{equation}
where $(\bm{\xi}_\X,\bm{\xi}_\Y)$ and $(\bm{\varphi}_\X,\bm{\varphi}_\Y)$ are tangential to the manifold at $(\X, \Y)$. With the Riemannian gradient derived in Appendix~\ref{Apdx.reflora}, RefLoRA updates are given by
\begin{subequations}\label{eq.reflora}
    \begin{align}
        \X_{t+1} & = \X_t - \eta \nabla_{\X} f(\X_t,\Y_t) \bfS_t^{-1}, \\
        \Y_{t+1} & = \Y_t - \eta \nabla_{\Y} f(\X_t,\Y_t) \bfS_t.
    \end{align}
\end{subequations}
The main benefit of this metric is that it eliminates the vertical component in a computationally efficient manner, and the Riemannian gradient thus lies entirely in the horizontal space; see Appendix~\ref{Apdx.reflora} for a proof. As a consequence, RefLoRA not only enjoys efficient update equivalence across gauge-related parameterizations, but also yields a steeper decrease in the loss; see numerical tests in Figure~\ref{fig.reflora}, and the following example.

\begin{figure}[t]
	\centering
	\includegraphics[width=0.4\textwidth]{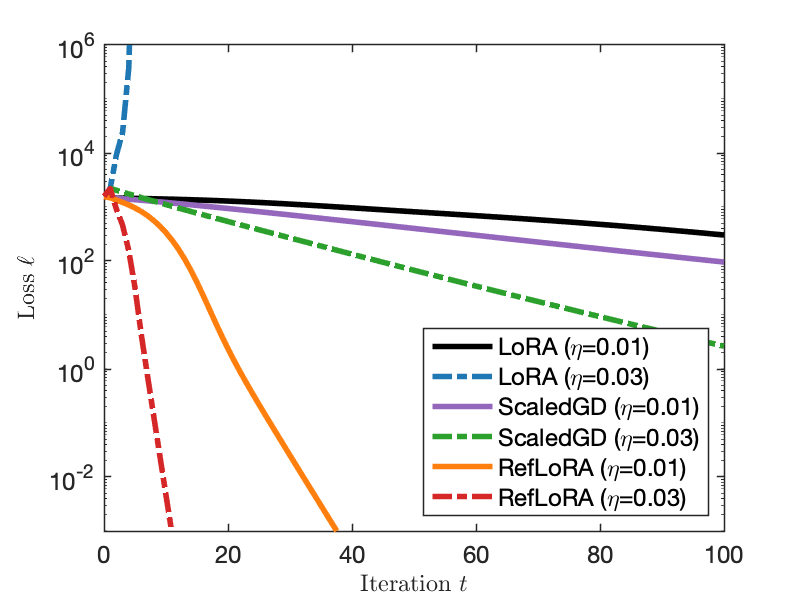}
	\caption{Convergence comparison of LoRA~\cite{hu2021lora}, ScaledGD~\cite{tong2021}, and RefLoRA~\cite{zhang2025reflora} for low-rank matrix factorization~\eqref{eq.prob-mf}.}
	 \label{fig.reflora-converge}
\end{figure}

\begin{example}\label{example.reflora}
    To apply RefLoRA to the loss in Example~\ref{example.simple}, for any $(x,y)$ satisfying $xy=4$, the one-step RefLoRA update $(x^+,y^+)$ renders
    \begin{align*}
        x^+y^+ = (2-6\eta)^2.
    \end{align*}
    Similar to ScaledGD in Example~\ref{example.scaledgd}, RefLoRA maps all gauge-equivalent points with $xy=4$ to the same equivalence class after one step. Notwithstanding, when using the same (typically small) step size $\eta$ as ScaledGD, RefLoRA yields a noticeably larger decrease in the objective. Letting for instance $\eta=1/3$, ScaledGD gives $f(x^+,y^+) =  2.25$, while RefLoRA renders $f(x^+,y^+) = 0$. 
\end{example}

In fact, RefLoRA admits an alternative interpretation: among all pairs in the equivalence class $[(\X, \Y)]$ (i.e., all factorizations representing the same product $\X\Y^\top$), it selects a particular one such that one GD step from this point yields the steepest descent in the objective value~\cite{zhang2025reflora}. This optimal pair $\tilde{\X} = \X \bfS^{1/2}$ and $\tilde{\Y} = \Y \bfS^{-1/2}$ is ``balanced'' in terms of their Gram matrices $\tilde{\X}^\top \tilde{\X} = \tilde{\Y}^\top \tilde{\Y}$. Furthermore, performing Riemannian optimization~\eqref{eq.reflora} with the gauge-invariant metric~\eqref{eq.reflora-metric} is equivalent to standard GD on this balanced pair $(\tilde{\X}, \tilde{\Y})$ in Euclidean space. In Example~\ref{example.reflora}, this corresponds to performing GD along the trajectory of $x=y$. 

When applied to LLM fine-tuning, RefLoRA bypasses the need for solving the Sylvester equation required by LoRA-Pro~\cite{wang2025lorapro}, and thus achieves faster per-iteration runtime. To grasp its efficiency, recall that the Gram matrices $\bfP_\X$ and $\bfP_\Y$ are of size $r \times r$, and hence computing $\bfS$ incurs merely $\mathcal{O}(m+n+r)r$ overhead, which is much smaller than the $\mathcal{O}(m^2 r)$ cost of solving a Sylvester equation.

Note that, although the above approaches are highly effective in practice, they incur additional computational cost, typically requiring at least an inversion of a small $r \times r$ matrix. To shift the tradeoff toward greater computational efficiency, one may instead account for only a subset of the full gauge invariance. A classical choice is scale invariance, corresponding to the equivalence class $\{(c\X, \Y/c) \mid c \in \mathbb{R}_*\}$. Under this weaker symmetry, the matrix $\bfS_t$ in the RefLoRA update \eqref{eq.reflora} can be replaced by
$\frac{\|\Y_t\|_\fro}{\|\X_t\|_\fro} \I$, yielding a cheaper update \cite{zhang2025reflora}.

There are also other approaches that explicitly target gauge invariance in LoRA optimization. For example, LoRA-RITE~\cite{yen2025lorarite} factors out the “magnitude” components of $\X_t$ and $\Y_t$ via polar decomposition, thereby focusing updates on the subspace directions. ScaLoRA~\cite{scalora} copes with gauge invariance under the HiRA~\cite{huang2025hira} setting. 
Moreover,~\cite{li2024implicit} shows that for scale-invariant two-factor problems (including rank-1 LoRA), sharpness-aware minimization (SAM)~\cite{foret2020sharpness,li2024enhancing} implicitly reduces the imbalance $\big| \|\X|_\fro^2-\|\Y\|_\fro^2 \big|$. While this does not eliminate the full $\mathsf{GL}(r)$ gauge invariance, it suggests that part of SAM’s benefit may stem from implicitly regularizing a subset of gauge degrees of freedom.

\subsection{Optimization of SVD-type parameterization}
Previous subsections focused on advanced optimization schemes for the standard bilinear (BM) parameterization used in vanilla LoRA. We now turn to optimization schemes tailored to other architectures discussed in Section~\ref{Sec.architectures}.
This section focuses on SVD-type parameterizations, using PoLAR~\cite{kai2025} as a representative example. Applying this parameterization to the testbed problem~\eqref{eq.prob-mf} yields
\begin{align}\label{eq.polar-mf}
	\min_{\U, \V, \bfTheta} & \frac{1}{2} \| \U \bfTheta \V^\top - \A \|_\fro^2 \\
    \text{s.t.}~~~ & \U \in \st{(m, r)}, \V \in \st{(n, r)}	, \bfTheta \in \bR^{r \times r} \nonumber
\end{align}
where $\st(m,r):= \{ \U \in \bR^{m \times r} \mid \U^\top \U  = \I_r\}$ stands for the Stiefel-manifold; cf. Section~\ref{Sec.SVD-parameterization}. 
Problem~\eqref{eq.polar-mf} also offers an additional explanation for leaving $\bfTheta$ unconstrained. Indeed, forcing $\bfTheta$ to be diagonal can slow down convergence, because it introduces non-strict saddles, whose Hessian has both $0$ and negative eigenvalues, making them difficult to escape for first-order optimization algorithms~\cite{levin2024}.

As far as convergence, rather than working with quotient manifold as in the previous subsection,~\cite{kai2025} views the constraint sets as embedded submanifolds of Euclidean space, and applies retraction-based Riemannian optimization. Let $\E_t$ and $\F_t$ denote the Riemannian gradients with respect to $\X_t$ and $\Y_t$, respectively. Using the polar retraction, the Riemannian gradient descent (RGD) updates take the form
\begin{subequations}\label{eq.polar-updates}
\begin{align}
    \X_{t+1} &= (\X_t - \eta \E_t) (\I_r + \eta^2 \E_t^\top \E_t )^{-1/2} \\
    \Y_{t+1} &= (\Y_t - \eta \F_t) (\I_r + \eta^2 \F_t^\top \F_t )^{-1/2}.
\end{align}
In addition, $\bfTheta$ is updated by a standard GD step
\begin{align}\label{eq.update-theta}
	\bfTheta_t =  \bfTheta_{t-1} - \gamma \nabla_{\bfTheta} f(\X_{t+1}, \bfTheta_t, \Y_{t+1}).
\end{align}
\end{subequations}

Under this alternating update between “direction” $(\X_t, \Y_t)$ and “magnitude”  $\bfTheta_t$,~\cite{kai2025} shows that the required iterations for convergence scales inversely with the rank $r$.

\begin{theorem}[\cite{kai2025}, informal]\label{thm.polar}
    If $m > n $ and $r > r_A$, the updates~\eqref{eq.polar-updates} for~\eqref{eq.polar-mf} converges to $\frac{1}{2} \| \X_T \bfTheta_T \Y_T^\top - \A \|_\fro^2 \leq \epsilon$ within ${\cal O} \big( \frac{  r^4  r_A^2  \kappa^4 }{(r - r_A)^8}  + \frac{ r^2 r_A \kappa^4 }{ (r - r_A)^4} \log\frac{1}{\epsilon} \big)$ iterations.
\end{theorem}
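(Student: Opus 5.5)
Since $\bfTheta$ is updated by an exact-style GD step on a quadratic, the plan is to reduce the problem to the behavior of the directions $(\X_t,\Y_t)$ on the Stiefel manifolds. For fixed $(\X,\Y)$, the loss in~\eqref{eq.polar-mf} is a strongly convex quadratic in $\bfTheta$ with identity Hessian, because $\X,\Y$ have orthonormal columns. Its minimizer is $\bfTheta^\star(\X,\Y)=\X^\top\A\Y$. At that minimizer the loss equals the reduced objective
\[
g(\X,\Y)=\tfrac12\|\A\|_\fro^2-\tfrac12\|\X^\top\A\Y\|_\fro^2 .
\]
With $\gamma$ of order one, the step~\eqref{eq.update-theta} contracts $\bfTheta_t-\bfTheta^\star(\X_{t+1},\Y_{t+1})$ geometrically. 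The first step is therefore to bound the total loss by $g$ plus a tracking error. The tracking error is controlled by how much $\bfTheta^\star$ moves per iteration, which is $\mathcal O(\eta\|\A\|)$ times the Riemannian gradient norms.

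The second step is to analyze RGD with polar retraction on $g$, which is a Brockett-type cost on $\st(m,r)\times\st(n,r)$. Use the SVD $\A=\U_A\bfSigma_A\V_A^\top$ of rank $r_A$. The natural potentials are the alignment quantities $\sigma_{\min}(\U_A^\top\X_t)$ and $\sigma_{\min}(\V_A^\top\Y_t)$. Because $r>r_A$, the $r$-dimensional subspace only needs to contain an $r_A$-dimensional target. A (possibly random) initialization then has alignment bounded below by a quantity of order $(r-r_A)/r$, and I expect this is the source of the $(r-r_A)$ factors. The plan is to show that a single RGD step, with step size $\eta\sim 1/\|\A\|^2$ and retraction second-order term $\mathcal O(\eta^2\|\E_t\|^2)$, does not decrease this alignment appreciably. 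I would also show it increases the alignment whenever the loss is large.

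The third step splits the run into two phases. In phase one, the alignment grows from roughly $((r-r_A)/r)^{2}$ to a constant, which gives the polynomial burn-in term $r^4r_A^2\kappa^4/(r-r_A)^8$. In phase two, I would establish a local Riemannian PL inequality of the form
\[
\|\Rgrad g\|^2\ge c\,\frac{(r-r_A)^4}{r^2r_A\kappa^4}\,\sigma_{\max}(\A)^{2}\, g .
\]
Here the weak dependence on exact alignment comes from the slack $r-r_A$: no ``last direction'' is hard to find, so no strict-saddle slowdown occurs. Combining this PL inequality with the smoothness-type descent lemma for retractions yields linear convergence and the $\log(1/\epsilon)$ term.

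The main obstacle is the PL/alignment step: a lower bound on the Riemannian gradient that is uniform along the trajectory and depends on $r-r_A$ only polynomially. The hypothesis $m>n$ is used to keep the Stiefel geometry nondegenerate for $\X$. I would first work out the case $\A$ diagonal with $\X,\Y$ written in block form, where the gradient is explicitly $(\I-\X\X^\top)\A\Y\Y^\top\A^\top\X$. I would then lower-bound this in terms of the principal angles between the subspaces. The final step is to re-couple with the $\bfTheta$ tracking error via a standard two-timescale Lyapunov function $g+\beta\|\bfTheta-\bfTheta^\star\|_\fro^2$.
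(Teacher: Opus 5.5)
The survey contains no proof of this theorem. It is quoted from \cite{kai2025}, so your plan has to stand on its own.

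\textbf{What is right.} Your first step is correct and can be made exact. For orthonormal $\X,\Y$ one has $\nabla_{\bfTheta}f=\bfTheta-\X^\top\A\Y$, so $\gamma=1$ returns $\bfTheta=\X^\top\A\Y$. After that, $\E_t=-(\I-\X_t\X_t^\top)\A\Y_t\Y_t^\top\A^\top\X_t=\Rgrad_{\X}g(\X_t,\Y_t)$, and likewise for $\F_t$. The scheme is therefore plain RGD on $g$, and the tracking term and two-timescale Lyapunov function are unnecessary. The alignment potentials are also the right objects, and their monotonicity has a clean source. Set $\mathbf{C}=\U_A^\top\X$, $\mathbf{D}=\V_A^\top\Y$, $\mathbf{P}=\mathbf{C}\mathbf{C}^\top$ and $\mathbf{K}=\bfSigma_A\mathbf{D}\mathbf{D}^\top\bfSigma_A$. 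The Riemannian gradient flow of $g$ gives $\dot{\mathbf{P}}=(\I-\mathbf{P})\mathbf{K}\mathbf{P}+\mathbf{P}\mathbf{K}(\I-\mathbf{P})$. Hence $a=\lambda_{\min}(\mathbf{P})$ never decreases, and $\dot a\ge 2\sigma_{\min}^2(\A)\,a(1-a)\,b$ with $b=\lambda_{\min}(\mathbf{D}\mathbf{D}^\top)$, symmetrically for $b$. A discrete version of this for the polar retraction is the lemma your second step needs. It, not the absence of strict saddles, is what drives convergence. For $r_A\ge 2$, points whose spans contain every singular pair of $\A$ except the one for $\sigma_{\min}(\A)$, and are orthogonal to that pair, are \emph{non-strict} saddles of $g$ (positive semidefinite Hessian, quartic descent). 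They are avoided only because $a$ and $b$ stay bounded below.

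\textbf{The gap.} The quantitative steps are read off the statement rather than derived, and your phase-two inequality is false as written. Order the left singular vectors $\mathbf{u}_1,\ldots,\mathbf{u}_{r_A}$ so that $\mathbf{u}_{r_A}$ belongs to $\sigma_{\min}(\A)$. Let the span of $\Y$ contain the row space of $\A$. Give $\X$ the columns $\mathbf{u}_1,\ldots,\mathbf{u}_{r_A-1}$, then $\cos\theta\,\mathbf{u}_{r_A}+\sin\theta\,\mathbf{w}$, then $r-r_A$ further vectors. Here $\mathbf{w}$ and the further vectors are orthonormal and orthogonal to the column space of $\A$, which is possible since $m>n\ge r$. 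Then:
\begin{itemize}
\item $\mathbf{D}\mathbf{D}^\top=\I$ and $a=\cos^2\theta$;
\item $g=\frac12\sigma_{\min}^2(\A)\sin^2\theta$;
\item $\Rgrad_{\Y}g=\mathbf{0}$ and $\|\Rgrad_{\X}g\|_\fro=\sigma_{\min}^2(\A)\sin\theta\cos\theta$.
\end{itemize}
So $\|\Rgrad g\|_\fro^2=2\sigma_{\min}^2(\A)\cos^2\theta\,g$. This is a phase-two point, yet your constant equals $c\,(r-r_A)^4\sigma_{\min}^2(\A)/(r^2r_A\kappa^2)$. That exceeds $2\sigma_{\min}^2(\A)$ once $(r-r_A)^4\gtrsim r^2r_A\kappa^2$. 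Where $a$ and $b$ are bounded below by constants, the correct inequality has no $(r-r_A)$ at all. Indeed, $\|\Rgrad_{\X}g\|_\fro^2=\tr(\mathbf{C}^\top\mathbf{K}(\I-\mathbf{P})\mathbf{K}\mathbf{C})\ge ab^2\sigma_{\min}^4(\A)\tr(\I-\mathbf{P})$, while $2g\le\sigma_{\max}^2(\A)[\tr(\I-\mathbf{P})+\tr(\I-\mathbf{D}\mathbf{D}^\top)]$. Your phase one does not produce $r^4r_A^2\kappa^4/(r-r_A)^8$ either. Integrating the growth inequality gives a burn-in of order $\kappa^2/\min\{a_0,b_0\}$ steps at $\eta\asymp 1/\sigma_{\max}^2(\A)$, i.e.\ $\kappa^2r^2/(r-r_A)^2$ with your $a_0$.

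\textbf{Further problems.} Your initialization estimate is off by a dimension factor. For $\X_0$ uniform on $\st(m,r)$, $\U_A^\top\X_0$ is approximately an $r_A\times r$ Gaussian matrix with entry variance $1/m$. So $\sigma_{\min}(\U_A^\top\X_0)\approx(\sqrt r-\sqrt{r_A})/\sqrt m$, not $(r-r_A)/r$. With $m>n$ the $\X$ side is then the bottleneck, which is the natural role of that hypothesis; nothing about $\st(m,r)$ degenerates. Finally, the PL constant is capped at $2\sigma_{\min}^2(\A)$ for every $r$. Any PL-plus-descent argument with $\eta\asymp 1/\sigma_{\max}^2(\A)$ therefore contracts by at best $1-O(\kappa^{-2})$ per step. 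The displayed $\log(1/\epsilon)$ coefficient, by contrast, behaves like $r_A\kappa^4/r^2$ for $r\gg r_A$. So the informal statement must be suppressing initialization- and dimension-dependent factors. The powers of $r-r_A$ most plausibly come from the random-matrix bound $\sigma_{\min}(\U_A^\top\X_0)\gtrsim(\sqrt r-\sqrt{r_A-1})/\sqrt m$, carried through the whole run by monotonicity, not from the local landscape. You need the precise theorem of \cite{kai2025} before trying to match its exponents.
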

Recent work by~\cite{wei2025benefits} further shows that, in the matrix-sensing setting of~\eqref{eq.prob-sensing}, increasing the rank $r$ not only accelerates optimization, but can also reduce sample complexity: the number of measurements required to recover $\A$ decreases proportionally to $1/r$. 
In practical LLM fine-tuning however, PoLAR does not strictly follow the alternating scheme in~\eqref{eq.polar-updates}, and it replaces explicit retractions with retraction-free updates, e.g., the landing family of methods~\cite{ablin2022landing,gao2022landing,schechtman2023landing}, to improve GPU efficiency. Nonetheless, Theorem~\ref{thm.polar} offers a plausible explanation for why PoLAR tends to benefit from larger $r$.

Apart from PoLAR, the optimization of many other matrix-based parameterizations (e.g., FedPara or KronA) is still not well understood. These architectures can introduce extra symmetries in addition to the standard gauge invariances, which may greatly affect optimization dynamics. We conjecture that explicitly analyzing these architectural properties holds strong potential for further improving downstream performance.

\subsection{Optimization for tensor-based parameterization}
In general, advanced optimization for tensor-based adapters remains a largely underexplored frontier. We highlight several promising directions and open questions below. 

The SP community has developed a rich algorithmic toolbox for (approximate) tensor decomposition. Many tensor problems are traditionally tackled with alternating schemes, most notably alternating least squares (ALS)~\cite{zachariah2012alternating}, block coordinate descent (BCD)~\cite{schizas2012covariance}, and splitting algorithms such as alternating direction method of multipliers (ADMM)~\cite{Giannakis2016}. In contrast, tensorized-LoRA adapters have not yet fully embraced these paradigms, plausibly due to the prohibitive cost of performing multiple backpropagation passes per iteration at the scale of LLMs. How to adapt the analytical insights and practical strengths of these methods to modern fine-tuning pipelines remains an open challenge.

Turning to gradient-based methods in SP, existing theoretical understanding of tensor factorizations is still limited relative to the matrix case. For CP factorization,~\cite{ge2021understanding} shows that a slightly modified gradient flow (GF, the continuous-time counterpart of GD) can solve certain orthogonally decomposable tensor problems. Regarding Tucker parameterizations,~\cite{dong2023fast,liu2024low} have developed ScaledGD variants that aim to handle the richer gauge invariance therein, while~\cite{xia2019polynomial} leverages optimization over the Grassmann manifold to exploit the Tucker-type structure. As for TT, the local convergence of Riemannian gradient descent is established in~\cite{cai2022provable}. At a global level, optimization with TT parameterizations is closely related to deep linear networks, and may therefore suffer from similar depth-related bottlenecks as the tensor order grows~\cite{shamir2019exponential}. Recent progress~\cite{zhang2026ancre} alleviates such bottlenecks in deep linear networks, and may thus also offer useful insights here.
So far, a limited number of these optimization tools have been adopted in practical tensorized adapters for fine-tuning LLMs, leaving substantial room for further innovation.

Last but not least, a critical open direction is to understand whether the parameter savings from exploiting cross-layer structure (see Section~\ref{Sec.tensorized}) come with hidden costs. This concern originates from SP: many tensor analogues of standard matrix concepts, including spectral norm and singular values, are NP-hard to compute~\cite{hillar2013most}. Additionally, tensor problems often exhibit statistical–computational tradeoffs. For example, in spiked tensor PCA~\cite{arous2019landscape}, gradient-based methods can require a substantially higher signal-to-noise ratio (SNR) than the information-theoretic threshold. It remains unclear whether analogous optimization bottlenecks arise for tensorized adapters in LLM fine-tuning, or whether these classical concerns in SP carry over to LLM settings, considering that fine-tuning only aims for good suboptimal solutions rather than the exact one.

\subsection{Summary of efficient optimization}
While the preceding discussion focuses on strategies that can be translated into practical algorithms, a growing body of work aims to uncover the fundamental principles governing LoRA’s success and identify broader optimization trends.
For instance, it is analyzed in~\cite{zeng2023expressive} that for fully connected networks LoRA can be highly expressive when the rank $r$ is sufficiently large. From an optimization and generalization perspective,~\cite{jang2024lora} shows that when $r$ grows with the square root of data size, LoRA in the NTK regime avoids spurious local minima, and enjoys favorable generalization guarantees.

For complementary perspectives centered on implicit regularization of optimization dynamics of GD/GF, cf.,~\eqref{eq.prob-mf} and~\eqref{eq.prob-sensing}~\cite{du2018,jiang2023,wei2026}. In this context, regularization refers to the biases induced by the optimization procedure itself, even in the absence of any explicit regularizer. While the underlying mathematical analyses are often highly technical, the core challenge remains practical: how to reliably and empirically exploit them in real-world fine-tuning regimes, including choosing architectures and optimizers to induce desirable implicit biases, remains an open question.

Lastly, note that most existing optimization techniques are developed for the BM (or vanilla LoRA) parameterization. Theory and practical algorithms for alternative parameterizations, such as SVD-type, high-rank, as well as tensor-based parameterizations, remain comparatively underexplored. We expect substantial room for further progress in these directions. 

\section{Broader scope of LoRA} \label{Sec.applications}

Having discussed adapter models and tailored optimization strategies, this section deals with LoRA thrusts beyond the standard supervised fine-tuning on downstream tasks.

\subsection{Broader fine-tuning with LoRA}

\begin{figure}[t]
	\centering
	\includegraphics[width=0.48\textwidth]{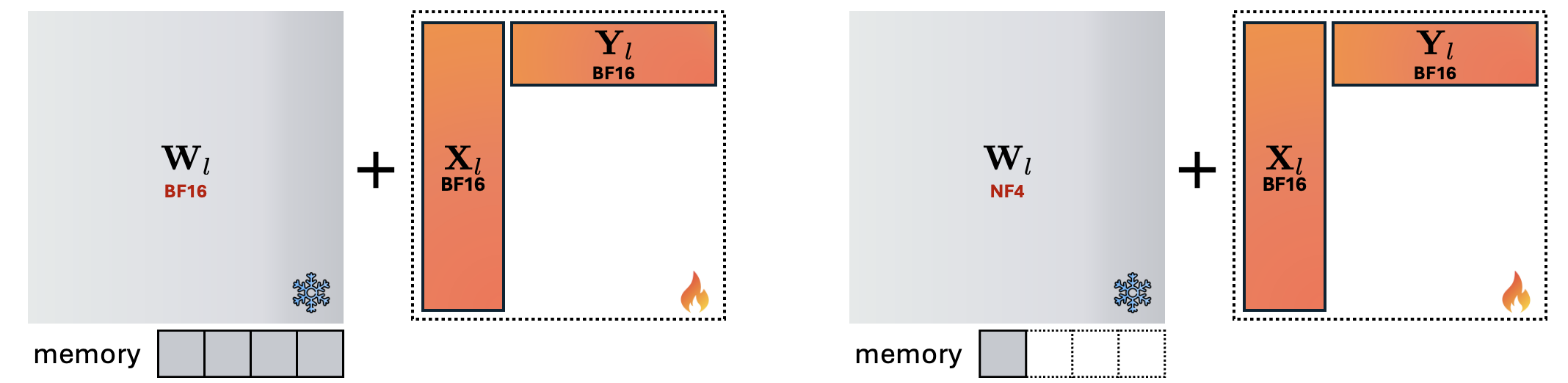}
	\caption{A comparison of LoRA (left) and QLoRA (right).}
	 \label{fig.qlora}
\end{figure}

LoRA improves LLM performance by adapting a pre-trained model to learn a target data distribution for downstream tasks. By interpreting ``downstream tasks'' and ``data distribution'' broadly, it already reveals a richer ecosystem of applications. 
 
\textbf{LoRA accounting for quantization error.}
LLMs are typically pre-trained and fine-tuned in FP16 or BF16 numerical representations. However, serving models at these precisions may not be scalable in cloud settings, especially under millions of concurrent user requests. Moreover, autoregressive generation requires repeated forward passes, which can lead to substantial end-to-end latency.
A standard remedy is quantization, which runs the model at a lower precision (e.g., FP8/INT8 or 4-bit) to reduce memory footprint and improve inference throughput. However, quantization error can degrade downstream performance, that is noticeable at 4-bit precision. For instance, applying GPTQ 4-bit quantization~\cite{frantar2022gptq} to LLaMA-7B leads to roughly a 1-point performance drop in the 5-shot MMLU benchmark~\cite{hendrycks2021measuring}.

LoRA has been adopted as a lightweight approach to compensate for quantization error. QLoRA~\cite{dettmers2024qlora} for example, enables 4-bit quantization (NF4) with minimal performance loss. During fine-tuning, the quantized base model ${\cal Q}(\W_l)$, and the LoRA factors $\{\X_l, \Y_l\}_l$ are learned so that the low-rank updates jointly account for quantization error and adapt the model to the downstream data
\begin{align}\label{eq.prob-qlora}
    \min_{\X_l,\Y_l} f(\{ {\cal Q}(\W_l) + \X_l \Y_l^\top \}_l).
\end{align}
Here ${\cal Q}(\W_l)$ denotes the frozen quantized pre-trained weights, while $\X_l$ and $\Y_l$ are trained and stored in FP16/BF16. 
This pipeline offers two benefits: i) the LoRA updates can compensate for quantization-induced errors; and ii) keeping the base model quantized alleviates the memory bottleneck at fine-tuning; see Figure~\ref{fig.qlora} for an illustration.

LoftQ~\cite{li2023loftq} aims at quantization-aware fine-tuning by constructing a quantized base model whose quantization error is more amenable with a low-rank correction. Specifically, \emph{before} fine-tuning, it solves the per-layer $l$ optimization 
\begin{align} \label{eq.loftq}
    \min_{{\cal Q}(\W_l), \X_l, \Y_l} \| \W_l - {\cal Q}(\W_l) - \X_l\Y_l^\top \|_{\fro}^2.
\end{align}
To handle the discrete nature of quantization, LoftQ alternates between updating the quantized weight ${\cal Q}(\W_l)$ and refitting the low-rank factors $(\X_l, \Y_l)$.
Letting $\Q_{l,t}$ denote the quantized model weight at iteration $t$, LoftQ update takes the form
\begin{align*}
    \Q_{l,t} & = {\cal Q}(\W_l - \X_{l,t-1}\Y_{l,t-1}^\top) \\
    \X_{l,t}, \Y_{l,t} & \in \argmin_{\X ,\Y} \|  \W_l - \Q_{l,t} - \X_l\Y_l^\top\|_\fro^2.
\end{align*}
The resulting $\{ {\cal Q}(\W_l), \X_l, \Y_l \}$ are then used to initialize QLoRA fine-tuning in~\eqref{eq.prob-qlora}.

IR-QLoRA~\cite{qin2024accurate} improves quantized fine-tuning from a complementary angle. Rather than calibrating quantization by minimizing an $\ell_2$ reconstruction loss in~\eqref{eq.loftq}, it quantizes a pre-trained LLM by minimizing an information-based loss. It further introduces an additional, largely parameter-free feature path via pooling/averaging input features. Together, these components stabilize ultra-low-bit ($<4$-bit) quantized fine-tuning.
QA-LoRA~\cite{xu2023qa} targets quantized deployment. Naively merging ${\cal Q}(\W)+\X\Y^\top$ produces an FP16/BF16-weighted model, which cannot be directly deployed in low-bit form. To address this, QA-LoRA advocates a pooling-based LoRA variant that enables low-bit (e.g., INT4) merging of the adapted weights while preserving accuracy.

\textbf{LoRA for long-context adaptation.} 
LLMs are often pre-trained with a relatively short context window, typically 2K to 8K tokens. For example, LLaMA2-7B supports a 4K context length~\cite{llama2}. However, many applications, such as long-document understanding, retrieval over large corpora, and multi-turn agents, require context lengths far beyond the pre-training limit~\cite{bai2024longbench,shinn2023reflexion}. Viewing this extension of context length as a distribution shift, one can apply fine-tuning to mitigate performance degradation. While the core remedy involves efficient attention mechanisms, which go beyond the scope of this paper, LoRA has been used as a lightweight tool to reduce the error induced by context extension. Empirically, LongLoRA \cite{chen2023longlora} reports extending LLaMA2-7B from 4K to 100K context with the aid of LoRA.

\subsection{Pre-training LLMs with LoRA}
While LoRA was originally developed for PEFT, recent work has also utilized it for pre-training to lower hardware barriers. Since LoRA parameterizes updates in a rank-$r$ subspace, a central challenge is how to recover the high-rank weight updates for pre-training with low-rank matrices $\Delta \W_l$.

One idea is to combine low rank with sparsity to obtain effectively higher-rank updates, in the spirit of the low-rank-plus-sparse models discussed in Section \ref{Sec.high-rank}. SLTrain and LOST~\cite{han2024sltrain,li2025lost} fix the sparsity pattern before training to avoid the cost of storing a full-sized dense matrix. While SLTrain samples the nonzero entries at random, LOST enforces a randomly chosen column-wise sparsity pattern, and inserts an activation function between the two LoRA factors to account for nonlinearity. 

Another research line learns a sequence of low-rank updates $\{\Delta \W^{s}_{\ell}\}$. Although each $\W^{s}_{\ell}$ is low-rank, their accumulation $\sum_s \Delta \W^{s}_{\ell}$ can be a high-rank matrix. 
This idea is exploited in ReLoRA~\cite{lialin2023relora}, which repeatedly applies LoRA, merges the update into the base weights, and restarts a new low-rank adapter during pre-training. 
CoLA~\cite{xia2024chain} further links this mechanism to Frank Wolfe method or greedy subspace learning~\cite{jaggi2013revisiting,li2021momentum}, where training proceeds through a sequence of subspaces, and each LoRA stage can be viewed as approximately solving an FW subproblem. 
Notably, the idea of learning low-rank structure in a successive, stage-wise manner also has deep roots in SP, for example via deflation~\cite{ge2021understanding}. Related viewpoints also appear in incremental learning~\cite{li2020towards} and, in certain nonconvex settings, saddle-to-saddle dynamics~\cite{jacot2021saddle,wei2026}.

There are also approaches that rely on more advanced optimization to learn purely low-rank parameterizations. Motivated by the low-rank lottery ticket hypothesis~\cite{schotthofer2022low}, which suggests that a suitably chosen low-rank network can match the performance of a large dense model, LORO~\cite{mo2025parameter} replaces the $m\times n$ linear weight matrix $\W_l$ with a low-rank factorization $\X_l\Y_l^\top$, and trains the factors directly through Riemannian optimization. It is observed that this low-rank model can match the performance of full-parameter pre-training for a LLaMA-1B model, while providing memory and runtime savings. 

A remotely related line of research that leverages low-rank structure for pre-training is GaLore~\cite{zhao2024galore}. Unlike low-rank parameterizations, GaLore keeps the model weights intact, but maintains a low-rank representation of the training dynamics. Concretely, it projects gradients into a low-rank subspace, performs the optimizer update (including momentum and second-moment states) in that compressed space, and then lifts the resulting update back to the original parameter space. By storing and updating the compressed gradients and optimizer states, GaLore achieves substantial memory savings.

\subsection{Serving LLMs with LoRA}
\label{Sec.serving}
LoRA can also benefit LLM serving. When deploying a model for a single downstream task, the adapted weights $\W_l+\X_l\Y_l^\top$ can be pre-merged into a single matrix. As a result, serving a single LoRA-adapted model incurs no additional inference latency, as opposed to other PEFT methods such as prefix tuning~\cite{li2021prefix}.

However, LoRA’s serving advantage is most highlighted when managing multiple downstream tasks. Consider the multi-domain serving setup where the users are from $N$ domains, e.g., coding, mathematics, and health. A straightforward solution would be fine-tuning $N$ separate LLMs, one per domain, which requires storing $N$ full models in memory for serving. This can be prohibitive when $N$ or the base model size grows. In contrast, LoRA allows for storing a single shared base model alongside $N$ lightweight adapters, each specialized to a particular domain. The total storage requirement boils down to $N S_a + S_b$, where $S_a$ and $S_b$ are the sizes of the base model and LoRA adapter; typically $S_a \ll S_b$. Thus, LoRA enables multi-domain serving in a highly memory-efficient manner.
Building upon this, several works have further improved the efficiency of multi-LoRA serving from various perspectives. SLoRA~\cite{sheng2024slora} leverages paging techniques to optimize memory access for faster inference. LoRA-Inlaid~\cite{xia2024efficient} combines LoRA with quantization to efficiently serve multiple adapters. More recently, EdgeLoRA~\cite{shen2025edgelora} demonstrates LoRA-based serving in edge scenarios with tight resource constraints. 

These benefits are not limited to BM-based LoRA parameterizations. For an SVD-style adapter $\Delta \W_l=\U_l\bfSigma_l\V_l^\top$, one can merge $\U_l \bfSigma_l$ into a single matrix, making it equivalent to serving with standard LoRA. For tensor-based parameterizations, efficient serving remains less explored. Nevertheless, in the worst case, one can always ``unpack'' the tensor along the layer dimension and obtain $L$ per-layer low-rank updates. 
Lastly, for low-rank-plus-sparsity parameterizations, while the low-rank component can be  efficiently deployed, the sparse component requires leveraging specialized hardware, such as NVIDIA’s sparse engine (with $N:M$ sparsity), to accelerate computation, which remains an open area of research.

Another important application of LoRA is batched serving, where $K$ users simultaneously request a shared base model but $K$ different LoRA adapters $\Delta\W_l^k=\X_l^k(\Y_l^k)^\top$ for their respective tasks. Let the input to the $l$-th layer be $\mathbf{z}_l^k \in \bR^{m}$. Batching concatenates inputs to from matrix $\mathbf{Z}_l :=[\mathbf{z}_l^1, \ldots, \mathbf{z}_l^K]^\top$, so that a single general matrix multiply (GeMM) can process all users' requests in parallel, avoiding $K$ separate kernel launches and improving throughput by amortizing overhead. With standard LoRA, batching applies cleanly to the pre-trained weight $\W_l$, but user-specific adapters cannot be fused into the same batched GeMM. Consequently, the forward pass becomes
\[
    \underbrace{\mathbf{Z}_l\W_l}_{\text{batching}} + \text{row-stack}\big[\underbrace{(\mathbf{z}_l^1)^\top \X_l^1(\Y_l^1)^\top, \ldots, (\mathbf{z}_l^K)^\top \X_l^K(\Y_l^K)^\top}_{\text{cannot be fused}} \big]
\]
where $\text{row-stack}[\cdot]$ stacks $K$ row vectors into a matrix.
FastLoRA~\cite{wen2023batched}, which predates HiRA from a different motivation, addresses this issue by reparameterizing adaptation with~\eqref{eq.hira} so that the forward pass becomes
\begin{align}\label{eq.fast-lora}
    & \mathbf{Z}_l\W_l + \\
    & \big[(\mathbf{z}_l^1)^\top (\W_l \circ \X_l^1(\Y_l^1)^\top), \ldots, (\mathbf{z}_l^K)^\top (\W_l \circ \X_l^K(\Y_l^K)^\top )\big]. \nonumber
\end{align}
This form admits a GEMM-friendly batch implementation; see also Appendix~\ref{Apdx.fastlora}. The tradeoff is that FastLoRA incurs extra $\mathcal{O}(mnr)$ computations compared to standard LoRA, making it most advantageous in the large-$K$ regime, where throughput gains from improved batching outweigh the extra arithmetic.

\subsection{Reinforcement Learning with LoRA}
At a high level, LLM alignment refers to post-training procedures, such as supervised fine-tuning and preference-based optimization, that steer a pretrained model toward behavior preferred by humans.
Recent evidence suggests that LoRA is surprisingly competitive for the post-training and alignment of LLMs via reinforcement learning (RL). For instance, TiNA~\cite{wang2025tina} demonstrates that applying LoRA-based RL to a 1.5B parameter base model yields substantial reasoning improvements at a small cost of approximately \$9 USD. Similarly,~\cite{sun2023exploring} investigated LoRA-based proximal policy optimization (PPO) alignment for LLaMA-7B. It finds that, LoRA-based RL from human feedback (RLHF) can match or even slightly outperform full fine-tuning while drastically reducing memory consumption. Recently,~\cite{schulman2025lora} reports that LoRA performs comparably to full-parameter updates in RL settings even with minimal ranks. Through an information-theoretic lens it is argued that RL updates may inherently require relatively low effective capacity, which can be readily provided by lightweight LoRA adapters.

The preceding four subsections collectively demonstrate that LoRA is a versatile tool applicable throughout the entire lifecycle of an LLM, from initial pre-training to downstream fine-tuning, alignment, and deployment/serving. 
By offering memory efficiency at every stage of the pipeline, LoRA effectively democratizes large-model development and deployment by lowering hardware barriers.

\subsection{Concept injection and merging with LoRA}

\begin{figure}[t]
	\centering
	\includegraphics[width=0.48\textwidth]{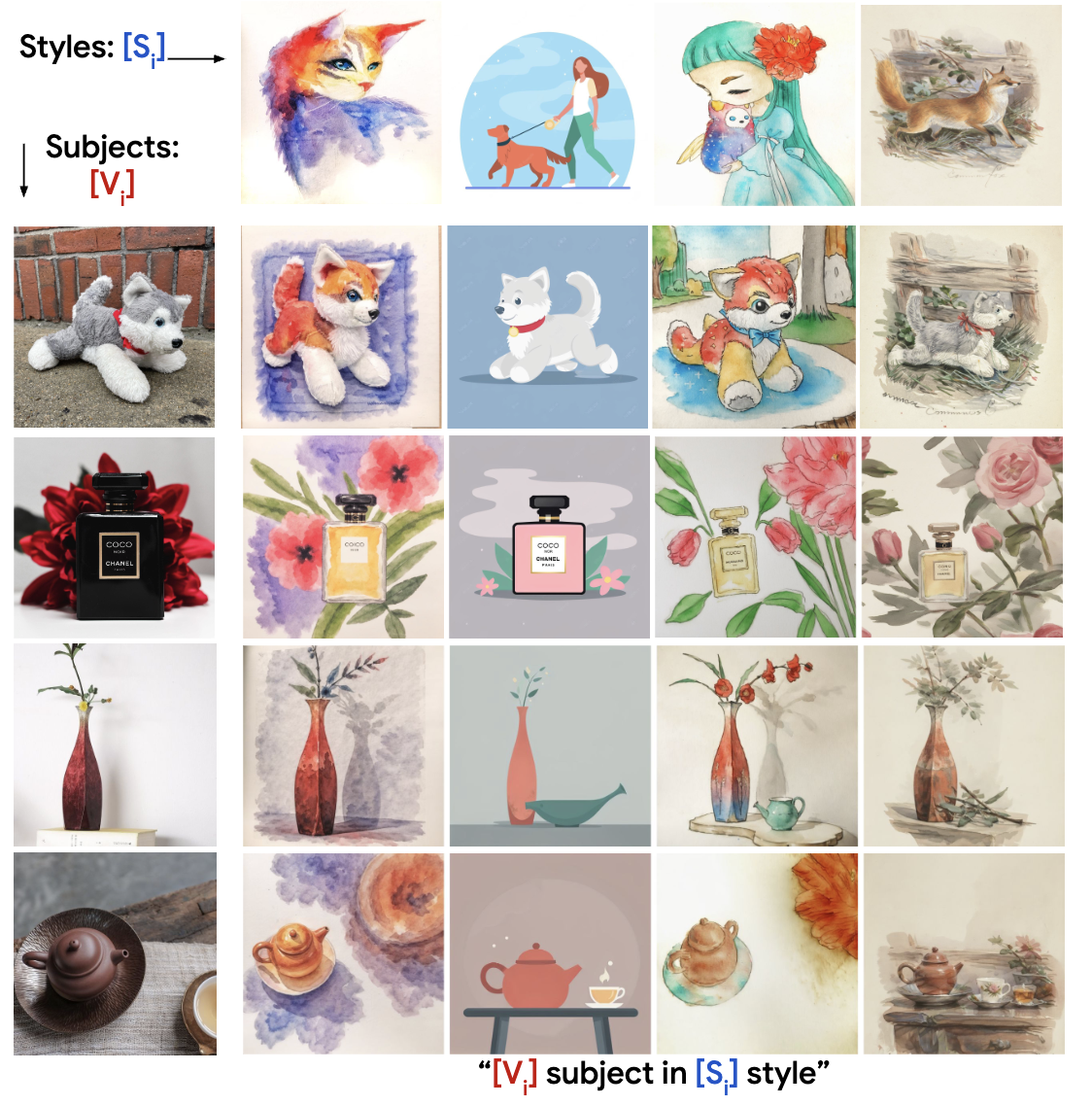}
	\caption{Mix of multiple (two here) concepts. Figure taken from ZipLoRA~\cite{shah2024ziplora}. }
	 \label{fig.moe-LoRA}
\end{figure}

While the methodologies discussed are broadly applicable, this subsection focuses specifically on text-to-image generalization for ease of visualization. 
The additive nature of LoRA is often interpreted as a mechanism for encoding ``add-on knowledge.''  This paradigm has proven highly effective in personalized text-to-image generation, where a diffusion model is adapted to generate images of novel subjects, such as a user’s own dog, using only a handful of (typically $\leq 10$) reference examples~\cite{ruiz2023dreambooth}. Since the pre-trained model lacks prior exposure to these specific instances, attaching and fine-tuning a LoRA module provide a compact, modular representation of the new ``concept.'' 

A natural application extending this interpretation of additive concept encoder, is to combine $N$ separately trained LoRAs, say $\{\X_l^n,\Y_l^n\}_{n=1}^N$, to enable joint generation of multiple learned concepts. A representative example is subject–style fusion, where individual LoRAs encoding subject identity and artistic style are ``merged'' to produce diverse stylized outputs; see Figure~\ref{fig.moe-LoRA} from ZipLoRA~\cite{shah2024ziplora}.
For such multi-concept generation, the central challenge lies in how to combine them to achieve high-fidelity generation. Interestingly, a simple linear composition,
$\Delta \W_l=\sum_n w_l^n \X_l^n(\Y_l^n)^\top$,
is often sufficient for high-quality results in many tasks~\cite{huang2023lorahub,han2023svdiff}. 
Although the composition is linear at the weight level, the cumulative effect remains nonlinear due to the network depth.
The mixing coefficients $w_l^n$ are optimized in a layer-wise manner to handle potential semantic conflicts and context dependence of concepts. For instance, the word ``football'' may refer to soccer or American football depending on whether the desired context is European or American. 

These conflict-resolution strategies draw clear parallels to the established literature on ensemble learning~\cite{shen2019random,lu2020ensemble,lu2023surrogate}, which also has a long-standing history in SP, as well as to multi-task learning~\cite{sener2018multi}.
For example, MoLE~\cite{wu2024mixture} adopts a mixture-of-experts (MoE) framework, employing a gating network to dynamically predict the mixture coefficients $\{w_l^n\}$. 
Alternatively,~\cite{gu2023mix} determines $\{w_l^n\}$ by minimizing the discrepancy between outputs of the merged model and the individual adapters on a calibration set. Concretely, let $\Delta \W_l^n = \X_l^n(\Y_l^n)^\top$, and define the merged update $\Delta \W_l(\mathbf{w}_l) := \sum_{n=1}^N w_l^n \Delta \W_l^n$ and calibration dataset $\mathcal{D}$. The weights are optimized via
\begin{align*}
    \min_{{\mathbf{w}_l}} \mathbb{E}_{\x \in {\cal D}} \sum_{n=1}^N
    \| \big(\W_l + \Delta \W_l(\mathbf{w}_l) \big) \x - \big( \W_l + \Delta \W^n \big) \x \|^2.
\end{align*}
Other work has investigated learning mixing coefficients via contrastive objectives~\cite{simsar2025loraclr}, token- or region-wise weighting schemes~\cite{yang2024lora}, or auxiliary hypernetworks that generate the weights on the fly~\cite{shenaj2025lora}.

\begin{figure*}[t]
	\centering
	\includegraphics[width=0.9\textwidth]{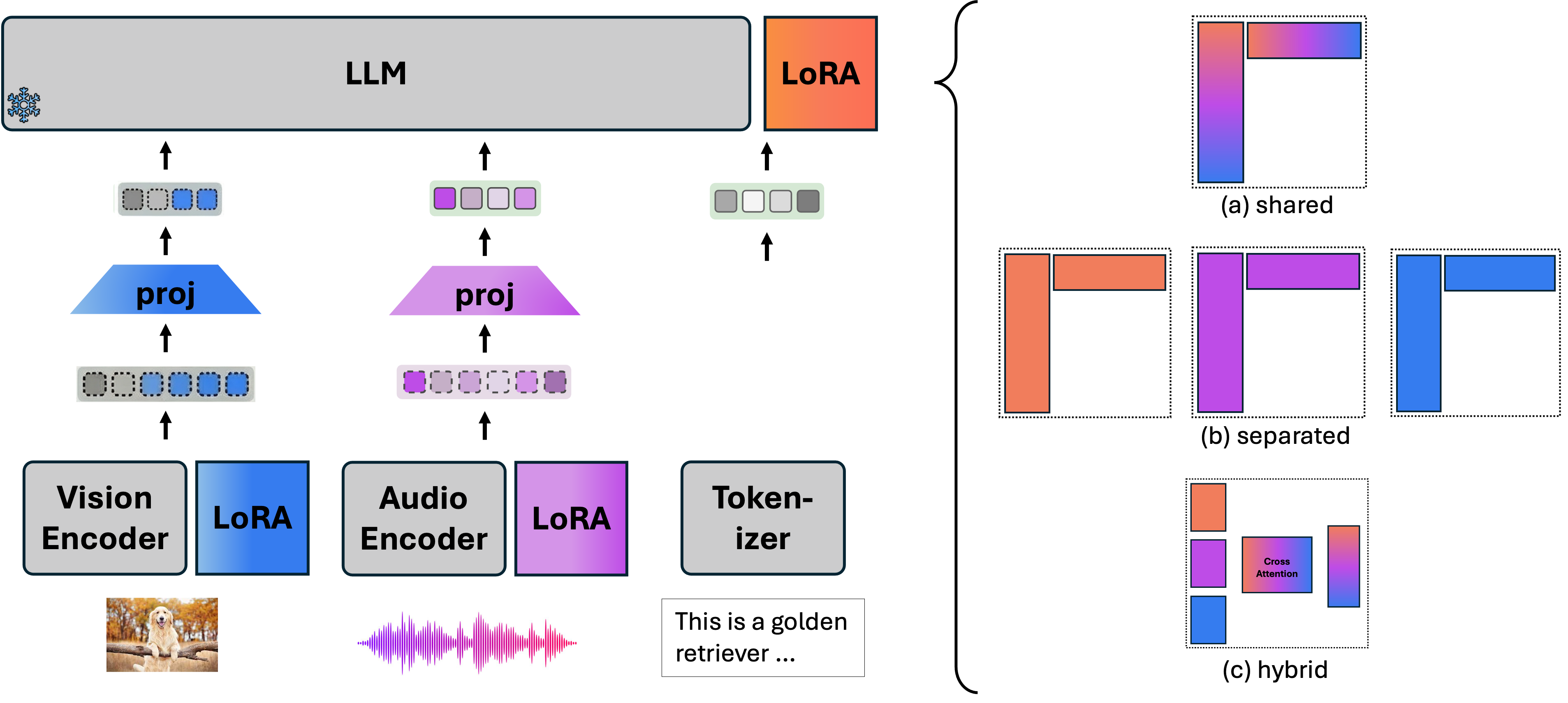}
	\caption{LoRA for fine-tuning multi-modal models.}
	 \label{fig.multi-modality}
\end{figure*}

\subsection{Responsible fine-tuning with LoRA}
As LLMs continue to raise noticeable societal concerns, it is natural to ask whether LoRA fine-tuning mitigates or exacerbates issues including safety, privacy, and bias.

\textbf{LoRA for uncertainty quantification.}
Uncertainty quantification is indispensable for safety-critical scenarios such as AI-assisted medical diagnosis and legal reasoning. While LLMs often exhibit reasonable calibration during pre-training~\cite{kadavath2022language}, meaning that the model’s stated confidence roughly matches its actual correctness frequency,
fine-tuning on domain-specific tasks often makes the model over-confident, especially when the dataset~\cite{jiang2021calibrating,tian2023just} is small.
This motivates the pursuit of Bayesian approaches that offer model uncertainty quantification. 
Laplace-LoRA~\cite{yang2023bayesian} first performs standard LoRA fine-tuning to obtain a maximum a posteriori (MAP) estimate the LoRA parameters, and then uses a Laplace approximation to the posterior to assess uncertainty. For scalability, LoRA is viewed as a two-layer linear network, allowing Kronecker-factored (KFAC) block approximation for the Fisher information matrix~\cite{martens2015optimizing}. An additional low-rank approximation is introduced to keep the KFAC matrices tractable.
BLoB~\cite{wang2024blob} performs Bayesian adaptation by jointly learning the mean and covariance of a low-rank variational distribution via backpropagation. By treating $\X_l$ as a variational random variable, it optimizes the resulting variational objective end-to-end. 
In addition, ScalaBL~\cite{samplawski2025scalable} and C-LoRA~\cite{rahmati2025c} shift the Bayesian/variational treatment to a much smaller set of parameters. In particular, they leverage SVD-type models, and only follow a Bayesian approach for the singular value $\bfSigma_l \in \bR^{r \times r}$, to reduce variational overhead, and thus enhance scalability. 

\textbf{LoRA for safety-aware fine-tuning.}
There are two primary directions along this line of research. One utilizes LoRA to probe the fragility of alignment mechanisms. For example,~\cite{lermen2023lora} shows that adversarial LoRA or QLoRA fine-tuning can substantially erode the refusal behavior of aligned chat models.
The other direction leverages LoRA for safety fine-tuning. Safe LoRA~\cite{hsu2024safe} presumes that alignment fine-tuning identifies ``safety directions'' in weight space. It constrains LoRA weights to stay within those directions $\V_l := \W_l^{\text{aligned}}-\W_l^{\text{unaligned}}$. As an example, the aligned model could be LLaMA-3.1-Chat, and the unaligned model is the corresponding base LLaMA-3.1. The associated safety projection matrix is
\begin{align}
    \mathbf{C}_l = \V_l(\V_l^\top \V_l)^{-1}\V_l^\top .
\end{align}
Given a LoRA update $\Delta \W_l=\X_l\Y_l^\top$, Safe LoRA constructs the projected update $\Delta \hat{\W}_l := \mathbf{C}_l\Delta \W_l$. The original update $\Delta \W_l$ is replaced by $\Delta \hat{\W}_l$ whenever the projected update is sufficiently misaligned with the original update; that is, $\langle \Delta \hat{\W}_l,\Delta \W_l\rangle \leq \tau \|\Delta \hat{\W}_l\|_\fro \|\Delta \W_l \|_\fro$
for a threshold $\tau$. Subsequent work~\cite{li2025salora} replaces $\mathbf{C}_l$ with a data-dependent formulation.
Despite these promising results, there remains room for improvement in safety-aware PEFT. For example, establishing theoretical robustness guarantees would provide stronger evidence that such methods improve safety.

\textbf{LoRA for privacy-aware fine-tuning.}
It is shown in~\cite{ran2025lora} that LoRA fine-tuning does not inherently prevent privacy leakage, and hence membership inference attacks remain effective. To address this issue, differential privacy (DP)–based optimizers inject carefully calibrated noise into gradient updates, providing a principled way to limit such leakage. Notably,~\cite{yu2021differentially} suggests that DP optimizers such as DP-SGD can offer better privacy–utility tradeoffs when applied to PEFT than full-parameter update.
Building on this,~\cite{sun2024improving} observes that LoRA’s bilinear parameterization can cause DP noise to interact destructively with the two-factor updates. To mitigate this interference, the $\X_l$ factors are frozen per layer, and only the remaining components are trained. Together, these observations suggest that gauge invariance discussed in Section~\ref{Sec.gauge-invariance} may be a natural complement to private LoRA, potentially improving both training stability and model utility.

\subsection{Applications to multi-modal LLMs}
Multi-modal foundation models often recruit an LLM as the backbone, mapping vision or audio inputs into ``soft prompts'' or prefix-like embeddings via an encoder plus a learnable projection module. LoRA can be applied to jointly adapt the modality encoders and the LLM to improve cross-modal alignment. For instance,~\cite{cappellazzo2025large} fine-tunes video encoders in tandem with the LLM to improve representation parity between the two components; see Figure~\ref{fig.multi-modality} for an example.

There are several design choices for fine-tuning the backbone LLM. A straightforward strategy is to treat tokens from different modalities uniformly, and apply LoRA in the standard modality-agnostic way~\cite{cappellazzo2025large}. Conversely, Phi-4-multi-modal~\cite{abouelenin2025phi} leverages modality-specific LoRA modules coupled with dedicated routing mechanisms. This allows the model to switch between different modes, such as text-only, vision-plus-text, or audio-plus-text, while minimizing the burdensome cross-modal interference. MoKA~\cite{wei2025moka} proposes a hybrid approach that jointly models unimodal and cross-modal adaptation. It assigns each modality a unique LoRA factor $\{\X_l^m\}_m$, mixes modality-specific features via cross-attention, and subsequently applies a shared $\Y_l$ factor across modalities; see Figure~\ref{fig.multi-modality}. It remains unclear whether fully shared, modality-specific, or hybrid parameterizations offer the desired tradeoff in practice. The answer may depend on the modality mix, data regime, and target use case, leaving substantial room for further research towards principled multi-modal adapter designs. 

\subsection{Additional directions}
As LoRA has gained widespread popularity and vast collections of LoRA checkpoints have become publicly available~\cite{zhao2024retrieval} (e.g., on HuggingFace), LoRA weights are increasingly being viewed as a new form of ``data.'' Inspired by generative modeling of images, recent work explores meta-models capable of directly generating LoRA weights. Thanks to the relatively low parameter count of LoRA adapters, several studies have employed diffusion-based generators to synthesize LoRA updates~\cite{wu2024difflora,shao2025context}. Further,~\cite{putterman2024learning} incorporates the gauge invariance inherent in LoRA’s bilinear parameterization model and generation procedure. 
While these approaches are intriguing, they have so far been studied for task-specific personalization, and have not yet been demonstrated at larger scales of general-purpose foundation models. The generated adapters typically underperform those obtained via direct fine-tuning. Nevertheless, these preliminary results are encouraging: they achieve non-trivial performance across several settings and point toward toward faster, more reusable, and potentially ``amortized'' model adaptation.

\section{Research Outlook}
The adapter architectures, optimization strategies, and broader scopes reviewed thus far indicate that substantial opportunities for research remain. While specific challenges have been noted throughout this survey, we wrap up by highlighting key areas where synergies between the SP and deep learning (DL) communities could lead to impactful advances.

\subsection{From SP to DL}
The SP community has developed a rich toolkit for leveraging low-rank structure. A number of SP tools can have high potential in the context of LLM fine-tuning too. 

\textbf{SP-transferred architectures.}
Architectures with provable efficiency in classical SP settings may offer significant merits when transferred to LLM adaptation.
One example we have already seen is the SVD-type parameterizations discussed earlier in Section~\ref{Sec.SVD-parameterization}. While they are equally expressive as the BM factorization, SVD parameterizations can exhibit more favorable optimization behavior on matrix sensing~\eqref{eq.prob-sensing}; that is, one-layer LoRA.
Moreover, several alternative parameterizations that are well-known in SP deserve exploration.
For instance,~\cite{ma2023over} shows that in the rank-one case, a tensorized lift of~\eqref{eq.prob-sensing} can lead to a more benign loss function landscape. In regimes where the restricted isometry property (RIP) fails and BM exhibits spurious local minima, the lifted formulation can turn these spurious minima into strict saddles. In the tensor regime, hierarchical Tucker decomposition~\cite{hackbusch2009new} is also known to improve the efficiency of standard Tucker by reducing the parameters in the core.
These examples raise an open question: do these architectures with established benefits in classical SP retain their advantages in the LLM scale? 

\textbf{SP-inspired optimization.} 
Alternating schemes such as ALS and splitting methods including ADMM have been successful in SP, yet they have seen limited adoption in LoRA fine-tuning. The major bottleneck may be that they require multiple backward passes at LLM scale. While a few recent works have begun to revisit such ideas in deployment-oriented settings \cite{ma2026salaad}, translating them into memory-efficient fine-tuning remains a challenge.
Another pressing direction is to better understand optimization for tensor-based parameterizations. Although tensorized adapters can be more compact than matrix-based ones, it is less clear whether they introduce implicit costs. Tensor methods in SP are known to have computational-versus-statistical efficiency tradeoffs~\cite{dong2023fast}. Whether analogous fundamental tradeoffs emerge with tensorized adapters in LLM fine-tuning remains a largely uncharted territory. 

\textbf{Understanding fine-tuning dynamics.}
Understanding learning dynamics can reveal additional levers for efficient and reliable fine-tuning, and several SP tools may be particularly useful in this regard. For example, it is well known that nuclear-norm regularization admits an equivalent BM variational form with Frobenius penalties~\cite{recht2010guaranteed,jang2024lora} through
\[
    \|\Delta \W \|_* = \min_{\X\Y^\top=\Delta \W} \frac{1}{2}\big(\|\X\|_\fro^2+\|\Y\|_\fro^2\big).
\]
This relationship suggests that weight decay on LoRA factors implicitly biases the induced update $\Delta \W$ toward a smaller nuclear norm. This offers a lens for interpreting why weight decay can stabilize training or improve generalization in practice. It remains to be seen how far this analogy carries beyond BM-style factorizations, e.g., to SVD-type and tensorized parameterizations, and whether comparable variational characterizations exist to explain their implicit biases.

\textbf{Sparsity with memory-efficient parameterization?}
Low-rank plus sparsity is a foundational modeling recipe in SP. However, current ``sparse'' LoRA variants typically fix the sparsity pattern in advance, as learning the sparse support end-to-end often necessitates applying an $\ell_1$ regularizer on a full matrix $\bfS_l \in \bR^{m \times n}$, which is difficult at LLM scale to reconcile with strict memory budgets. 
A key open problem is how to make the \emph{sparsity pattern} learnable without ever materializing a dense $\bfS_l\in\bR^{m\times n}$.

\textbf{Broader scopes.} 
Beyond optimization and parameterizations, information theory and coding theory can provide insightful guidance for quantization and compression, leading to more principled designs. Moreover, subspace methods such as CCA and ICA could offer a fresh perspective on multi-view and multi-modal adaptation. For example, CCA could be used to learn low-dimensional shared representations, while ICA could identify disentangled, modality-specific latent factors to minimize cross-modal interference during joint fine-tuning.

\subsection{From DL back to SP}
The innovations currently driving LLM fine-tuning are, in turn, proving useful for SP. Recent architectural and optimization paradigms, though motivated by LLM adaptation, are theoretically validated on classical SP testbeds such as matrix sensing and matrix completion. As a result, a number of these parameterizations and algorithmic principles can be transferred back to traditional low-rank recovery problems with minimal modification.
A salient example is the study of Hadamard decomposition~\cite{ciaperoni2024hadamard}, which is closely related to FedPara~\cite{hyeon2021fedpara} and HiRA~\cite{huang2025hira}. The goal is to approximate a matrix as the Hadamard product of low-rank matrices
\[
   \min_{\X_1, \X_2, \Y_1, \Y_2 } \| \X_1 \Y_1^\top \circ \X_2 \Y_2^\top - \W  \|_\fro^2.
\]
This parameterization can be more compact than the standard BM factorization. Gradient-based solvers for this problem are developed in~\cite{ciaperoni2024hadamard,wertz2025efficient}. Notably, the resulting optimization inherits additional symmetries beyond the usual factorization invariance within per-pair $(\X_i,\Y_i), i = \{1,2 \}$, and across Hadamard product, which further enriches the geometry and poses new challenges and opportunities for algorithm design.

Another compelling direction involves explicitly accounting for symmetries in classical SP problems. Recent work in DL has highlighted that making such symmetries explicit can yield simpler and more efficient algorithms, with examples already discussed in Section~\ref{Sec.gauge-invariance}. 
The rich factorization and recovery tasks in SP also exhibit rich—symmetry groups that the optimizer must implicitly navigate. Interestingly, for several SP problems even within the same gauge-equivalence class, not all parameterizations are ``equally good'' from an optimization standpoint. For instance,~\cite{ge2017no} demonstrates that balanced factorizations $\|\X\|_\fro \approx\|\Y\|_\fro$ enjoy more benign landscape properties, such as the absence of spurious local minima under suitable conditions. 
Moreover, flatter minima have been linked to better generalization in matrix recovery~\cite{ding2024flat}. These insights motivate designing algorithms that explicitly respect the underlying symmetries and, among equivalent parameterizations, preferentially steer iterates toward well-conditioned representatives that offer favorable behaviors.

\section{Conclusion}
In this survey, a comprehensive overview of low-rank adaptation (LoRA) for large models is provided  to connect it with well-established SP tools. By reframing LoRA as a modern rendition of classical low-rank models, this survey highlighted how architectural choices and optimization algorithms drive the success of LoRA. Our discussion has emphasized that the relationship between SP and LoRA is not an one-way transfer of knowledge. While SP offers a useful toolbox and rigorous analysis to improve and justify fine-tuning, the scale and complexity of foundation models introduce novel challenges that push the boundaries of classical SP. 

As the field moves toward more sophisticated practical settings, the synergy between these two disciplines will be vital. Future research should not only continue bringing SP insights to bear on large-scale model adaptation, but also leverage the innovations developed for LoRA to inspire novel approaches to fundamental SP problems, such as matrix sensing and tensor completion. Ultimately, this cross-fertilization will yield a more principled and efficient framework for artificial general intelligence. 

\bibliographystyle{IEEEtranS}
\bibliography{datactr}

\appendix

\section{Trash bin}
\subsection{Proof of Proposition~\ref{prop.Hadamard-express}}
\begin{proof}
First, we demonstrate that $\Delta \W$ can express any matrix of rank up to $r$. Let $\X^2 = [\mathbf{1}_m, \mathbf{0}, \dots, \mathbf{0}]$ and $\Y^2 = [\mathbf{1}_n, \mathbf{0}, \dots, \mathbf{0}]$, where $\mathbf{1}$ denotes the all-ones vector of the specified dimension. This yields $\Delta \W = (\X^1 \Y^{1\top}) \odot (\X^2 \Y^{2\top}) = \X^1 \Y^{1\top}$. This recovers the BM factorization used in vanilla LoRA, which spans all matrices of rank at most $r$.

Second, we evaluate the expressiveness for higher-rank matrices. Using Theorem~\ref{thm.Hadamard-rank}, it is clear that $\rank(\Delta \W) \le r^2$. Recall that the dimension of a rank-$r$ matrix manifold in $\mathbb{R}^{m \times n}$ is $r(m+n-r)$. As $\Delta \W$ is parameterized by two manifolds of rank at most $r$, it has no more than $2r(m+n-r)$ degrees of freedom (DoF). The dimension of a rank-$r^2$ manifold may exceed this DoF via
\begin{equation*}
r^2(m+n-r^2) > 2r(m+n-r). 
\end{equation*}
As a consequence, the parameterization cannot be surjective. That says, $\Delta \W$ can express only a subset of matrices of rank within $(r, r^2]$. 
\end{proof}

\subsection{Proof of Proposition~\ref{prop.HiRA-express}}
\begin{proof}
From Theorem~\ref{thm.Hadamard-rank}, the upper bound of rank is $r_{\max} := \min\{ \rank(\W) \cdot r,m,n \}$. To prove the proposition, it suffices to show that there exists certain $\W$, such that $\Delta \W = \W \odot (\X \Y^\top)$ can only cover a subset of matrices with rank in $(0, r]$ and $(r, r_{\max}]$. A simple example is $\W = \I$, which results in diagonal $\Delta \W$. 
\end{proof}

\subsection{Proof of Proposition~\ref{prop.Kron-express}}
\begin{proof}
From Thoerem~\ref{thm.Kron-rank}, the upper bound of $\rank(\A \otimes \B)$ is $r^2$. Next, we construct a counterexample to show that exists certain $\Delta \W$ which cannot be expressed as $\A \otimes \B$. Specifically, consider $d_1 = d_2 = d_3 = d_4 = 2$. It follows that
\[
\Delta \W = \begin{bmatrix}
    a_{11}\B & a_{12}\B \\
    a_{21}\B & a_{22}\B \\
    \end{bmatrix}.
\]
Hence the block structure cannot represent matrices such as
\[
\begin{bmatrix}
    1 & 0 & 0 & 0 \\
    0 & 0 & 0 & 0 \\
    0 & 0 & 0 & 1 \\
    0 & 0 & 0 & 0 \\
\end{bmatrix}
\]
which suggests the Kronecker product can only express a subset of matrices of rank up to $r^2$. 
\end{proof}

\subsection{Proof of Proposition~\ref{prop.low-rank+sparse}}
\begin{proof}
First, setting $\bfS = \bf0$ recovers LoRA, thus enabling parameterizing all matrices of rank no more than $r$. Second, using Theorem~\ref{thm.Bernoulli-rank} leads to the upper bound
\begin{align*}
    \rank(\Delta \W) &\le \min\{ \rank(\X\Y^\top) + \rank(\bfS), m, n \} \\
    &\le \min\{ m ,n \}
\end{align*}
which concludes the proof. 
\end{proof}

\subsection{ScaledGD derivation}
\label{Apdx.ScaledGD}

First, we show that ScaledGD's metric is gauge-invariant. Consider two gauge-equivalent points $(\X, \Y)$ and $(\tilde{\X}, \tilde{\Y}) = (\X\Q, \Y\Q^{-\top}),\,\Q \in \mathsf{GL}(r)$. Let $(\bm{\xi}_\X, \bm{\xi}_\Y), (\bm{\varphi}_\X,\bm{\varphi}_\Y) \in \mathcal{T}_{\X,\Y}$ be any two points on the tangent space at $(\X, \Y)$. By pushforward, the corresponding gauge-invariant points on the tangent space $\mathcal{T}_{\tilde{\X},\tilde{\Y}}$ are $(\tilde{\bm{\xi}}_\X, \tilde{\bm{\xi}}_\Y ) = (\bm{\xi}_\X \Q, \bm{\xi}_\Y \Q^{-\top})$ and $(\tilde{\bm{\varphi}}_\X, \tilde{\bm{\varphi}}_\Y) = (\bm{\varphi}_\X \Q, \bm{\varphi}_\Y \Q^{-\top})$. Using the definition of ScaledGD's metric, it follows that
\begin{align*}
    g_{(\tilde{\X},\tilde{\Y})}& \big((\tilde{\bm{\xi}}_\X,\tilde{\bm{\xi}}_\Y),(\tilde{\bm{\varphi}}_\X,\tilde{\bm{\varphi}}_\Y)\big) \\
    & = \langle \tilde{\bm{\xi}}_\X (\tilde{\Y}^\top \tilde{\Y}), \tilde{\bm{\varphi}}_\X \rangle_\fro + \langle \tilde{\bm{\xi}}_\Y (\tilde{\X}^\top \tilde{\X}), \tilde{\bm{\varphi}}_\Y \rangle_\fro \\
    &= \langle \bm{\xi}_\X \Q (\Q^{-1} \Y^\top \Y \Q^{-\top}), \bm{\varphi}_\X \Q \rangle_\fro + \\
    &\qquad \langle \bm{\xi}_\Y \Q^{-\top} (\Q^{\top} \X^\top \X \Q), \bm{\varphi}_\Y \Q^{-\top} \rangle_\fro \\
    &= \langle \bm{\xi}_\X (\Y^\top \Y ), \bm{\varphi}_\X \rangle_\fro + \langle \bm{\xi}_\Y (\X^\top \X), \bm{\varphi}_\Y \rangle_\fro \\
    &= g_{(\X,\Y)} \big((\bm{\xi}_\X, \bm{\xi}_\Y), (\bm{\varphi}_\X, \bm{\varphi}_\Y)\big)
\end{align*}
which proves the gauge-invariance of the metric. 

Next, we drive the Riemannian gradient under the metric. 
Let \(F(\X,\Y) := f(\X, \Y)\) be the lifted objective on the total space
\(\bR^{n\times r}\times \bR^{m\times r}\).
For conciseness, denote the Euclidean (Frobenius) gradients by
\[
\G_\X := \nabla_\X F(\X, \Y), \qquad \G_\Y := \nabla_\Y F(\X,\Y).
\]
Next, we derive the Riemannian gradient from the Riemannian metric
\begin{align*}
    g_{(\X,\Y)}& \big((\bm{\xi}_\X,\bm{\xi}_\Y),(\bm{\varphi}_\X,\bm{\varphi}_\Y)\big) \\
    & := \langle \bm{\xi}_\X (\Y^\top \Y), \bm{\varphi}_\X \rangle_\fro + \langle \bm{\xi}_\Y (\X^\top \X), \bm{\varphi}_\Y \rangle_\fro. 
\end{align*}

The Riemannian gradient \(\Rgrad F(\X,\Y) = (\Rgrad_\X F, \Rgrad_\Y F)\)
is defined as the unique tangent vector satisfying, for any tangent direction \((\bm{\varphi}_\X,\bm{\varphi}_\Y)\), that
\begin{align}\label{eq:riem-grad-def}
g_{(\X,\Y)}&\big((\Rgrad_\X F,\Rgrad_\Y F),(\bm{\varphi}_\X,\bm{\varphi}_\Y)\big) \nonumber \\
&= \langle \G_\X,\bm{\varphi}_\X \rangle_\fro + \langle \G_\Y,\bm{\varphi}_\Y \rangle_\fro. 
\end{align}

Substituting the definition of \(g_{(\X, \Y)}\) into~\eqref{eq:riem-grad-def} yields
\begin{align}\label{eq:riem-grad-plug}
\langle \Rgrad_\X F \, \Y^\top \Y, \bm{\varphi}_\X\rangle_\fro +& \langle \Rgrad_\Y F\, \X^\top \X, \bm{\varphi}_\Y \rangle_\fro \nonumber  \\
&= \langle \G_\X, \bm{\varphi}_\X \rangle_\fro + \langle \G_\Y, \bm{\varphi}_\Y \rangle_\fro. 
\end{align}

Since \(\ \bm{\varphi}_\X \) and \( \bm{\varphi}_\Y \) are arbitrary,~\eqref{eq:riem-grad-plug} implies
\[
\Rgrad_\X F\,(\Y^\top \Y) = \G_\X,
\qquad
\Rgrad_\Y F\,(\X^\top \X) = \G_\Y .
\]
As \( \X \) and \( \Y \) have full column rank so that \( \X^\top \X \) and \( \Y^\top \Y \) are invertible, it follows
\[
\boxed{
\Rgrad_\X F = \G_\X (\Y^\top \Y)^{-1},
\qquad
\Rgrad_\Y F = \G_\Y (\X^\top \X)^{-1}.
}
\]

\subsection{RefLoRA derivation}
\label{Apdx.reflora}
Similar to ScaledGD, we first illustrate that the metric of RefLoRA is gauge-invariant. Define geometric mean $\tilde{\bfS} := \tilde{\bfP}_\X^{-1/2} \Big( \tilde{\bfP}_\X^{1/2} \tilde{\bfP}_\Y \tilde{\bfP}_\X^{1/2} \Big)^{1/2} \tilde{\bfP}_\X^{-1/2}$ where $\tilde{\bfP}_\X := \tilde{\X}^\top \tilde{\X}$ and $\tilde{\bfP}_\Y := \tilde{\Y}^\top \tilde{\Y}$. 
Using $(\tilde{\X}, \tilde{\Y}) = (\X\Q, \Y\Q^{-\top}),\,\Q \in \mathsf{GL}(r)$, we have
\begin{align*}
    \big[ \Q^{-\top} (\tilde{\bfP}_\X \tilde{\bfP}_\Y)^{1/2} \Q^\top \big]^2
    &= \Q^{-\top} \tilde{\bfP}_\X \tilde{\bfP}_\Y \Q^\top \\
    &= \bfP_\X \bfP_\Y \\
    \Rightarrow (\tilde{\bfP}_\X \tilde{\bfP}_\Y)^{1/2} &= \Q^\top (\bfP_\X \bfP_\Y)^{1/2} \Q^{-\top}
\end{align*}
Then, it follows from~\cite[Lemma 7]{zhang2025reflora} that
\begin{align}
\label{eq.relation-S-Stilde}
    \tilde{\bfS} &= \tilde{\bfP}_\X^{-1} (\tilde{\bfP}_\X \tilde{\bfP}_\Y)^{1/2} \nonumber \\
    &= \tilde{\bfP}_\X^{-1} \Q^\top (\bfP_\X \bfP_\Y)^{1/2} \Q^{-\top} \nonumber \\
    &= \Q^{-1} \bfP_\X^{-1} (\bfP_\X \bfP_\Y)^{1/2} \Q^{-\top} \nonumber \\
    &= \Q^{-1} \bfS \Q^{-\top}.
\end{align}
Then by definition, the metric of RefLoRA satisfies
\begin{align*}
    g_{(\tilde{\X},\tilde{\Y})}& \big((\tilde{\bm{\xi}}_\X,\tilde{\bm{\xi}}_\Y),(\tilde{\bm{\varphi}}_\X,\tilde{\bm{\varphi}}_\Y)\big) \\
    & = \langle \tilde{\bm{\xi}}_\X \tilde{\bfS}, \tilde{\bm{\varphi}}_\X \rangle_\fro + \langle \tilde{\bm{\xi}}_\Y \tilde{\bfS}^{-1}, \tilde{\bm{\varphi}}_\Y \rangle_\fro \\
    &\overset{(a)}{=} \langle \bm{\xi}_\X \Q (\Q^{-1} \bfS \Q^{-\top}), \bm{\varphi}_\X \Q \rangle_\fro + \\
    &\qquad \langle \bm{\xi}_\Y \Q^{-\top} (\Q^\top \bfS^{-1} \Q), \bm{\varphi}_\Y \Q^{-\top} \rangle_\fro \\
    &= \langle \bm{\xi}_\X \bfS, \bm{\varphi}_\X \rangle_\fro + \langle \bm{\xi}_\Y \bfS^{-1}, \bm{\varphi}_\Y \rangle_\fro \\
    &= g_{(\X,\Y)} \big((\bm{\xi}_\X, \bm{\xi}_\Y), (\bm{\varphi}_\X, \bm{\varphi}_\Y)\big)
\end{align*}
where $(a)$ utilizes~\eqref{eq.relation-S-Stilde}. This thus proves the gauge-invariance of RefLoRA's metric. 

Next, we show that the Riemannian metric~\eqref{eq.reflora-metric} leads to RefLoRA updates~\eqref{eq.reflora}. By definition~\eqref{eq:riem-grad-def}, the Riemannian gradient $(\Rgrad_\X F, \Rgrad_\Y F)$ should satisfy
\begin{align*}
\langle \Rgrad_\X F \, \bfS, \bm{\xi}_\X\rangle_\fro +& \langle \Rgrad_\Y F\, \bfS^{-1}, \bm{\varphi}_\Y \rangle_\fro \nonumber  \\
&= \langle \G_\X, \bm{\varphi}_\X \rangle_\fro + \langle \G_\Y, \bm{\varphi}_\Y \rangle_\fro. 
\end{align*}
for any $(\bm{\varphi}_\X, \bm{\varphi}_\Y) \in \bR^{n\times r}\times \bR^{m\times r}$. 
The Riemannian gradient thus satisfies
\[
\Rgrad_\X F\,\bfS = \G_\X,
\qquad
\Rgrad_\Y F\,\bfS^{-1} = \G_\Y.
\]
It then follows that
\[
\boxed{
\Rgrad_\X F = \G_\X \bfS^{-1},
\qquad
\Rgrad_\Y F = \G_\Y \bfS.
}
\]

Next, we prove that RefLoRA's Riemmanian metric~\eqref{eq.reflora-metric} guarantees the update of $(\X, \Y)$ is always on the horizontal space, and has no gauge drift. 

First, the vertical space at $(\X, \Y)$ is $\mathcal{T}_{(\X, \Y)}^\mathrm{vert} = \{ (\X \U, -\Y \U^\top) \mid \U \in \bR^{r \times r} \}$, which can be verified via
\begin{equation*}
    (\X + \epsilon \X \U) (\Y - \epsilon \Y \U^\top)^\top = \X\Y^\top + \mathcal{O} (\epsilon^2). 
\end{equation*}
For update $(\Delta \X, \Delta \Y)$ to be purely horizontal, we must have
\begin{equation*}
    g_{(\X, \Y)} ((\X \U, -\Y \U^\top), (\Delta \X, \Delta \Y)) = 0,~\forall \U \in \bR^{r \times r}.
\end{equation*}
Plugging in the update $\Delta \X = -\eta \Rgrad_\X F = -\eta \G_\X \bfS^{-1}$ and $\Delta \Y = -\eta \Rgrad_\Y F = -\eta \G_\Y \bfS$ gives
\begin{align*}
    &g_{(\X, \Y)} ((\X \U, -\Y \U^\top), (\Delta \X, \Delta \Y)) \\
    &= \langle \X \U \bfS, -\eta \G_\X \bfS^{-1} \rangle_\mathrm{F} + \langle - \Y \U^\top \bfS^{-1}, -\eta \G_\Y \bfS \rangle_\mathrm{F} \\
    &\overset{(a)}{=} \langle \X \U \bfS, -\eta \G_\W \Y \bfS^{-1} \rangle_\mathrm{F} + \langle - \Y \U^\top \bfS^{-1}, -\eta \G_\W^\top \X \bfS \rangle_\mathrm{F} \\
    &= 0
\end{align*}
where $(a)$ comes from the chain rule with $\G_\W := \nabla_\W F(\X, \Y)$. This demonstrates that the gauge component vanishes, thus completing the proof.

\subsection{FastLoRA for batched serving}
\label{Apdx.fastlora}

Recall from~\eqref{eq.fast-lora} that the first term $\mathbf{Z}_l \W_l$ already allows batching. Therefore we only consider the implementation of the second term in~\eqref{eq.fast-lora}. For notational simplicity, the layer index $l$ is omitted. Consider the output of $k$-th adapter
\begin{align*}
    \mathbf{z}^k\big(\W \circ (\X^k\Y^{k\top}) \big)
    &= \mathbf{z}^k \left( \W \circ \sum_{j=1}^r \x^k_j \y_j^{k\top}\right) \\
    & = \sum_{j=1}^r \mathbf{z}^k\big(\W \circ \x_j^k \y_j^{k\top}\big) \\ 
    & = \sum_{j=1}^r \mathbf{z}^k\big(\diag(\x_j^k) \W \circ  \diag(\y_j^k)\big) \\
    & = \sum_{j=1}^r \big( \mathbf{z}^k \circ \x_j^k \big) \W \circ  \diag(\y_j^k).
\end{align*}
This is batching friendly, as one can simply stack all $k$ related vectors together.

Let $\mathbf{Z} =[\mathbf{z}^{1\top}, \ldots, \mathbf{z}^{K\top}]^\top \in\bR^{K\times m}$, and
for each $j\in[r]$ stack the $j$-th columns across users as
\begin{align*}
    \tilde{\X}^{j}=\begin{bmatrix}\x_j^{1\top}\\ \vdots\\ \x_j^{K\top}\end{bmatrix}\in\bR^{K\times m},
    \qquad
    \tilde{\Y}^{j}=\begin{bmatrix}\y_j^{1\top}\\ \vdots\\ \y_j^{K\top}\end{bmatrix}\in\bR^{K\times n}.
\end{align*}
Then the $K$ outputs can be efficiently computed in batched form via
\begin{align*}
    \begin{bmatrix}
        \mathbf{z}^1(\W \circ \X^1\Y^{1\top})\\
        \vdots\\
        \mathbf{z}^K(\W \circ \X^r \Y^{r\top})
    \end{bmatrix}
    &= \sum_{j=1}^r \Big( \big(( \mathbf{Z}\circ \tilde{\X}^{j})\W \big)\circ \tilde{\Y}^{j} \Big). 
\end{align*}

\end{document}